\documentclass[accepted]{uai2021}
%% Choose your variant of English; be consistent
\usepackage[american]{babel}
% \usepackage[british]{babel}

%% Some suggested packages, as needed:
% ============================================== %
% how to get plainnat style in bibtex
% from: https://tex.stackexchange.com/questions/14045/a-biblatex-style-thats-like-plainnat 
\usepackage[
  style=authortitle,
  citestyle=authoryear,
  maxnames=2,
  maxbibnames=99,
  %minbibnames=50,
  %maxbibnames=99,
  %maxcitenames=1,
  %maxnames=1,
  %giveninits=true,
  %uniquename=init,
  natbib=true,
  hyperref=true,
  backend=bibtex]{biblatex}

\DeclareNameAlias{sortname}{last-first}

\setlength{\bibitemsep}{\baselineskip}
\renewbibmacro{in:}{}

%% Used for a double squigly arrow in Apx D
%% ========================================
\usepackage{tikz} % for double squggle arrow
\usetikzlibrary{calc,decorations.pathmorphing,shapes,arrows}

\newcommand\Lsquigarrow[1]{%
\mathrel{%
\begin{tikzpicture}[baseline= {( $ (current bounding box.south) + (0,-0.5ex) $ )}]
  \node[inner sep=.5ex] (a) {$\scriptstyle #1$};
  \path[draw,implies-,double distance between line centers=1.5pt,decorate,
    decoration={zigzag,amplitude=0.7pt,segment length=1.2mm,pre=lineto,
    pre   length=4pt}] 
    (a.south west) -- (a.south east);
\end{tikzpicture}}%
}

%% ========================================

%\usepackage[style=authoryear,sorting=none, backend=biber]{biblatex} % load the package
\addbibresource{bibliography.bib} % add a bib-reference file

% Have hyperlinks include authorname and authoryear
% https://tex.stackexchange.com/questions/1687/hyperlink-name-with-biblatex-authoryear
\DeclareCiteCommand{\cite}
  {\usebibmacro{prenote}}
  {\usebibmacro{citeindex}%
   \printtext[bibhyperref]{\usebibmacro{cite}}}
  {\multicitedelim}
  {\usebibmacro{postnote}}

\DeclareCiteCommand*{\cite}
  {\usebibmacro{prenote}}
  {\usebibmacro{citeindex}%
   \printtext[bibhyperref]{\usebibmacro{citeyear}}}
  {\multicitedelim}
  {\usebibmacro{postnote}}

\DeclareCiteCommand{\parencite}[\mkbibparens]
  {\usebibmacro{prenote}}
  {\usebibmacro{citeindex}%
    \printtext[bibhyperref]{\usebibmacro{cite}}}
  {\multicitedelim}
  {\usebibmacro{postnote}}

\DeclareCiteCommand*{\parencite}[\mkbibparens]
  {\usebibmacro{prenote}}
  {\usebibmacro{citeindex}%
    \printtext[bibhyperref]{\usebibmacro{citeyear}}}
  {\multicitedelim}
  {\usebibmacro{postnote}}

\DeclareCiteCommand{\footcite}[\mkbibfootnote]
  {\usebibmacro{prenote}}
  {\usebibmacro{citeindex}%
  \printtext[bibhyperref]{ \usebibmacro{cite}}}
  {\multicitedelim}
  {\usebibmacro{postnote}}

\DeclareCiteCommand{\footcitetext}[\mkbibfootnotetext]
  {\usebibmacro{prenote}}
  {\usebibmacro{citeindex}%
   \printtext[bibhyperref]{\usebibmacro{cite}}}
  {\multicitedelim}
  {\usebibmacro{postnote}}

\DeclareCiteCommand{\textcite}
  {\boolfalse{cbx:parens}}
  {\usebibmacro{citeindex}%
   \printtext[bibhyperref]{\usebibmacro{textcite}}}
  {\ifbool{cbx:parens}
     {\bibcloseparen\global\boolfalse{cbx:parens}}
     {}%
   \multicitedelim}
  {\usebibmacro{textcite:postnote}}

% ============================================== %

%\providecommand\@newciteauxhandle{\@auxout}
%\def\@restore@auxhandle{\gdef\@newciteauxhandle{\@auxout}}
%\AtBeginDocument{%
%\@ifundefined{newcites}{\global\let\@restore@auxhandle\relax}{}}
% \bibliographystyle{plainnat}
\usepackage{mathtools} % amsmath with fixes and additions
\usepackage{ebproof}
\usepackage{microtype}
\usepackage{graphicx}
\usepackage{booktabs} % for professional tables
\usepackage{hyperref}
% Attempt to make hyperref and algorithmic work together better:

\usepackage{amssymb,amsthm}
\usepackage{latexsym}
\usepackage{bm}
\usepackage{stmaryrd}
\usepackage{cancel}
\usepackage{mathtools}
\usepackage{algpseudocode}
\usepackage{algorithm}
\usepackage{subcaption}
\usepackage[scale=0.925]{inconsolata}

\newcommand{\citetApp}[1]{\citet{#1}}
\newcommand{\citepApp}[1]{\citep{#1}}
\newcommand{\citeApp}[1]{\citet{#1}}
\usepackage{listings}
\usepackage{color}
\usepackage{xcolor}
\definecolor{blue}{rgb}{0,0.2,0.5}
\definecolor{green}{rgb}{0.1,0.35,0.0}
\definecolor{red}{rgb}{0.5,0.0,0.0}
\definecolor{purple}{rgb}{0.4,0,0.6}
\definecolor{cyan}{rgb}{0.0,0.4,0.3}
\definecolor{orange}{rgb}{0.6,0.4,0.0}
\definecolor{gray}{rgb}{0.3,0.3,0.3}

% listings definition for Anglican programming language
\lstdefinestyle{probtorch}%
{language=Python,
 sensitive, %
 basicstyle=\ttfamily,
 keywordstyle=\color{purple},
 emphstyle=\color{red},
 commentstyle=\em\color{gray},
 stringstyle=\color{orange},
 % python keywords and strings
 keywordstyle=[1]\color{purple},
 % probtorch primitives
 keywordstyle=[2]\color{cyan},
 % probtorch special forms
 keywordstyle=[3]\color{blue},
 % probtorch combinators
 keywordstyle=[4]\color{red},
 % probtorch transformations
 keywordstyle=[5]\color{purple},
 % empty (strings and keywords)
 morekeywords=[1]{},
 % probtorch primitives
 morekeywords=[2]{Multinomial, State, Normal, ones, inc_kl, step, zero_grad, backward, next_batch, partial}, %
 % inference special forms
 morekeywords=[3]{factor, sample, observe}, %
 % probtorch combinators
 morekeywords=[4]{propose, compose, extend, resample, condition, batch}, % probtorch transformations
 morekeywords=[5]{}
}[keywords,comments,strings]

\lstnewenvironment{probtorch}[1][]{%
    \let\lstoldname\lstlistingname
    \renewcommand{\lstlistingname}{Program}
    \lstset{
        style=probtorch, 
        basicstyle=\ttfamily\small,
        mathescape=true,
        captionpos=b, 
        #1}
}{%
  \renewcommand{\lstlistingname}{\lstoldname}
}

\newcommand{\pt}{\lstinline[mathescape,keepspaces,style=probtorch]}
\newcommand{\mpt}[1]{\mbox{\pt!#1!}}

\hypersetup{%
  colorlinks,
  linkcolor=red,
  citecolor=blue, % cyan is also nice
  urlcolor=blue   % cyan
}

\newtheorem{definition}{Definition}
\newtheorem{proposition}{Proposition}
\newtheorem{theorem}{Theorem}
\newtheorem{lemma}{Lemma}

\newcommand{\eval}{\,\reflectbox{$\leadsto$}\:}
\newcommand{\ceval}[2]{\eval #2[#1]}

\newcommand{\den}[1]{\llbracket #1 \rrbracket}
\renewcommand{\v}[1]{\ensuremath{\vec{#1}}}
\makeatletter
\def\blfootnote{\xdef\@thefnmark{}\@footnotetext}
\makeatother

\title{Learning Proposals for Probabilistic Programs with Inference Combinators}
% Add authors in order of decreasing contribution
\author[1]{Sam Stites\textsuperscript{*}} % Lead author
\author[1]{Heiko Zimmermann\textsuperscript{*}}
\author[1]{Hao Wu}
\author[1]{Eli Sennesh}
\author[1]{Jan-Willem van de Meent}
% Add affiliations after the authors
%\affil[*]{%
%     Equally contributing authors
%}
\affil[1]{%
    Khoury College of Computer Sciences\\
    Northeastern University\\
    Boston, Massachusetts, USA
}
\affil[1]{%
    \texttt{$\{$stites.s, zimmermann.h, wu.hao10, e.sennesh, j.vandemeent$\}$@northeastern.edu}
}

\begin{document}
\maketitle

\blfootnote{$^*$ Equal contribution}
%\blfootnote{This manuscript has been accepted to UAI 2021}
%
%\blfootnote{code available at \href{https://github.com/probtorch/combinators}{\texttt{github.com/probtorch/combinators}}}

\begin{abstract}%
We develop operators for construction of proposals in probabilistic programs, which we refer to as inference combinators. 
%A combinator is a function that accepts programs as inputs and returns a new program. 
Inference combinators define a grammar over importance samplers that compose primitive operations such as application of a transition kernel and importance resampling. Proposals in these samplers can be parameterized using neural networks, which in turn can be trained by optimizing variational objectives. The result is a framework for user-programmable variational methods that are correct by construction and can be tailored to specific models. 
%We demonstrate the flexibility of this framework in applications to advanced variational methods based on amortized Gibbs sampling and annealing.
We demonstrate the flexibility of this framework by implementing advanced variational methods based on amortized Gibbs sampling and annealing.
%for two non-trivial applications.
\end{abstract}

\begin{refsection}
\vspace{-0.5\baselineskip}
\section{Introduction}
\vspace{-0.25\baselineskip}
One of the major ongoing developments in probabilistic programming is the integration of deep learning with approaches for inference. Libraries such as Edward \citep{tran2016edward}, Pyro \citep{bingham2018pyro}, and Probabilistic Torch \citep{siddharth2017learning}, extend deep learning frameworks with functionality for the design and training of deep probabilistic models. Inference in these models is typically performed with amortized variational methods, which learn an approximation to the posterior distribution of the model. %turning an inference problem  unique to each data point into an optimization problem

Amortized variational inference is widely used in the training of variational autoencoders (VAEs). While standard VAEs employ an unstructured prior over latent variables, such as a spherical Gaussian, in probabilistic programming, we are often interested in training a neural approximation to a simulation-based model \citep{baydin-2019-etalumis}, or more generally to perform inference in models that employ programmatic priors as inductive biases. This provides a path to incorporating domain knowledge into methods for learning structured representations in a range of applications, such as identifying objects in an image or a video \citep{greff2019multi}, representing users and items in reviews \citep{esmaeili2019structured}, and characterizing the properties of small molecules \citep{kusner2017grammar}. 

While amortized variational methods are very general, they often remain difficult to apply to structured domains, %particularly when latent variables are high-dimensional and correlated.
where gradient estimates based on a small number of (reparameterized) samples are often not sufficient. A large body of work improves upon standard methods by combining variational inference with more sophisticated sampling schemes based on Markov chain Monte Carlo (MCMC) and importance sampling (e.g.~\citet{naesseth2018variational,le2019revisiting,wu2020amortized}; see Appendix~\ref{apx:related} for a comprehensive discussion). However to date, few of these methods have been implemented in probabilistic programming systems. One reason for this is that many methods rely on a degree of knowledge about the structure of the underlying model, which makes it difficult to develop generic implementations. 

In this paper, we address this difficulty by considering a design that goes one step beyond that of traditional probabilistic programming systems. Instead of providing a language for the definition of models in the form of programs, we introduce a language for the definition of sampling strategies that are applicable to programs. 
This is an instance of a general idea that is sometimes referred to as \emph{inference programming} \citep{mansinghka2014venture}. Instantiations of this idea include inference implementations as monad transformers \citep{Scibior:2018:FPM:3243631.3236778}, inference implementations using low-level primitives for integration \citep{obermeyer2019functional}, and programming interfaces based on primitive operations for simulation, generation, and updating of program traces \citep{cusumano-towner2019gen}.

Here we develop a new approach to inference programming based on constructs that we refer to as \emph{inference combinators}, which act on probabilistic programs that perform importance sampling during evaluation. A combinator accepts one or more programs as inputs and returns a new program. Each combinator denotes an elementary operation in importance sampling, such as associating a proposal with a target, composition of a program and a transition kernel, and importance resampling. 
The result is a set of constructs that can be composed to define user-programmable importance samplers for probabilistic programs that are valid by construction, in the sense that they generate samples that are properly weighted \citep{liu2008monte, naesseth2015nested} under the associated target density of the program. These samplers in turn form the basis for nested variational methods \citep{zimmermann2021nested} that minimize a KL divergence to learn neural proposals.

We summarize the contributions of this paper as follows: 
\begin{itemize}
    \item We develop a language for inference programming in probabilistic programs, in which combinators define a grammar over properly-weighted importance samplers that can be tailored to specific models.
    \item We formalize semantics for inference and prove that samplers in our language are properly weighted for the density of a program.
    \item We demonstrate how variational methods can be used to learn proposals for these importance samplers, and thereby define a language for user-programmable stochastic variational methods.
    \item We provide an implementation of inference combinators for the Probabilistic Torch library\footnote{Code is available at \href{https://github.com/probtorch/combinators}{\texttt{github.com/probtorch/combinators}}.}. We evaluate this implementation by using it to define existing state-of-the-art methods for probabilistic programs that improve over standard methods.
\end{itemize}

\vspace{-0.5\baselineskip}
\section{Preliminaries}
\label{sec:preliminaries}
\vspace{-0.25\baselineskip}

The combinators in this paper define a language for samplers that operate on probabilistic programs. We will refer to this language as the \emph{inference language}. The probabilistic programs that inference operates upon are themselves defined in a \emph{modeling language}. %which may vary independently of the inference language. 
Both languages require semantics. For the inference language, semantics formally define the sampling strategy, whereas the semantics of the modeling language define the target density for the inference problem.

We deliberately opt not to define semantics for a specific modeling language. Our implementation is based on Probabilistic Torch, but combinators are applicable to a broad class of modeling languages that can incorporate control flow, recursion, and higher-order functions. Our main technical requirement is that all sampled and observed variables must have tractable conditional densities. This requirement is satisfied in many existing languages, including Church \citep{goodman2008church}, Anglican \citep{wood2014new}, WebPPL \citep{goodman2014design}, Turing \citep{ge2018turing}, Gen \citep{cusumano-towner2019gen}, Pyro \citep{bingham2018pyro}, and Edward2 \citep{tran2016edward}\footnote{See Appendix~\ref{apx:related} for an  extensive discussion of related work.}. 

To define semantics for the inference language, we assume the existence of semantics for the modeling language. Concretely, we postulate that there exist \emph{denotational measure semantics} for the density of a program. We will also postulate that programs have corresponding \emph{operational sampler semantics} equivalent to likelihood weighting. In the next section, we will use these axiomatic semantics to define operational semantics for inference combinators such that samplers defined by these combinators are valid by construction, in the sense that evaluation yields \emph{properly weighted} samples for the density denoted by the program. Below, we discuss the preliminaries that we need for this exposition.

%We expect that, with small modifications, This also includes languages in which densities have a statically typed support, such as Stan \citep{} or Infer.NET \citep{},  as a special case.

\vspace{-0.5\baselineskip}
\subsection{Likelihood Weighting}
\vspace{-0.5\baselineskip}
\label{subsec:likelihood_weighting_ppls}

Probabilistic programs define a distribution over variables in a programmatic manner. As a simple running example, we consider the following program in Probabilistic Torch
\begin{probtorch}[caption=A deep generative mixture model, label=prog:deepmix]
$\eta^v$, $\eta^x$ = ... # (initialize generator networks)
def f(s, x):
  # select mixture component 
  z = s.sample(Multinomial(1, 0.2*ones(5)),"z")
  # sample image embedding
  v = s.sample(Normal($\eta^v_\mu$(z), $\eta^v_\sigma$(z)),"v")
  # condition on input image
  s.observe(Normal($\eta^x$(v), 1), x,"x")
  return s, x, v, z
\end{probtorch}
This program corresponds to a density $p(x,v,z)$, in which $z$ and $v$ are unobserved variables and $x$ is an observed variable. We first define a multinomial prior $p(z)$, and then define conditional distributions $p(v \mid z)$ and $p(x \mid v)$ using a set of generator networks $\eta$. The goal of inference is to reason about the posterior $p(v,z \mid x)$. We refer to the object \pt{s} as the \emph{inference state}. This data structure stores variables that need to be tracked as side effects of the computation, which we discuss in more detail below.

In this paper, the base case for evaluation performs likelihood weighting. This is a form of importance sampling in which unobserved variables are sampled from the prior, and are assigned an (unnormalized) importance weight according to the likelihood. In the example above, we would typically execute the program in a vectorized manner; we would input a tensor of $B$ samples $x^b \sim \hat{p}(x)$ from an empirical distribution, generate tensors of $B \times L$ samples $v^{b,l}, z^{b,l} \sim p(v, z)$ from the prior, and compute 
\begin{align*}
    %\label{eq:tensorized-weights} %Not referenced
    w^{b,l} = \frac{p(x^b, v^{b,l}, z^{b,l})}
                  {p(v^{b,l}, z^{b,l})}
            = p(x^b \mid v^{b,l}, z^{b,l}).
\end{align*}
These weights serve to compute a self-normalized approximation of a posterior expectation of some function $h(x,v,z)$ 
\begin{align}
  \begin{split}
  \label{eq:self-normalized}
  &
  \mathbb{E}_{\hat{p}(x) \: p(v,z \mid x)} \left[ 
    h(x,v,z)
  \right]
  \simeq
%   \\
%   &\qquad
  \frac{1}{B}
  \sum_{b,l}
  \frac{w^{b,l}}
      {\sum_{l'} w^{b,l'}}
  h(x^b, v^{b,l}, z^{b,l}).
  \end{split}
\end{align}
Self-normalized estimators are consistent, but not unbiased. However, it is the case that each weight $w^{b,l}$ is a unbiased estimate of the marginal likelihood $p(x^b)$. %This is also the main requirement for (strict) proper weighting, which we discuss in Section~\ref{}.
%\vspace{-0.5em}

\subsection{Traced Evaluation}
\vspace{-0.5\baselineskip}
\label{sec:traced-eval}

Probabilistic programs can equivalently denote densities and samplers. In the context of specific languages, these two views can be formalized in terms of denotational \emph{measure semantics} and operational \emph{sampler semantics} \citep{scibior2017denotational}. To define operational semantics for inference combinators, we begin by postulating sampler semantics for programs that perform likelihood weighting. We assume these semantics define an evaluation relation 
\begin{align*}
    c,\tau,\rho,w \eval \mpt{f($c^\prime$)}.
\end{align*}
In this  relation, \mpt{f} is a program, $c'$ are its input arguments, and $c$ are its return values. Evaluation additionally outputs a weight $w$, a density map $\rho$, and a trace $\tau$.\footnote{In Probabilistic Torch, we return $\tau$, $\rho$, and $w$ by storing them in the inference state \pt{s} during evaluation.}

A trace stores values for all sampled variables in an execution. We mathematically represent a trace as a mapping, 
\begin{align*}
    \tau = [\alpha_1 \mapsto c_1, \dots, \alpha_n \mapsto c_n].
\end{align*}
Here each $\alpha_i$ is an address for a random variable and $c_i$ is its corresponding value. Addresses uniquely identify a random variable. In Probabilistic Torch, as well as in languages like Pyro and Gen, each \pt{sample} or \pt{observe} call accepts an address as the identifier. In the example above, evaluation returns a trace $[\mpt{"v"} \mapsto v, \mpt{"z"} \mapsto z]$, where $z,v \sim p(z,v)$. 

The density map stores the value of the conditional density for all variables in the program, 
\begin{align*}
    \rho &= [\alpha_1 \mapsto r_1, \dots, \alpha_n \mapsto r_n], 
    &
    r_i \in [0, \infty).
\end{align*}
Whereas the trace only stores values for unobserved variables, the density map stores probability densities for both observed and unobserved variables. In the example above, evaluation of a program would output a map for the variables with addresses \pt{"x"}, \pt{"v"} and \pt{"z"}, 
\begin{align*}
  [\mpt{"x"} \mapsto p(x \mid v), \mpt{"v"} \mapsto p(v | z), \mpt{"z"} \mapsto p(z)],
\end{align*}
in which conditional densities are computed using the traced values $v = \tau(\mpt{"v"})$, $z=\tau(\mpt{"z"})$, and the observed value $x$, which is an input to the program.

We define the weight in the evaluation as the joint probability of all observed variables (i.e.~the likelihood). Since the density map $\rho$ contains entries for all variables, and the trace only contains unobserved variables, the likelihood is
\begin{align*}
    w 
    &=
    \!\!\!\!\!\!
    \prod_{\alpha \in \text{dom}(\rho) \setminus \text{dom}(\tau)} 
    \!\!\!\!\!\! 
    \rho(\alpha).
\end{align*}
% A probabilistic program becomes a deterministic one if we specify a value for each unobserved random variable. For any trace $\tau$ that can be generated by evaluating \pt{f($c'$)}, we can define a substitution \pt{f($c'$)}$[\tau]$ of the sampled values into the program, which defines a deterministic evaluation for the return value $c$ and the density map $\rho$
% \begin{align}
%     c,\rho \leftarrow \mpt{f($c^\prime$)}[\tau].
% \end{align}
%This deterministic evaluation is only valid when the trace $\tau$ fully specifies the values for all unobserved variables in an evaluation. In the case of a trace $\tau'$ that specifies a partial substitution, we define 
To perform importance sampling, we will use a trace from one program as a proposal for another program. For this purpose, we define an evaluation under substitution
\begin{align*}
    c, \tau, \rho, w \ceval{\tau'}{\mpt{f($c^\prime$)}}.
\end{align*}
In this relation, values in  $\tau'$ are substituted for values of unobserved random variables in \pt{f}. This is to say that evaluation \emph{reuses} $\tau(\alpha)=\tau'(\alpha)$ when a value exists at address $\alpha$, and samples from the program prior when it is not. This is a common operation in probabilistic program inference, which also forms the basis for traced Metropolis-Hastings methods \citep{wingate2011lightweight}.

When performing evaluation under substitution, the set of reused variables is the intersection $\text{dom}(\tau) \cap \text{dom}(\tau')$. Conversely, the newly sampled variables are $\text{dom}(\tau) \setminus \text{dom}(\tau')$. 
We define the weight of an evaluation under substitution as the joint probability of all observed and reused variables
\begin{align}
    \label{eq:conditioned-weight}
    w &= \prod_{\alpha \in \text{dom}(\rho) \setminus \big( \text{dom}(\tau) \setminus \text{dom}(\tau') \big)} \rho(\alpha).
\end{align}

\vspace{-0.5\baselineskip}
\subsection{Denotational Semantics}
\vspace{-0.5\baselineskip}

To reason about the validity of inference approaches, we need to formalize what density a program denotes. For this purpose, we assume the existence of denotational semantics that define a prior and unnormalized density 
\begin{align*}
    \den{\mpt{f}(c')}_\gamma (\tau) = \gamma_{f}(\tau ; c'), 
    &&
    \den{\mpt{f}(c')}_p (\tau) = p_{f}(\tau ; c')
    .
\end{align*}
Given an unnormalized density, the goal of inference is to approximate the corresponding normalized density
\begin{align*}
    \pi_f(\tau ; c') 
    = 
    \frac{
        \gamma_f(\tau ; c')
    }{
        Z_f (c')
    }
    , &&
    Z_f (c') = \int d\tau \: \gamma_f(\tau ; c').
\end{align*}
The reason that we specify a program as a density over traces, rather than as a density over specific variables, is that programs in higher-order languages with control flow and/or recursion may not always instantiate the same set of variables. A program could, for example, perform a random search that instantiates different variables in every evaluation \citep{vandemeent2016black}. We refer to \citet{vandemeent2018introduction} for a more pedagogical discussion of this point. %The experiments in this paper  do not consider such programs with stochastic control flow, but this use case is fully supported.

Formal specification of denotational semantics gives rise to substantial technical questions\footnote{Traditional measure theory based on Borel sets does not support higher-order functions. Recent work addresses this problem using a synthetic measure theory for quasi-Borel spaces, which in turn serves to formalize denotational semantics for a simply-typed lambda calculus \citep{scibior2017denotational}. In practice, these technical issues do not give rise to problems, since existing languages cannot construct objects that give rise to issues with measurability.}, but in practice the unnormalized density is computable for programs in most probabilistic languages. In the languages that we consider here, in which conditional densities for all random variables are tractable, the unnormalized density is simply the joint probability of all variables in the program. Concretely, we postulate the following relationship between the sampler and the measure semantics of a program
\[
\begin{prooftree}
  \hypo{c,\tau,\rho,w \eval \mpt{f($c^\prime$)}}
  \infer1{\displaystyle
          \den{\mpt{f($c^\prime$)}}_\gamma(\tau) 
          =
          \!\!\!\! \prod_{\alpha \in \text{dom}(\rho)} \!\!\!
          \rho(\alpha) 
          \quad
          \den{\mpt{f($c^\prime$)}}_p(\tau)
          =\!\!\! 
          \!\!\!\! \prod_{\alpha \in \text{dom}(\tau)} \!\!\!
          \rho(\alpha)}
\end{prooftree}
\]
In this notation, the top of the rule lists conditions, and the bottom states their implications. This rule states that, for any trace $\tau$ and density map $\rho$ that can be generated by evaluating \pt{f($c'$)}, the unnormalized density $\den{\mpt{f($c^\prime$)}}_\gamma(\tau)$ evaluates to the product of the conditional probabilities in $\rho$, whereas the prior density $\den{\mpt{f($c^\prime$)}}_p(\tau)$ corresponds to the product for all unobserved variables. This implies that the trace is distributed according to the prior, and that the weight is the ratio between the unnormalized density and the prior
\begin{align*}
w &= \gamma_f(\tau; c') / p_f(\tau ; c'),
&
\tau \sim p_f(\tau ; c').
\end{align*}
The above rule implicitly defines the support $\Omega_f$ of the density, in that it only defines the density for traces $\tau$ that can be generated by evaluating \pt{f($c'$)}. In the languages that we are interested in here, $\Omega_f$ may not be statically determinable through program analysis, but our exposition does not require its explicit characterization.

Finally, we define the density that a program denotes under substitution. We will define the unnormalized density to be invariant under substitution, which is to say that
\begin{align*}
    \den{\mpt{f}(c')[\tau']}_\gamma (\tau) &= \gamma_{f[\tau']}(\tau ; c') = \gamma_f(\tau ; c').
\end{align*}
For the prior under substitution we use the notation 
\begin{align*}
    \den{\mpt{f}(c')[\tau']}_p (\tau) = p_{f[\tau']}(\tau ; c'),
\end{align*}
to denote a density over newly sampled variables, rather than over all unobserved variables, 
\[
\begin{prooftree}
  \hypo{
    c,\tau,\rho,w 
    \ceval{\tau'}{\mpt{f($c^\prime$)}}
  }
  \infer1{\displaystyle
          \den{\mpt{f($c^\prime$)}[\tau']}_p(\tau)
          =
          %\!\!\!\!\!\!\!\!\!\!\!\!\!\!\!\!\!\!\!\!\!\! 
          \prod_{\alpha \in \text{dom}(\tau) \setminus \text{dom}(\tau')} 
          %\!\!\!\!\!\!\!\!\!\!\!\!
          \rho(\alpha)}
\end{prooftree}
\]
This construction ensures that $w = \gamma_{f[\tau']}(\tau ; c') / p_{f[\tau']}(\tau ; c')$, as in the case where no substitution is performed. 

As previously, the support $\Omega_{f[\tau']}$ is defined implicitly as the set of traces that can be generated via evaluation under substitution, which is a subset of the original support
\begin{align*}
    \Omega_{f[\tau']}
    =
    \left\{\tau \in \Omega_{f}: \tau(\alpha)\!=\!\tau'(\alpha) \:\forall\: \alpha \in \text{dom}(\tau) \cap \text{dom}(\tau') \right\}.
\end{align*}

% \begin{align}
%     \label{eq:program-density}
%     \gamma_{f}(\tau ; c') 
%     &\equiv \prod_{\alpha \in \text{dom}(\rho)} \rho(\alpha),
% \end{align}
% Each term in this product corresponds to an unobserved or observed random variable. The trace $\tau$ only contains unobserved variables, so the set of observed variables is $\text{dom}(\rho) \setminus \text{dom}(\tau)$. This means that we can decompose the terms in the density into a prior over unobserved variables and an importance weight, which correponds to the likelihood of observed variables,
% \begin{align}
%     w 
%     &\equiv 
%     \!\!\!\!\!\!
%     \prod_{\alpha \in \text{dom}(\rho) \setminus \text{dom}(\tau)} 
%     \!\!\!\!\!\! 
%     \rho(\alpha), 
%     &
%     p_{f}(\tau ; c') 
%     &\equiv
%     \!\!\!
%     \prod_{\alpha \in \text{dom}(\tau)} 
%     \!\!\!
%     \rho(\alpha).
% \end{align}
% This construction defines base semantics for likelihood weighting in which evaluation samples $\tau \sim p_{f}(\tau ; c')$ and assigns the weight $w = \gamma_{f}(\tau ; c')/p_{f}(\tau ; c')$.

%Note that if $\tau'$ is generated by evaluating \pt{f}, the outputs $\tau=\tau'$ will be identical to the conditioning trace, as will the density map $\rho=\rho'$, since evaluation is fully deterministic when values are reused for all unobserved variables. 

%\vspace{-1em}
%\vspace{-0.5\baselineskip}
\section{Inference Combinators}

\begin{figure}[!t]
    \centering
    \includegraphics[width=\linewidth]{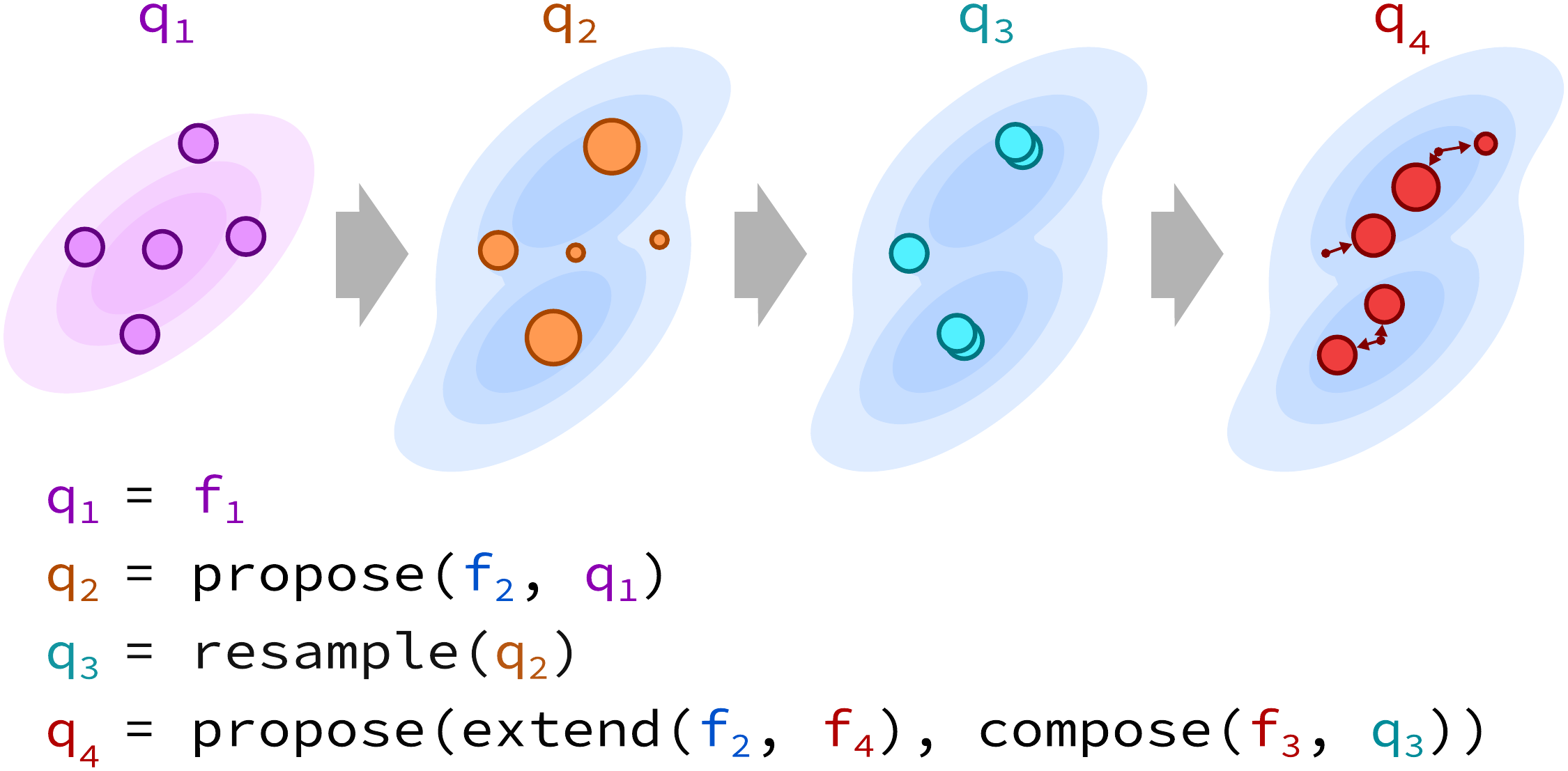}
    \caption{Inference combinators denote fundamental operations in importance sampling, which can be composed to define a sampling strategy. See main text for details. }
    \label{fig:is_ops}
\end{figure}

\vspace{-0.25\baselineskip}
\label{sec:infcomb}
We develop a domain-specific language (DSL) for importance samplers in terms of four combinators, with a grammar that defines their possible compositions
\begin{align*}
    \mpt{f} ::= &~ \text{A primitive program} \\ 
    \mpt{p} ::= &~ \mpt{f} \mid \mpt{extend(p, f)} \\
    \mpt{q} ::= &~ \mpt{p} \mid \mpt{resample(q)} \mid \mpt{compose(q$^\prime$, q)} \mid  \mpt{propose(p, q)}
\end{align*}
We distinguish between three expression types. We use \pt{f} to refer to a \emph{primitive program} in the modeling language, with sampler semantics that perform likelihood weighting as described in the preceding section. We use \pt{p} to refer to a \emph{target program}, which is either a primitive program, or a composition of primitive programs that defines a density on an extended space. Finally, we use \pt{q} to refer to an \emph{inference program} that composes inference combinators.

% [TODO PARAGRAPH HERE THAT DESCRIBES FIGURE 1 AND BRIEFLY EXPLAINS WHAT EACH COMBINATOR DOES]

Figure~\ref{fig:is_ops} shows a simple program that illustrates the use of each combinator, for which we will formally specify semantics below. The first step defines a program \pt{q$_1$} that generates samples from a primitive program \pt{f$_1$}, which are equally weighted (i.e.~\pt{f$_1$} does not have observed variables). The second step defines a program $\mpt{q}_2 = \mpt{propose(f$_2$, q$_1$)}$ that uses samples from \pt{q$_1$} as proposals for the primitive program \pt{f$_2$}, which results in weights that are proportional to the ratio of densities. The third step $\mpt{q}_3 = \mpt{resample(q$_2$)}$ performs importance resampling, which replicates samples with probability proportional to their weights. In the fourth step, we use $\mpt{compose(f$_3$, q$_3$)}$ to apply a program \pt{f$_3$} that denotes a transition kernel, and use the resulting samples as proposals for a program $\mpt{extend(f$_2$, f$_4$)}$, which defines a density on an extended space. To illustrate this extended space construction, we will consider a more representative example of an inference program.

%We use \, where the program \pt{q} serves a a proposal, targeting the density associated with a target program \pt{p}. 
%\pt{resample}(\pt{q}) denotes importance resampling of weighted program evaluations of \pt{q}. 
%Finally, we use \pt{compose} and \pt{extend} to denote different forms of program composition. \pt{compose}(\pt{q$^\prime$}, \pt{q}) denotes the composition of two inference programs, \pt{q$^\prime$} and \pt{q}, while \pt{extend}(\pt{p}, \pt{f}) denotes the introduction of auxiliary variables by a primitive program \pt{f}, which do not contribute to later computations but are required to compute a valid importance weight in a subsequent \pt{propose} expression. 

\vspace{-0.5\baselineskip}
\subsection{Example: Amortized Gibbs Sampling}
\vspace{-0.5\baselineskip}

\begin{figure*}[!t]
\begin{minipage}{0.55\linewidth}
% TODO: explain this is schematic code 
\begin{probtorch}
def pop_gibbs(target, proposal, kernels, sweeps):
  q = propose(partial(target, suffix=0), 
              partial(proposal, suffix=0))
  for s in range(sweeps):
    for k in kernels:
      q = propose(
          extend(partial(target, suffix=s+1),
                  partial(k, suffix=s)), 
          compose(partial(k, suffix=s+1), 
                  resample(q, dim=0)))
    return q
\end{probtorch}
\end{minipage}
\begin{minipage}{0.45\linewidth}
\begin{probtorch}
data, opt = ... 
target, proposal, kernels = ... 
q = pop_gibbs(target, proposal, kernels)
for _ in range(10000):
  s0 = State(sample_size=[40, 20], 
             objective=inc_kl)
  s, *outputs = q(s0, data.next_batch(20))
  s.loss.backward()
  opt.step()
  opt.zero_grad()
\end{probtorch}
\end{minipage}
\vspace{-0.5\baselineskip}
\caption{\label{fig:apg-code}
A combinator-based implementation of amortized population Gibbs sampling \citep{wu2020amortized} in python, along with a procedure for training the target model, initial proposal, and each of the kernels that approximate Gibbs conditionals.
In this example \pt{partial} is the partial application function from Python 3's \pt{functools} library.
}
\end{figure*}

\label{sec:example-apg}
%Before we define operational semantics for each combinator, we illustrate their use in a concrete example. 
Figure~\ref{fig:apg-code} shows a program that makes use of a combinator DSL that is embedded in Python. This code implements amortized population Gibbs (APG) samplers~\citep{wu2020amortized}, a recently developed method that combines stochastic variational inference with sequential Monte Carlo (SMC) samplers to learn a set of conditional proposals that approximate Gibbs updates. 
This is an example where combinators are able to concisely express an algorithm that would be non-trivial to implement from scratch, even for experts.

We briefly discuss each operation in this algorithm. The function \pt{pop_gibbs} (Figure~\ref{fig:apg-code}, left) accepts programs that denote a \texttt{target} density, an initial \pt{proposal}, and a set of \pt{kernels}. It returns a program \pt{q} that performs APG sampling. This program is evaluated in the right column to generate samples, compute an objective, and perform gradient descent to train the initial proposal and the kernels.

%APG samplers are typically used in models that combine global variables (e.g.~representations of objects in video) and local variables (e.g.~positions of objects). Reasoning about global variables implicitly requires reasoning about local variables and vice versa. To address this difficulty, the sampler performs a series of ``sweeps'', where each sweep updates local variables given global variables and vice versa.

APG samplers perform a series of \emph{sweeps}, where each sweep iterates over proposals that approximate Gibbs conditionals. To understand the weight computation in this sampler, we consider Program~\ref{prog:deepmix} as a (non-representative) example. Here updates could take the form of block proposals $q(v \mid x, z)$ and $q(z \mid x, v)$. At each step, we construct an outgoing sample by either updating $v$ or $z$ given an incoming sample $(w,v,z)$. Here we consider an update of the variable $v$,
\begin{align*}
    %\label{eq:apg-update} %not referenced
    w' &= \frac{p(x,v',z) \: q(v \mid x, z)}
              {q(v' \mid x,z) \: p(x,v,z)} \: w,
    &
    v' &\sim q(\cdot \mid x, z).
\end{align*}
This weight update makes use of an auxiliary variable construction, in which we compute the ratio between a target density and proposal on an extended space
\begin{align*}
    \tilde{p}(x,v',z,v) &= p(x,v',z) \: q(v \mid x,z), \\
    \tilde{q}(v',z,v \mid x) &= q(v' \mid x,z) \: p(x,v,z).
\end{align*}
In the inner loop for each sweep, we use the \pt{extend} combinator to define the extended target $\tilde{p}(x,v',z,v)$. To generate the proposal, we use the \pt{compose} combinator to combine the incoming sampler with the kernel, which defines the extended proposal $\tilde{q}(v',z,v \mid x)$. Since the marginal $\tilde{p}(x,v',z) = p(x,v',z)$, the outgoing sample $(w',v',z)$ is properly weighted for the target density as long as the incoming sample is properly weighted.

\vspace{-0.5\baselineskip}
\subsection{Programs as Proposals}
\vspace{-0.5\baselineskip}

When we use a program as a proposal for another program, it may not be the case that the proposal and target instantiate the same set of variables. Here two edge cases can arise: (1) the proposal contains \emph{superfluous} variables that are not referenced in the target, or (2) there are variables in the target that are \emph{missing} from the proposal.

To account for both cases, we define an \emph{implicit} auxiliary variable construction. To illustrate this construction, we consider a proposal that defines a density $q(u,z \mid x)$
\begin{probtorch}%[caption=A inference program for Program~\ref{prog:deepmix}, label=prog:mixproposal]

$\lambda^u$, $\lambda^z$ = ... # (initialize encoder networks)
def g(s, x):
  u = s.sample(Normal($\lambda^u_\mu$(x), $\lambda^u_\sigma$(x)),"u")
  z = s.sample(Multinomial(1, $\lambda^z$(u), "z")
  return s, x, u, z
\end{probtorch}
Suppose that we use \pt{propose(f, g)} to associate this proposal with the target in  Program~\ref{prog:deepmix}, which defines a density $p(x,v,z)$. We then have a superfluous variable with address \pt{"u"}, as well as a missing variable with address \pt{"v"}. To deal with this problem, we implicitly extend the target using the conditional density $q(u \mid x)$ from the inference program, and conversely extend the proposal using the conditional density $p(v \mid z)$ from the target program, 
\begin{align*}
    \tilde{p}(x,v,z,u) &= p(x,v,z) \: q(u \mid x), \\
    \tilde{q}(v,z,u \mid x) &= q(u,z \mid x) \: p(v \mid z).
\end{align*}
Since, by construction, the conditional densities for $u$ and $v$ are identical in the target and proposal, these terms cancel when computing the importance weight
\begin{align*}
    w = \frac{\tilde{p}(x,v,z,u)}
             {\tilde{q}(v,z,u \mid x)}
     = \frac{p(x,v,z) \: q(u \mid x)}
            {q(u,z \mid x) \: p(v \mid z)}
     = \frac{p(x \mid v) \: p(z)}
            {q(z \mid u)}
    .
\end{align*}
This motivates a general importance sampling computation for primitive programs of the following form
\begin{align*}
    c_1, \tau_1, \rho_1, w_1 &\eval \mpt{g($c_0$)}, & 
    c_2, \tau_2, \rho_2, w_2 &\ceval{\tau_1}{\mpt{f($c_0$)}}.
\end{align*}
We generate $\tau_1$ from the proposal, and then use substitution to generate a trace $\tau_2$ that reuses as many variables from $\tau_1$ as possible, and samples any remaining variables from the prior. Here the set of missing variables is $\text{dom}(\tau_2) \setminus \text{dom}(\tau_1)$ and the set of superfluous variables is $\text{dom}(\tau_1) \setminus \text{dom}(\tau_2)$. 
Hence, we can define the importance weight
\begin{align}
    \label{eq:propose-weight}
    \begin{split}
    w &= \frac{\gamma_f(\tau_2 ; c_0) \: p_{g[\tau_2]}(\tau_1 ; c_0)}
              {\gamma_g(\tau_1 ; c_0) \: p_{f[\tau_1]}(\tau_2 ; c_0)} 
         \: w_1,\\
      &= \frac{w_2}
              {\prod_{\alpha \in \text{dom}(\rho_1) 
                      \setminus (\text{dom}(\tau_1) 
                                 \setminus \text{dom}(\tau_2))} 
              \rho_1(\alpha)}
         \: w_1.
    \end{split}
 \end{align}
 In the numerator, we take the product over all terms in the target density, exclusive of missing variables. Note that this expression is identical to $w_2$ (Equation~\ref{eq:conditioned-weight}).
 In the denominator, we take the product over all terms in the proposal density $\rho_{1}$,
 % maybe just reiterate             vvvvvvvvvvvvvvvvvvvvvvvvvvvvvvvvvvvvvvvv and ^^^^^^^^ because of how dense the denominator is
exclusive of superfluous variables $\text{dom}(\tau_1) \setminus \text{dom}(\tau_2)$.
%As we will discuss below, it turns out that this construction generalizes to non-primitive programs. 

\vspace{-0.5\baselineskip}
\subsection{Properly Weighted Programs}
\vspace{-0.5\baselineskip}

The expression in Equation~\ref{eq:propose-weight} is not just valid for a composition \pt{propose(f, g)} in which  \pt{f} and \pt{g} are primitive programs. As we will show, this expression also applies to any composition \pt{propose(p, q)}, in which \pt{p} is a target program, and \pt{q} is itself an inference program. While this distinction may seem subtle, it is fundamental; it allows us to use any sampler \pt{q} as a proposal, which yields a new sampler that can once again be used as a proposal.

To demonstrate the validity of combinator composition, we make use of the framework of nested importance samplers and proper weighting \citep{liu2008monte, naesseth2015nested}, which we extend to evaluation of probabilistic programs.

\begin{definition}[Properly Weighted Evaluation]
    \label{def:proper_weighting}
    \itshape
    Let \pt{q($c^\prime$)} denote an unnormalized density $\den{\mpt{q($c^\prime$)}} = \gamma_{q}(\cdot \,; c')$. Let $\pi_{q}(\cdot ; c')$ denote the corresponding probability density and let $Z_{q}(c')$ denote the normalizing constant such that 
    \begin{align*}
        \pi_{q}(\tau ; c') &:= \frac{\gamma_{q}(\tau ; c')}{Z_{q}(c')},
        &
        Z_{q}(c') &:= \int \! d\tau \: \gamma_{q}(\tau ; c').
    \end{align*}
    We refer to the evaluation of $c,\tau,\rho,w \eval  \mpt{q($c^\prime$)}$ as properly weighted for its unnormalized density $\gamma_{q}(\cdot \,; c')$ when, for all measurable functions $h$
    \begin{align*}
        \mathbb{E}_{\mpt{q($c^\prime$)}}[w \: h(\tau)] 
        &= \mathcal{C}_q(c') \int \! d\tau \: \gamma_{q}(\tau ; c') \: h(\tau), \\
        &= \mathcal{C}_q(c') \: Z_{q}(c') \: \mathbb{E}_{\pi_q(\cdot ; c')}[h(\tau)]
        ,
    \end{align*}
    for some constant $\mathcal{C}_q(c')>0$.
    When $\mathcal{C}_q(c')=1$, we refer to the evaluation as strictly properly weighted.    
\end{definition} 
A properly weighted evaluation can be used to define self-normalized estimators of the form in Equation~\ref{eq:self-normalized}. Such estimators are strongly consistent, which is to say that they converge almost surely in the limit of infinite samples $L$
\begin{align*}
    \frac{
        \frac{1}{L}
        \sum_{l=1}^L w^l \: h(\tau^l)
    }{
        \frac{1}{L}
        \sum_{l=1}^L w^l
    }
    \,\stackrel{a.s.}{\longrightarrow}\,
    % \frac{
    %     \mathcal{C}_q(c') \: Z_q(c') \:
    %     \mathbb{E}_{\pi_q(\cdot ; c')}\left[h(z) \right]
    % }{
    %     \mathcal{C}_q(c') \: Z_q(c')
    % }
    % =
    \mathbb{E}_{\pi_{q}(\cdot ; c')}\left[h(\tau) \right]
    .
\end{align*}
\begin{proposition}[Proper Weighting of Primitive Programs]\label{prop:primitive_proper_weighting} Evaluation of a primitive program \pt{f($c'$)} is strictly properly weighted for its unnormalized density $\den{\mpt{f($c^\prime$)}}_\gamma$. 
\end{proposition}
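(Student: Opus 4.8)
The plan is to evaluate both sides of the proper-weighting identity in Definition~\ref{def:proper_weighting} and verify that they coincide with constant $\mathcal{C}_f(c') = 1$. Everything needed is already supplied by the postulated correspondence between sampler and measure semantics: evaluating \pt{f($c'$)} draws the trace from the prior, $\tau \sim p_f(\cdot\,; c')$, and returns the weight $w = \gamma_f(\tau; c')/p_f(\tau; c')$. In particular, once the inputs $c'$ are fixed the weight is a deterministic function of the sampled trace, so the only source of randomness in the evaluation is the draw $\tau \sim p_f(\cdot\,; c')$. This is just the familiar statement that likelihood weighting samples from the prior and reweights by the likelihood ratio, recast in the paper's notation.

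Given this, I would rewrite the left-hand expectation as an integral against the prior and cancel:
\begin{align*}
  \mathbb{E}_{\mpt{f($c'$)}}\!\left[ w \, h(\tau) \right]
  = \int \! d\tau \; p_f(\tau; c') \, \frac{\gamma_f(\tau; c')}{p_f(\tau; c')} \, h(\tau)
  = \int \! d\tau \; \gamma_f(\tau; c') \, h(\tau).
\end{align*}
The right-most expression is exactly the right-hand side of the proper-weighting condition with $\mathcal{C}_f(c') = 1$, which is precisely the definition of \emph{strict} proper weighting. Since no step used any property of $h$, the identity holds uniformly for every measurable $h$.

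The only point needing care is the domain of integration and the legitimacy of the cancellation $p_f \cdot (\gamma_f / p_f) = \gamma_f$, which is valid precisely where $p_f(\tau; c') > 0$. This is the implicitly-defined support $\Omega_f$, the set of traces producible by evaluating \pt{f($c'$)}; outside $\Omega_f$ the prior carries no mass and $\gamma_f$ is undefined, so both integrals are implicitly restricted to $\Omega_f$ and nothing is lost. I expect the only genuine obstacle to be measure-theoretic bookkeeping rather than anything substantive: because different executions may instantiate different variables, the trace space is a union of components of differing dimension, and $d\tau$, $\gamma_f$, and $p_f$ must be read against the stock measure furnished by the assumed denotational semantics rather than a fixed Lebesgue measure. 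Since the excerpt explicitly postulates these semantics together with $\tau \sim p_f$ and $w = \gamma_f/p_f$, I would take this as given and not belabor the measurability question that the paper has already deferred to a footnote.
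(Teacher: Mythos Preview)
Your proposal is correct and matches the paper's own argument essentially line for line: both observe that the only randomness is the prior draw $\tau \sim p_f(\cdot\,;c')$ with deterministic weight $w=\gamma_f/p_f$, then substitute and cancel to obtain $\int \gamma_f(\tau;c')\,h(\tau)\,d\tau$. The paper routes the final step through $Z_f\,\mathbb{E}_{\pi_f}[h]$ rather than leaving the integral against $\gamma_f$, but these are the two equivalent forms already given in Definition~\ref{def:proper_weighting}, so there is no substantive difference.
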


\begin{proof}
This holds by definition, since in a primitive program $w$ is uniquely determined by $\tau$, which is a sample from the prior density $\den{\mpt{f($c^\prime$)}}_p = p_f(\cdot \: ; c')$,
\begin{align*}
    %\mathbb{E}_{\den{\mpt{f}(c)}}
    \mathbb{E}_{\mpt{f($c^\prime$)}}
    \left[
        w\
        h(\tau)
    \right]
    &=
    \mathbb{E}_{p_f(\cdot; c')}
    \left[
        \frac{
            \gamma_{f}(\tau; c')
        }{
            p_f(\tau; c')
        }
        \:
        h(\tau)
    \right],
    \\
    &=     
    Z_f(c') \:
    \mathbb{E}_{p_f(\cdot; c')}
    \left[
        \frac{
            \pi_{f}(\tau; c')
        }{
            p_f(\tau; c')
        }
        \:
        h(\tau)
    \right],
    \\
    &=
    Z_f(c') \:
    \mathbb{E}_{\pi_f(\cdot; c')}
    \left[
        h(\tau)
    \right]
    % &=
    % \int_{\Tau}
    % d\tau\
    % p(\tau; c')
    % \frac{
    %     \gamma_{f}(\tau; c')
    % }{
    %     p(\tau; c')
    % }
    % h(\tau)
    % \\
    % &=
    % \int_{\Tau}
    % d\tau\
    % Z_p(c')
    % \pi(\tau; c')
    % h(\tau)
    % \\
    % &=
    % Z_p(c')
    % \mathbb{E}_{\pi(\cdot; c')}
    % \left[
    %     h(\tau)
    % \right]
    .
\end{align*}
\par\vspace{-2\baselineskip}
\end{proof}

%In Appendix \ref{} we show that the application of inference combinators to properly weighted programs preserves proper weighting. As a result, all programs defined by the grammar in section \ref{sec:infcomb} are properly weighted.
%\textbf{TODO: Can we do that - I think yes, but needs discussion}

\vspace{-0.75\baselineskip}
\subsection{Operational Semantics}
\vspace{-0.5\baselineskip}
\label{subsec:operational}
Given a modeling language for primitive programs whose evaluation is strictly properly weighted, our goal is to show that the inference language preserves strict proper weighting. To do so, we begin by formalizing rules for evaluation of each combinator, which together define big-step operational semantics for the inference language.

\paragraph{Compose.} We begin with the rule for \pt{compose(q$_2$,q$_1$)}, which performs program composition.
\[
\begin{prooftree}
  \hypo{
    \begin{matrix}
      c_1, \tau_1, \rho_1, w_1 \eval \mpt{q$_1$($c_0$)} \quad
      c_2, \tau_2, \rho_2, w_2 \eval \mpt{q$_2$($c_1$)} \\[3pt]
      \text{dom}(\rho_1) \cap \text{dom}(\rho_2) = \emptyset
    \end{matrix}
  }
  \infer1{c_2, \tau_2 \oplus \tau_1, \rho_2 \oplus \rho_1,~
          w_2 \cdot w_1 
          ~\eval{}~ \mpt{compose(q$_2$, q$_1$)($c_0$)}}
\end{prooftree}
\]
This rule states that we can evaluate \pt{compose(q$_2$, q$_1$)($c_0$)} by first evaluating \pt{q$_1$($c_0$)} and using the returned values $c_1$ as the inputs when evaluating \pt{q$_2$($c_1$)}. We return the resulting value $c_2$ with weight $w_2 \cdot w_1$. We combine traces $\tau_2 \oplus \tau_1$ and density maps $\rho_2 \oplus \rho_1$ using the operator $\oplus$, which we define for maps $\mu_1$ and $\mu_2$ with disjoint domains as
\begin{align*}
    (\mu_1 \oplus \mu_2)(\alpha) 
    = 
    \begin{cases}
        \mu_1(\alpha) & \alpha \in \text{dom}(\mu_1), \\
        \mu_2(\alpha) & \alpha \in \text{dom}(\mu_2). \\
    \end{cases}
\end{align*}

\paragraph{Extend.} The combinator \pt{extend(p, f)} performs a composition between a target \pt{p} and a primitive \pt{f} which defines a density on an extended space.
\[
\begin{prooftree}
  \hypo{
    \begin{matrix}
      c_1, \tau_1, \rho_1, w_1 \eval \mpt{p($c_0$)} \qquad
      c_2, \tau_2, \rho_2, w_2 \eval \mpt{f($c_1$)} \\[3pt]
      \text{dom}(\rho_1) \cap \text{dom}(\rho_2) = \emptyset 
      \qquad
      \text{dom}(\rho_2) = \text{dom}(\tau_2)
    \end{matrix}
  }
  \infer1{c_2, \tau_1 \oplus \tau_2, \rho_1 \oplus \rho_2, w_1 \cdot w_2
          ~\eval{}~ \mpt{extend(p, f)($c_0$)}}
\end{prooftree}
\]
The program \pt{f} defines a ``kernel'', which may not contain observed variables. We enforce this by requiring that $\text{dom}(\rho_2)=\text{dom}(\tau_2)$.
%Random variables in \pt{f} are treated as auxiliary variables, which are not included in the return trace or density map. We also discard the return value $c_2$. The only way in which evaluation of \pt{f} influences the output is in the weight $w_1 \cdot u_2$, which incorporates a factor 
%\begin{align}
%     \label{eq:extend_kernel_weight}
%     u_2 = \gamma_f(\tau_2 ; c_1) = p_f(\tau_2 ; c_1).
% \end{align}
% Intuitively, inclusion of $u_2$ injects ``noise'' into the weight, when considered in isolation. However, when used to define a target for a proposal, inclusion of this term can reduce the variance of the outgoing importance weight. 

\paragraph{Propose.} The extend operator serves to incorporate auxiliary variables into a target density. When evaluating \pt{propose(p, q)}, we discard auxiliary variables to continue the inference computation. For this purpose, we define a transformation \pt{marginal(p)} to recover the original un-extended program. We define this transformation recursively 
\[
\begin{prooftree}
    \hypo{
      \phantom{\mpt{f}' = \mpt{marginal(p)}}
    }
    \infer1{
        \mpt{f} = \mpt{marginal(f)} 
    }
\end{prooftree}
\qquad
\begin{prooftree}
    \hypo{
      \mpt{f}' = \mpt{marginal(p)}
    }
    \infer1{
        \mpt{f}' = \mpt{marginal(extend(p, f))} 
    }
\end{prooftree}
\]
We now define the operational semantics for propose as
\[
\begin{prooftree}
  \hypo{
    \begin{matrix}
      c_1, \tau_1, \rho_1, w_1 \eval \mpt{q($c_0$)} \quad
      c_2, \tau_2, \rho_2, w_2 \: \ceval{\tau_1}{\mpt{p($c_0$)}} \\[6pt]
      c_3, \tau_3, \rho_3, w_3 \: \ceval{\tau_2}{\mpt{marginal(p)($c_0$)}} \\[6pt]
      \displaystyle
      u_1 = \prod_{\alpha \in 
                          \text{dom}(\rho_1) 
                             \setminus (\text{dom}(\tau_1) 
                                        \setminus \text{dom}(\tau_2))}
                 \rho_1(\alpha)
    \end{matrix}
  }
  \infer1{c_3, \tau_3, \rho_3,~
          w_2 \cdot w_1 / u_1
          ~\eval{}~ \mpt{propose(p, q)($c_0$)}}
\end{prooftree}
\]
In this rule, the outgoing weight $w_2 \cdot w_1 / u_1$ corresponds precisely to Equation~\ref{eq:propose-weight}, since $w_2$ is equal to the numerator, whereas $u_1$ is equal to the denominator in this expression. 

Evaluation of \pt{p($c_0$)}$[\tau_1]$ applies substitution recursively to sub-expressions (see Appendix~\ref{apx:substitution}). Note that, by construction, evaluation of \pt{marginal(p)($c_0$)}$[\tau_2]$ is deterministic, and that $\tau_3$ and $\rho_3$ correspond to the entries in $\tau_2$ and $\rho_2$ that are associated with the unextended target.

%This section also defines evaluation under substitution for any inference program \pt{q}, which simplify proofs for Theorem 1 (\emph{below}) and its Lemmas. 

% \[
% \begin{prooftree}
%     \hypo{
%       c,\tau,\rho,w \eval \mpt{f($c^\prime$)}
%     }
%     \infer1{
%         c,\tau,\rho,w \eval \mpt{marginal(f)($c^\prime$)}
%     }
% \end{prooftree}
% \]

% \[
% \begin{prooftree}
%     \hypo{
%     \begin{matrix}
%       \mpt{p}_1 = \mpt{f}_1 \qquad
%       \mpt{p}_n = \mpt{extend(p$_{n-1}$, f$_n$)}
%       \qquad
%       n=\{1,\dots,N\}
%     \end{matrix}
%     %   \begin{matrix}
%     %   p = \\
%     %   c_1,\tau_1,\rho_1,w_1 \eval{\mpt{p}($c_0$)} \qquad
%     %   c_2,\tau_2,\rho_2,w_2 \eval \mpt{f$_1$($c^\prime$)}
%     %   \end{matrix}
%     }
%     \infer1{
%         c,\tau,\rho,w \eval \mpt{marginal(p)($c_0$)}
%     }
% \end{prooftree}
% \]

\paragraph{Resample.} This combinator performs importance resampling on the return values, the trace, and the density map. Since resampling is an operation that applies to a collection of samples, we use a notational convention in which $c^l$, $\tau^l$, $\rho^l$, $w^l$ refer to elements in vectorized objects, and $\v{c}$, $\v{\tau}$, $\v{\rho}$, $\v{w}$ refer to vectorized objects in their entirety\footnote{For simplicity we describe resampling with a single indexing dimension. The Probabilistic Torch implementation supports tensorized evaluation. For this reason, we specify a dimension along which resampling is to be performed in Figure~\ref{fig:apg-code}.}. Resampling as applied to a vectorized program has the semantics 
\[
\begin{prooftree}
  \hypo{
    \begin{matrix}
      \v{c}_1, \v{\tau}_1, \v{\rho}_1, \v{w}_1 \eval \mpt{q($\v{c}_0$)} \quad
      \v{a}_1 \sim \textsc{resample}(\v{w}_1) \\[3pt]
      \v{c}_2, \v{\tau}_2, \v{\rho}_2 = \textsc{reindex}(\v{a}_1, \v{c}_1, \v{\tau}_1, \v{\rho}_1) \qquad
      \v{w}_2 = \textsc{mean}(\v{w}_1)
    \end{matrix}
  }
  \infer1{\v{c}_2, \v{\tau}_2, \v{\rho}_2, \v{w}_2
          ~\eval{}~ \mpt{resample(q)($\v{c}_0$)}}
\end{prooftree}
\]
In this rule, we make use of three operations. The first samples indices with probability proportional to their weight using a random procedure $\v{a}_1 \sim \textsc{resample}(\v{w}_1)$\footnote{We use systematic resampling in our implementation. For a comparison of methods see \citep{murray2016parallel}}.
% \begin{align}
%     \mathbb{E}_{\textsc{resample}($w_1$)}\left[\mathbb{I}[a^{l} = k]\right]  = \frac{w^k}{\sum_{l} w^l}.
% \end{align}
We then use a function $\v{c}_2, \v{\tau}_2, \v{\rho}_2 =\textsc{reindex}(\v{a}_1, \v{c}_1, \v{\tau}_1, \v{\rho}_1)$ to replicate objects according to the selected indices, 
\begin{align}
    c^l_2 &= c_1^{a^l_1},
    &
    \tau^l_2 &= \tau_1^{a^l_1},
    &
    \rho^l_2 &= \rho_1^{a^l_1}.
\end{align}
Finally, we use $\v{w}_2=\textsc{mean}(\v{w}_1)$ to set outgoing weights to the average of the incoming weights, $w^l_2 = \sum_{l'} w^{l'}_1 / L $.

\paragraph{Denotational Semantics.} To demonstrate that program \pt{q} are (strictly) properly weighted, we need to define the density that programs \pt{p} and \pt{q} denote. Owing to space limitations, we relegate discussion of these denotational semantics to Appendix~\ref{apx:denotational_semantics}. Definitions follow in a straightforward manner from the denotational semantics of primitive programs.

\paragraph{Proper Weighting.} With this formalism in place, we now state our main claim of correctness for samplers that are defined in the inference language.

\begin{theorem}[Strict Proper Weighting of Inference Programs] Evaluation of an inference program \pt{q($c$)} is strictly properly weighted for its unnormalized density $\den{\mpt{q($c$)}}_\gamma$.
\end{theorem}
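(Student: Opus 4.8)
The plan is to proceed by structural induction on the grammar that defines inference programs, showing that each combinator preserves strict proper weighting (Definition~\ref{def:proper_weighting}) when applied to properly weighted arguments. The base case is a primitive program \texttt{f}, for which Proposition~\ref{prop:primitive_proper_weighting} already gives the result with constant $\mathcal{C}=1$. For every non-primitive constructor I would combine its operational-semantics rule (stated above) with the inductive hypothesis that its sub-programs are strictly properly weighted, and verify that $\mathbb{E}_{\texttt{q}(c')}[w\,h(\tau)]$ reduces to $\int d\tau\,\gamma_q(\tau;c')\,h(\tau)$, i.e.\ that the induced constant stays exactly one. Since $q ::= p$, the base of the $q$-induction consists of all target programs, so I would first dispatch the constructor \texttt{extend} (by an inner induction on the structure of \texttt{p}) and then handle the three inference-level combinators \texttt{compose}, \texttt{resample}, and \texttt{propose}.

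For \texttt{extend(p, f)} and \texttt{compose(q', q)} the argument is essentially a factorization. In both rules the two sub-evaluations have disjoint density-map domains, so the combined unnormalized density factorizes as a product and the returned weight is $w_1\cdot w_2$. Because the second program is evaluated on the random outputs $c_1$ of the first, I would apply the law of total expectation: condition on the trace and return value of the inner sub-program, invoke its inductive hypothesis, and then invoke the outer one. For \texttt{extend} the side condition $\text{dom}(\rho_2)=\text{dom}(\tau_2)$ forces \texttt{f} to be a normalized kernel with unit likelihood weight, which is exactly what makes the extended density integrate correctly and lets $\mathcal{C}=1$ be inherited.

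For \texttt{resample(q)} I would use the standard sequential-Monte-Carlo resampling argument, made precise with the vectorized semantics. By the inductive hypothesis the population $\{(\tau_1^l, w_1^l)\}$ is properly weighted, so $\mathbb{E}[\tfrac1L\sum_l w_1^l\,h(\tau_1^l)] = \int d\tau\,\gamma_q(\tau;c')\,h(\tau)$. Resampling draws ancestor indices $\v{a}_1$ with probability proportional to the weights and assigns each offspring the \emph{mean} weight $\textsc{mean}(\v{w}_1)$; taking the expectation over the resampling kernel conditioned on the population shows that $\mathbb{E}[w_2^l\,h(\tau_2^l)]$ equals the empirical weighted average above. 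Using the mean weight (rather than the total weight) is precisely what keeps the constant at $\mathcal{C}=1$, so strictness is maintained.

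The main obstacle is the \texttt{propose(p, q)} case, where the importance-weight correction actually does its work. Here I would write the outgoing weight $w_2\cdot w_1/u_1$ as the ratio in Equation~\ref{eq:propose-weight}, identifying $w_2$ with the substituted-target weight of Equation~\ref{eq:conditioned-weight} and $u_1$ with the proposal density restricted to the reused (non-superfluous) addresses. The key steps are: (i) express $\mathbb{E}_{\texttt{propose(p,q)}}[w\,h(\tau_3)]$ as a nested expectation over the proposal sampler for $\tau_1$ and the substitution sampler that draws the \emph{missing} target variables from the prior; (ii) use the inductive hypothesis for \texttt{q} to replace the $\tau_1$-expectation by an integral against $\gamma_q$, whereupon the denominator $u_1$ cancels the proposal density over the reused variables and leaves the \emph{superfluous} variables to be integrated against their normalized proposal conditional; and (iii) observe that the numerator $\gamma_p(\tau_2)$ together with the deterministic marginalization \texttt{marginal(p)} integrates the auxiliary and missing target variables against their normalized conditionals, collapsing to $\gamma_{\texttt{marginal(p)}}(\tau_3)$. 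The delicate points — and the reason this case carries the proof — are bookkeeping the implicit auxiliary-variable construction so that superfluous and missing variables are matched correctly across proposal and target, and checking that every conditional that gets integrated out is genuinely normalized, so that the two unit normalizers cancel and the evaluation is \emph{strictly} properly weighted ($\mathcal{C}=1$) rather than merely properly weighted. A final subtlety is that \texttt{q} may itself be a nested sampler containing a \texttt{resample}, so the inductive hypothesis must be invoked at the level of proper weighting of the entire sub-program, not by naively treating $\tau_1$ as a prior sample.
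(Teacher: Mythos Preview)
Your proposal is correct and follows essentially the same route as the paper: structural induction on the grammar, with the target-program case handled by an inner induction over \texttt{extend} (Lemma~\ref{lem:spw_extend} and Theorem~\ref{thm:target_proper_weighting}), and the three inference combinators dispatched by separate lemmas for \texttt{resample}, \texttt{compose}, and \texttt{propose} (Lemmas~\ref{lem:resample_proper_weighting}--\ref{lem:propose_proper_weighting}). Your identification of the \texttt{propose} case as the crux, and of the cancellation mechanism via Equation~\ref{eq:propose-weight} followed by integrating out the superfluous proposal variables and marginalizing the auxiliary target variables, matches the paper's argument; the only cosmetic difference is that the paper cancels the full $\gamma_q(\tau_1)$ against the denominator of the weight ratio in one step rather than splitting it into reused and superfluous parts.
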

\vspace{-0.5\baselineskip}
We provide a proof in Appendix~\ref{apx:proper_weighting_proofs}, which is by induction from lemmas for each combinator. 
%Intuitively, this theorem holds because each combinator, by definition,  performs an operation that preserves proper weighting, which is to say that the outgoing sample is properly weighted, as long as the incoming sample is properly weighted. Since evaluation of programs in the modeling language is properly weighted, proper weighting of compound programs follows from this property.

\vspace{-0.5\baselineskip}
\section{Learning Neural Proposals}
\label{sec:neural_proposals}
\vspace{-0.5\baselineskip}

%The evaluation of a inference program might entail deterministic transformations of its inputs to specify the density map and hence the associated density of the program. 
%If these mappings are parameterized by parameters $\phi$, e.g. by a neural network, we are interested in learning parameters such that the probability density $\den{\mpt{f}(c')}_p = p_f(\cdot; c', \phi)$ associated with the inference program approximates the probability density $\pi_p(\cdot; c') = \den{\mpt{p}(c')}_\gamma / Z_f(c')$ associated with the target program. 
%In the following, for the purpose of exposition, we assume that all traced variables are reparameterized \citep{kingma2013auto-encoding, rezende2014stochastic} and $\mathrm{dom}(\tau_f) = \mathrm{dom}(\tau_p)$, 
%where 
%$c_f, \tau_f, \rho_f, w_f \eval  \mpt{f}(c')$.
%and 
%$c_p, \tau_p, \rho_p, w_p \eval  \mpt{p}(c')$.

To learn a proposal \pt{q}, we use properly-weighted samples to compute variational objectives. We consider three objectives for this purpose. The first two minimize a top-level reverse or forward KL divergence, which corresponds to performing stochastic variational inference \citep{wingate2013automated} or reweighted wake-sleep style inference \citep{le2019revisiting}. The third implements nested variational inference \citep{zimmermann2021nested} by defining an objective at each level of recursion. We describe the loss and gradient computations at a high level, and provide details in
Appendix~\ref{apx:objective_computation} and Appendix~\ref{apx:gradient_computation} respectively.

\paragraph{Objective computation.}
To optimize the parameters of proposal programs, we slightly modify the operational semantics (for details see Appendix~\ref{apx:objective_computation}) such that a user-defined objective function $\ell : (\rho_q, \rho_p, w, v) \to \mathbb{R}$ can be evaluated in the context of the \pt{propose} combinator.
Objective functions are defined in terms of the density maps of the proposal and target program $\rho_q$, $\rho_p$ and the incoming and incremental importance weights $w$, $v$ at the current level of nesting.
The \emph{local} losses computed at the individual levels of nesting are accumulated to a \emph{global} loss in the inference state which consecutively can be used to compute gradients w.r.t.~parameters of the target and proposal and programs.

\paragraph{Stochastic Variational Inference (SVI).} 

Suppose that we have a program \pt{q$_2$ $=$ propose(p, q$_1$)} in which the target and proposal have parameters $\theta$ and $\phi$,
\begin{align}
    \den{\mpt{p(c$_0$)}}_\gamma &= \gamma_p(\cdot \:; c_0, \theta),
    &
    \den{\mpt{q$_1$(c$_0$)}}_\gamma &= \gamma_{q}(\cdot \:; c_0, \phi).
\end{align}
The target program \mpt{p} and the inference program \mpt{q$_2$} denote the same density $\den{\mpt{p(c$_0$)}}_\gamma = \den{\mpt{q$_2$(c$_0$)}}_\gamma$ 
and hence, as a result of Theorem \ref{thm:target_proper_weighting}, the evaluation of \mpt{q$_2$} is strictly properly weighted for $\gamma_p$.
Hence, we can evaluate $c_2,\tau_2,\rho_2,w_2 \eval \mpt{q$_2$(c$_0$)}$ to compute a stochastic lower bound \citep{burda2016importance},
\begin{align}
    \label{eq:svi-bound}
    \mathcal{L(\theta, \phi)}
    &:=
    \mathbb{E}_{\mpt{q$_2$($c_0$)}}
    \left[
    \log w_2
    \right]
    \\
    &\le
    \log 
    \left(
        \mathbb{E}_{\mpt{q$_2$($c_0$)}}
        \left[
            w_2
        \right]
    \right)
    =
    \log Z_p(c_0, \theta)
    \nonumber
    ,
\end{align}
where the penultimate equality holds by Definition~\ref{def:proper_weighting}.
The gradient $\nabla_\theta \mathcal{L}$ of this bound is a biased estimate of $\nabla_\theta \log Z_p$. The gradient $\nabla_\phi \mathcal{L}$ can be approximated using likelihood-ratio estimators \citep{wingate2013automated,ranganath2014black}, reparameterized samples \citep{kingma2013auto-encoding,rezende2014stochastic}, or a combination of the two \citep{ritchie2016deep}. In the special case where $\mpt{q}_1 = \mpt{f}$ is a primitive program, the  gradient of the reverse KL-divergence between \pt{p} and \pt{q$_1$} is
\begin{align*}
    \nabla_\phi \mathcal{L}(\theta, \phi) 
    &=
    \mathbb{E}_{\mpt{q$_2$($c_0$)}}
       \left[
           \frac{\partial \log w_2}{\partial \tilde\tau_2}
           \frac{\partial \tilde\tau_2}{\partial \phi}
           +
           \frac{\partial}{\partial \phi}
           \log w_2
       \right],
    \\ 
    &= 
    - \nabla_\phi \text{KL}(\pi_f || \pi_p).
\end{align*}

% A standard approach to approximate the target density is to maximize a stochastic lower bound \citep{burda2016importance} on the log normalizer
% \begin{align*}
%     \mathcal{L}(c', \phi)
%     &= 
%     \mathbb{E}_{\tau_f \sim p_{f}(\cdot \,|\, c' ;\ \phi)}
%     \left[
%     \log
%     \frac{
%         \gamma_p(\tau_f; c')
%     }{
%         p_f(\tau_f; c', \phi)
%     }
%     \right]
%     \le 
%     \log Z_p(c')
%     ,
% \end{align*}
% which corresponds to the minimization of the reverse KL-divergence,
% via stochastic gradient descent. We can compute gradient estimates w.r.t. parameters $\phi$ ,
% \begin{align*}
%     \nabla_{\phi}
%     \mathcal{L}(\phi)
%     &\approx
%     \frac{1}{L}
%     \sum_{l=1}^L
%         \nabla_{\phi}
%         \log
%         \frac{
%             \gamma_p(\tau_f^l; c')
%         }{
%             p_f(\tau_f^l; c', \phi)
%         }
%     =
%     \frac{1}{L}
%     \sum_{l=1}^L
%         \nabla_{\phi}
%         \log
%         w^l
% \end{align*}
% where the importance weight $w \in \mathbb{R}^{L}_+$ can be obtained by evaluating $c, \tau, \rho, w \eval  \mpt{propose(p, f)}(c')$.

% name RWS
\paragraph{Reweighted Wake-sleep (RWS) Style Inference.} To implement variational methods inspired by reweighted wake-sleep, we use samples $c_2^l,\tau_2^l,\rho_2^l,w_2^l \eval \mpt{q$_2$($c_0$)}$ to compute a self-normalized estimate of the gradient
\begin{align}
    \begin{split}
    \label{eq:rws-log-Z}
    \nabla_\theta \log Z_p(c_0, \theta) 
    &= 
    \mathbb{E}_{\pi_p(\tau \:; c_0, \theta)} \left[ 
      \nabla_\theta \log \gamma_p(\tau ; c_0, \theta)
    \right],\\
    &\simeq \sum_{l} \frac{w_2^l}{\sum_{l'} w_2^{l'}}
    \nabla_\theta \log \gamma_p(\tau_2^l ; c_0, \theta).
    \end{split}
\end{align}
Notice that here we compute the gradient w.r.t.~ the non-extended density $\gamma_p$, which does not include auxiliary variables and hence density terms which would integrate to one. When approximating gradient this allows us to compute lower variance estimates.

Similarly, we approximate the gradient of the forward KL divergence with a self-normalized estimator that is defined in terms of the proposals $c_1^l, \tau_1^l,\rho_1^l,w_1^l \eval \mpt{q$_1$($c_0$)}$,
\begin{align}  
    \label{eq:rws-kl}
    &-\!\nabla_\phi 
    \text{KL}(\pi_p || \pi_{q})
    = 
    \mathbb{E}_{\pi_p(\tau \:; c_0, \theta)} \!\!\left[ 
      \nabla_\phi \log \pi_{q}(\tau ; c_0, \phi)
    \right],\\
    \nonumber
    &\qquad\simeq 
     \sum_{l} \left(
      \frac{w_2^l}{\sum_{l'} w_2^{l'}}
      - \frac{w_1^l}{\sum_{l'} w_1^{l'}}
     \right)
    \nabla_\phi \log \gamma_{q}(\tau_1^l ; c_0, \phi).
\end{align}
In the special case where the proposal $\mpt{q}_1 = \mpt{f}$ is a primitive program without observations (i.e. $w^l_1=1$), we can drop the second term to recovers the standard RWS estimator. 
% RWS style inference circumvents the need of reparameterization or score-function gradient estimators but requires samples from the density $\pi_p(\cdot ; c')$ associated with the target program, which we want to approximate in the first place.

% One way to obtain samples from the target density is importance sampling and numerous advanced importance sampling strategies have been adapted to learn flexible high-quality proposals \citep{naesseth2018variational,le2018auto-encoding,wu2020amortized,zimmermann2021nested}.

\begin{figure}[!t]
    \centering
    \includegraphics[width=\linewidth]{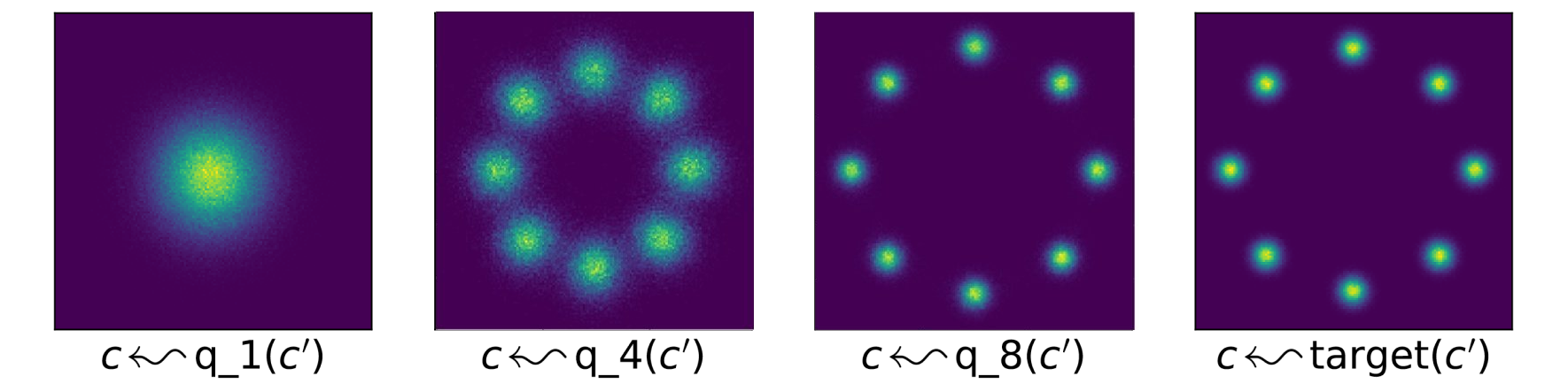}
    \vspace{-1.5\baselineskip}
    \caption{
        Samples from the initial proposal \pt{q}$_1$, learned intermediate proposal \pt{q}$_4$, final proposal \pt{q}$_8$, and final \texttt{target}.
        Definitions can be found in Figure \ref{fig:nvi-code}, Appendix \ref{apx:avi}.
        % Weighting of $\grad_\phi \log q$ and $\grad_\theta \log p$ in the gradient gradient terms of the reverse and forward KL-divergence.
    }
    \vspace{-0.5\baselineskip}
    \label{fig:final-nvi-samples}
\end{figure}

\begin{figure}[!t]
    \centering
    \includegraphics[width=\linewidth]{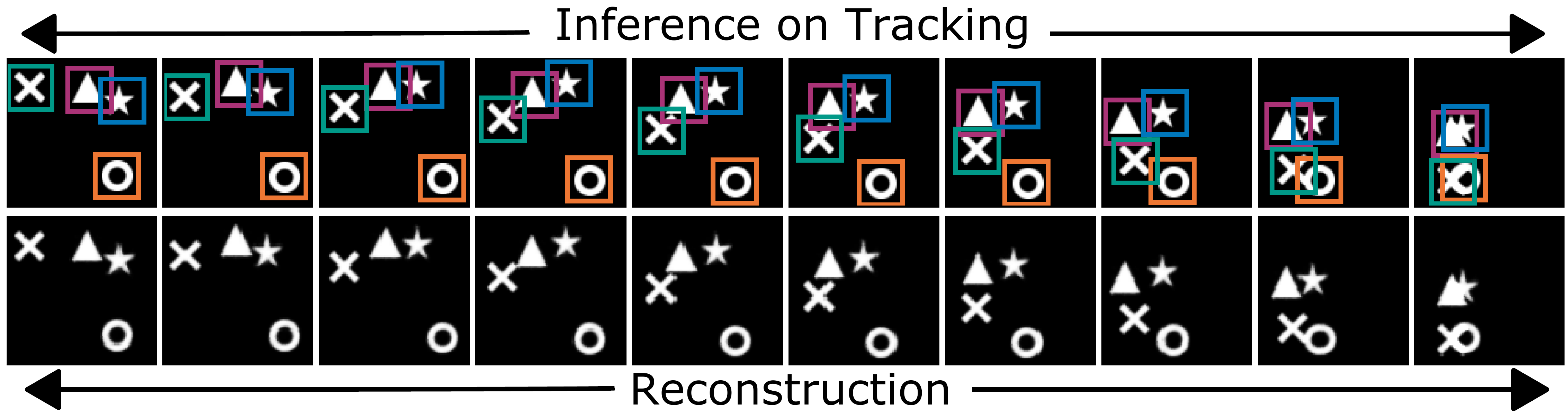}
 %  \vspace{-20pt}
    \caption{Qualitative results for the tracking task: Top row shows inferred positions of objects, bottom row shows reconstruction of the video frames.}
    \label{fig:apg-samples}
\end{figure}

\paragraph{Nested Variational Inference.}
A limitation of both SVI and RWS is that they are not well-suited to learning parameters in samplers at multiple levels of nesting. To see this, let us consider a program
\begin{probtorch}
    q$_3$ = propose(p$_3$, propose(p$_2$, f$_1$))
\end{probtorch}
We denote $\den{\mpt{q}_3}_\gamma=\den{\mpt{p}_3}_\gamma=\gamma_3$, $\den{\mpt{p}_2}=\gamma_2$, and $\den{\mpt{f}_1}=\gamma_1$. For simplicity, we consider densities with constant support $\Omega_p = \Omega_2 = \Omega_1$. The top-level evaluation $c_3, \tau_3, \rho_3, w_3 \eval \mpt{q$_3$($c_0$)}$ then yields a trace $\tau_3=\tau_1$ that contains samples from the prior $\den{\mpt{f}_1(c_0)} = p_1(\cdot \:; c_0)$ with weight
\begin{align}
    w_3 
    = 
    \frac{\gamma_3(\tau_1 ; c_0, \theta)}
         {\cancel{\gamma_2(\tau_1 ; c_0, \phi_2)}}
    \frac{\cancel{\gamma_2(\tau_1 ; c_0, \phi_2)}}
         {\cancel{\gamma_1(\tau_1 ; c_0, \phi_1)}}
    \frac{\cancel{\gamma_1(\tau_1 ; c_0, \phi_1)}}
         {p_1(\tau_1 ; c_0, \phi_1)}
    .
\end{align}
For this program, the lower bound from Equation~\ref{eq:svi-bound} does not depend on $\phi_2$. Conversely, the RWS-style estimator from Equation~\ref{eq:rws-kl} does not depend on $\phi_1$. Analogous problems arise in non-trivial inference programs that incorporate transition kernels and resampling. One such example are SMC samplers, where we would like to learn a sequence of intermediate densities, which impacts the variance of the final sampler. We consider this scenario Section  \ref{sec:experiments}.

Nested Variational Inference (NVI) \citep{zimmermann2021nested} replaces the top-level objectives in SVI and RWS with an objective that contains one term at each level of nesting.
In the example above, this objective has the form
\begin{align*}
    \mathcal{D}
    &=
    D_3(\pi_2 \,||\, \pi_3)
    +
    D_2(\pi_1 \,||\, \pi_2)
    +
    D_1(p_1 \,||\, \pi_1)
    ,
\end{align*}
where each $D_i$ is a forward or a reverse KL divergence or a corresponding stochastic upper or lower bound.
The individual terms can be optimized as described above, but additional care has to be taken when optimizing the intermediate target densities, as their normalizing constants might not be tractable.
For details on the computation of the nested variational objective and corresponding gradients we refer to Appendix~ \ref{apx:gradient_computation}.
When we apply an NVI objective using an upper bound based on the forward KL divergence to the program in Figure~\ref{fig:apg-code}, we recover the gradient estimators of APG samplers.

%This allows us to define flexible sampling schemes while being able to train intermediate proposals.
%For details on the computation of the nested variational objective and corresponding gradients we refer to Appendix~ \ref{apx:gradient_computation}.

\vspace{-1.25\baselineskip}
\section{Experiments}
\label{sec:experiments}
\vspace{-0.5\baselineskip}

We evaluate combinators in two experiments. First, we learn proposals in an annealed importance sampler that additionally learns intermediate densities. Second, we use an APG sampler to learn proposals and a generative model for an unsupervised multi-object tracking task. %where we learn both a deep generative model and its posterior approximation for inference.

\vspace{-0.5em}
\paragraph{Annealed Variational Inference.} 
We consider the task of generating samples from a 2-dimensional unnormalized density consisting of 8 modes, equidistantly placed along a circle. 
For purposes of evaluation we treat this density as a blackbox, which we are only able to evaluate pointwise.

We implement an annealed importance sampler \citep{neal2001annealed} (Figure~\ref{fig:nvi-code} in Appendix~\ref{apx:avi}) for a sequence of unnormalized densities $\gamma_k(\tau_k ; c_0) = \gamma_K(\tau_k ; c_0)^{\beta_k} \gamma_1(\tau_k ; c_0)^{1-\beta_k}$ that interpolate between an initial proposal $\gamma_{1}$ and the final target $\gamma_K$.
The sampler employs forward kernels $q_k(\tau_k ; c_{k-1}, \phi_k)$ and reverse kernels $r_k(\tau_{k-1} ; c_k, \theta_k)$ to define densities on an extended space at each level of nesting. 
We train the sampler using NVI by optimizing the kernel parameters $\theta_k$ and $\phi_k$, and parameters for the annealing schedule $\beta_k$. 
%\[
%\gamma_k(z) = q_{0}(c)^{1-\beta_{k}}\gamma_{K}(c)^{\beta_{k}}, \quad \beta_{k} =
%\frac{k-1}{K-1}
%\]
%Given $k \in [1 \ldots K]$.
%

%We define the target programs as primitive programs $\gamma_{k}(c_{k}, c_{k-1}) = q_{k}(c_{k} \mid c_{k-1})$ and learn the optimal constants of $\beta_k$ for each intermediate density along the annealing path.

We compare NVI and NVI*, which additionally learns the annealing schedule for the intermediate targets,
and corresponding versions, NVIR and NVIR*, which additional employ resampling at every level of nesting, to annealed variational objectives (AVO) \citep{huang2018improving}. 
Learning the annealing schedule (NVI(R)*) results in improved sample quality, in terms of the log expected weight ($log \hat Z$) and the effective sample size (ESS). We report results in Table~\ref{tab:nvi} and refer to Appendix~\ref{apx:avi} for additional details.

\setlength{\tabcolsep}{3.5pt}
\begin{table}[!t]
  \footnotesize
  \centering
  \caption{
      AVO and NVI variants trained for different numbers of annealing steps $K$ and samples per step $L$ for a fixed sampling budget of $K \cdot L = 288$ samples.
      % We report the log average weight ($\log \hat Z$) and effective sample size (ESS) for $1000$ samples per step at test time. 
      % All numbers are averages over $100$ batches across $10$ independent restarts.
  }
  \vspace*{-0.15\baselineskip}
  \begin{tabular}{lccccccccc}   
      \toprule
      %&\multicolumn{4}{c}{$\v{log\ \hat Z \leq\ log\ Z}$} &
      &\multicolumn{4}{c}{$\textbf{log} \: \hat{\bm{Z}} = \log \left(\frac{1}{L} \sum_{l} w^{l}\right)$}&
      &\multicolumn{4}{c}{\textbf{ESS}}\\
      \cmidrule(lr){2-5}
      \cmidrule(lr){6-10}
      & K=2 & K=4 & K=6 & K=8 &
      & K=2 & K=4 & K=6 & K=8
      \\
      \hline 
      AVO 
      & 1.88 & 1.99 & 2.05 & 2.06 &
      & 426 & 291 & 285 & 295
      \\
      \hline
      NVI
      & 1.88 & 1.99 & 2.06 & 2.07 &
      & 427 & 341 & 308 & 319
      \\
      NVIR
      & 1.88 & \textbf{2.05} & 2.07 & \textbf{2.08} &
      & 418 & 828 & 934 & 961
      \\
      NVI*
      & 1.88 & 2.03 & \textbf{2.08} & \textbf{2.08} &
      & 427 & 304 & 414 & 481
      \\
      NVIR* 
      & 1.88 & 1.99 & \textbf{2.08} & \textbf{2.08} &
      & 418 & \textbf{981} & \textbf{978} & \textbf{965} 
      \\
      \bottomrule
  \end{tabular}
  %\vspace{-0.25\baselineskip}
  \label{tab:nvi}
\end{table}
\setlength{\tabcolsep}{3.5pt}
\begin{table}[!t]
    \footnotesize
    \centering
    \caption{$\log p_\theta (x, z)$ on test sets that contain D objects and T time steps. $L$ is the number of particles and $K$ is the number of sweeps. We run APG and RWS with computational budget $L\cdot K=200$, and run HMC-RWS with $L\cdot K=4000$.}
    %\vspace{-0.5\baselineskip}
    \begin{tabular}{llcccc}
    \toprule
    & & \multicolumn{2}{c}{D=3} &  \multicolumn{2}{c}{D=4}\\
    \cmidrule(lr){3-4}
    \cmidrule(lr){5-6}              
    \textbf{Model} & \textbf{Budget} &  T=10 & T=20 &  T=10 & T=20 \\
    \midrule 
    RWS & L=200, K=1 & -5247 & -9396 & -8275 & -16070 \\
    % \cline{1-2}
    % HMC-RWS & L=200, K=10 & -5175 & -9287 & -8166 & -15930 \\
    HMC-RWS & L=200, K=20 & -5137 & -9281 &  -8124 & -15087 \\
    % \cline{1-2}
    APG & L=100, K=2 & -2849 & -5008 &-4411 & -8966 \\
    APG & L=40, \hspace{0.25em} K=5 & -2300 & -4646 & -3529 & -6879 \\
    APG & L=20, \hspace{0.25em} K=10 & \textbf{-2267} & \textbf{-4606} & \textbf{-3516} & \textbf{-6827} \\
    \bottomrule
    \end{tabular}
\label{tab:apg}
\vspace*{0.1\baselineskip}
\end{table}

\vspace{-0.5em}
\paragraph{Amortized Population Gibbs.} In this task, the data is a corpus of simulated videos that each contain multiple moving objects. Our goal is to learn both the target program (i.e.~the generative model) and the inference program using the APG sampler that we introduced in Section~\ref{sec:example-apg}. See Appendix~\ref{apx:apg_implementation} for implementation details.

%Many inference tasks incorporate the requirement for learning a deep generative model for high-dimensional data. To evaluate combinators for this purpose, we implement an APG sampler (see Section~\ref{sec:example-apg} for detail) using our framework and train the model for an unsupervised tracking task. 

%To perform inference, we learn neural proposals that iterate between updates to object features and time-dependent object positions. Also we jointly learn a deep generative model that reconstructs the video frames. 
Figure~\ref{fig:apg-samples} shows that the APG sampler can (fully unsupervised) identify, track, and reconstruct individual object in each frame. In Table~\ref{tab:apg} we compare the APG sampler against an RWS baseline and a hand-coded HMC-RWS method, which improves upon RWS proposals using Hamiltonian Monte Carlo. 
Table~\ref{tab:apg} shows that APG outperforms both baselines. Moreover, for a fixed computational budget, increasing the number of sweeps improves  sample quality. 

% The data is a corpus of video frames and each sequence $x_{1:T}$ shows multiple moving objects in $T$ time steps. To perform inference, we learn neural proposals that iterate between updates to object features $z^{\text{what}}$ and time-dependent object positions $z^{\text{where}}$. Also we jointly learn a deep generative model that reconstructs the video frames.

\vspace{-0.75\baselineskip}
\section{Related Work}
\vspace{-0.5\baselineskip}

This paper builds directly on several lines of work, which we discuss in detail in Appendix~\ref{apx:related}. 

Traced evaluation (Section 2) has a long history in systems that extend general-purpose programming languages with functionality for probabilistic modeling and inference \citep{wingate2011lightweight,mansinghka2014venture,goodman2014design,tolpin2016design}, including recent work that combines probabilistic programming and deep learning \citep{tran2016edward,ritchie2016deep,siddharth2017learning,bingham2018pyro,baydin2018pyprob}. 

The idea of developing abstractions for inference programming has been around for some time \citep{mansinghka2014venture}, and several instantiations of such abstractions have been proposed in recent years \citep{cusumano-towner2019gen,scibior2017denotational,obermeyer2019functional}. The combinator-based language that we propose here is inspired directly by the work of \citet{naesseth_elements_2019} on nested importance sampling, as well as on a body of work that connects importance sampling and variational inference.

%Analagous work seeking to explicitly enable inference programming include
% Gen~\citep{cusumano-towner2019gen} and Functional Tensors~\citep{obermeyer2019functional}.
% Gen enables writing user-level code to
% interleave inference operations such as MCMC updates or MAP estimation.
% Functional Tensors provide a language of integration which aims to unify exact
% and approximate inference, taking inspiration from modern autodifferentiation
% libraries.

% The inference language that we develop here differs from these approaches
% in that is designed to ensure that any composition of combinators yields an
% importance sampler that is valid, in the sense that evaluation is
% properly-weighted for the unnormalized density that a program denotes.

\vspace{-0.75\baselineskip}
\section{Discussion}
\label{sec:discussion}
\vspace{-0.5\baselineskip}

We have developed a combinator-based language for importance samplers that are valid by construction, in the sense that samples are properly weighted for the density that a program denotes. We define semantics for these combinators and provide a reference implementation in Probabilistic Torch. Our experiments demonstrate that user-programmable samplers can be used as a basis for sophisticated variational methods that learn neural proposals and/or deep generative models. Inference combinators, which can be implemented as a DSL that extends a range of existing systems, hereby open up opportunities to develop novel nested variational methods for probabilistic programs.

%that can be for amortized inference in probabilistic programs.  We defined the semantics required from a modeling language for use with our inference combinator language in Section~\ref{sec:preliminaries}.  In Section~\ref{sec:infcomb}, we defined our inference language and proved that programs within its grammar preserve the target density of the underlying model. In Section~\ref{sec:neural_proposals} we described how to apply stochastic variational inference objectives to inference programs, amortizing the learning of a proposal, and in Section~\ref{sec:experiments} we showed that the resulting amortized inference models work well experimentally.  Our inference combinators are available for Probabilistic Torch.

% % Acknowledgements should only appear in the accepted version.
%TODO: Finalize Acknowledgements
\section*{Acknowledgements}
This work was supported by the Intel Corporation, the 3M Corporation, NSF award 1835309, startup funds from Northeastern University, the Air Force Research Laboratory (AFRL), and DARPA. We would like to thank Adam \'Scibior for helpful discussions regarding the combinator semantics.

% \textbf{Do not} include acknowledgements in the initial version of
% the paper submitted for blind review.

% If a paper is accepted, the final camera-ready version can (and
% probably should) include acknowledgements. In this case, please
% place such acknowledgements in an unnumbered section at the
% end of the paper. Typically, this will include thanks to reviewers
% who gave useful comments, to colleagues who contributed to the ideas,
% and to funding agencies and corporate sponsors that provided financial
% support.

% In the unusual situation where you want a paper to appear in the
% references without citing it in the main text, use \nocite
% \nocite{langley00}

\printbibliography
\end{refsection}

\appendix
\onecolumn

\begin{refsection}
\section{Related Work}
\label{apx:related}
\vspace{-0.25\baselineskip}
\subsection{Importance Sampling and MCMC in Variational Inference}
\vspace{-0.5\baselineskip}

This work fits into a line of recent methods for deep generative modeling that seek to improve inference quality in stochastic variational methods. We briefly review related literature on standard methods before discussing advanced methods that combine variational inference with importance sampling and MCMC, which inspire this work.

Variants of SVI maximize a lower bound, which equates to minimizing a reverse KL divergence, whereas methods that derive from RWS minimize an upper bound, which equates to minimizing a forward KL divergence. In the case of SVI, gradients can be approximated using reparameterization, which is widely used in variational autoencoders \citepApp{kingma2013auto-encoding,rezende2014stochastic} or by using likelihood-ratio estimators \citepApp{wingate2013automated,ranganath2014black}. In the more general case, where the model contains a combination of reparameterized and non-reparameterized variables, the gradient computation can be formalized in terms of stochastic computation graphs \citepApp{schulman2015gradient}. Work by \citeApp{ritchie2016deep} explains how to operationalize this computation in probabilistic programming systems. In the case of RWS, gradients can be computed using simple self-normalized estimators, which do not require reparameterization \citepApp{bornschein2015reweighted}. This idea was recently revisited in the context of probabilistic programming systems by \citepApp{le2019revisiting}, who demonstrate that RWS-based methods are often competitive with SVI.

\paragraph{Importance-weighted Variational Inference.} There is a large body of work that improves upon standard SVI by defining tighter bounds, which results in better gradient estimates for the generative model. Many of these approaches derive from importance-weighted autoencoders (IWAEs) \citepApp{burda2016importance}, which use importance sampling to define a stochastic lower bound $\mathbb{E}[\log \hat{Z}] \le \log Z$ based on an unbiased estimate $\mathbb{E}[\hat{Z}]=Z$ (see Section~\ref{apx:avi}). Since any strictly properly weighted importance sampler can be used to define an unbiased estimator $\hat{Z} = \frac{1}{L} \sum_{l} w^l$, this gives rise to many possible extensions, including methods based on SMC \citepApp{le2018auto-encoding,naesseth2018variational,maddison2017filtering} and thermodynamic integration \citepApp{masrani2019thermodynamic}. \citeApp{salimans2015markov} derive a stochastic lower bound for variational inference which uses an importance weight defined in terms of forward and reverse kernels in MCMC steps.
\citeApp{caterini2018hamiltonian} extend this work by optimally selecting reverse kernels (rather than learning them) using inhomogeneous Hamiltonian dynamics. The work on AVO
\citepApp{huang2018improving} learn a sequence of transition kernels that performs annealing from the initial encoder to the posterior, which we use as a baseline in our annealing experiments.

A somewhat counter-intuitive property of these estimators is tightening the bound typically improves the quality of gradient estimates for the generative model, but can adversely affect the signal-to-noise ratio of gradient estimates for the inference model \citepApp{rainforth2018tighter}. These issues can be circumvented by using RWS-style estimators\footnote{Note that the gradient estimate for the generative model in Equation~\ref{eq:rws-log-Z} is identical to the gradient estimate of the corresponding IWAE bound, so these two approaches only differ in the gradient estimate that they compute for the inference model.}, or by implementing doubly-reparameterized estimators \citepApp{tucker2018doubly}. 

\paragraph{SMC Samplers and MCMC.} In addition to enabling implementation of variational methods based on SMC, this work enables the interleaving of resampling and move operations to define so-called SMC samplers \citepApp{chopin2002sequential}. One recent example of are the APG samplers that we consider in our experiments \citepApp{wu2020amortized}. It is also possible to interleave importance sampling with any MCMC operator, which preserves proper weighting as long as the stationary distribution of this operator is the target density. In this space, there exists a large body of relevant work. 

\citeApp{hoffman2017learning} apply Hamiltonian Monte Carlo to samples that are generated from the encoder, which serves to improve the gradient estimate w.r.t.~the generative model, while learning the inference network using a standard reparameterized lower bound objective. \citeApp{li2017approximate} also use MCMC to improve the quality of samples from an encoder, but additionally use these samples to train the encoder by minimizing the forward KL divergence relative to the filtering distribution of the Markov chain. Since the filtering distribution after multiple MCMC steps is intractable, \citeApp{li2017approximate} use an adversarial objective to minimize the forward KL. \citet{wang2018meta} develop a meta-learning approach to learn Gibbs block conditionals. This work assumes a setup in which it is possible to sample data and latent variables from the true generative model. This approach minimizes the forward KL, but uses the learned conditionals to define an (approximate) MCMC sampler, rather than using them as proposals in a SMC sampler. 

\paragraph{Proper weighting and nested variational inference.} This work directly builds on seminal work by 
\citeApp{liu2008monte,naesseth2015nested,naesseth_elements_2019}
that formalizes proper weighting. This formalism makes it possible to reason compositionally about validity of importance samplers and corresponding variational objectives \citepApp{zimmermann2021nested}.

\subsection{Probabilistic Programming}

Probabilistic programming systems implement methods for inference in programmatically specified models. A wide variety of systems exist, which differ in the base languages they employ, the types of inference methods they provide, and their intended use cases.
One widely used approach to the design of probabilistic programming systems is to define a language in which all programs are amenable to a particular style of inference. Exemplars of this approach include Stan \citepApp{carpenter2017stan}, which emphasizes inference using Hamiltonian Monte Carlo methods in differentiable models with statically-typed support, Infer.NET \citepApp{minka2010infer} which emphasizes message passing in programs that denote factor graphs, Problog \citepApp{deraedt2007problog} and Dice \citepApp{holtzen2020scaling}, in which programs denote binary decision diagrams, and LibBi \citepApp{murray2013bayesian}, which emphasizes particle-based methods for state space models. More generally, systems that emphasize MCMC methods in programs with statically typed support fit this mold, including early systems like BUGS \citepApp{spiegelhalter1995bugs} and JAGS \citepApp{plummer2003jags}, as well as more recent systems like PyMC3 \citepApp{salvatier2016probabilistic}.

A second widely used approach to probabilistic programming is to extend a general-purpose language with functionality for probabilistic modeling, and implement inference methods that are generally applicable to programs in this language. The advantage of this design is that it becomes more straightforward to develop simulation-based models that incorporate complex deterministic functions, or to design programs that incorporate recursion and control flow. A well-known early exemplar of this style of probabilistic programming is Church \citepApp{goodman2008church}, whose modeling language is based on Scheme. Since then, many existing languages have been adapted to probabilistic programming, including Lisp variants \citepApp{mansinghka2014venture,wood2014new}, Javascript \citepApp{goodman2014design}, C \citepApp{paige2014compilation}, Scala \citepApp{pfeffer2009figaro}, Go \citepApp{tolpin2018infergo}, and Julia \citepApp{ge2018turing,cusumano-towner2019gen}. 

This second class of probabilistic programming systems is the most directly relevant to the work that we present here. A key technical consideration in the design of probabilistic programming systems is whether the modeling language is first-order or higher-order (sometimes also referred to as ``universal'') \citepApp{vandemeent2018introduction}. In first-order languages, a program denotes a density in which the support is statically determinable at compile time. Since most general-purpose languages support higher-order functions and recursion, the support of probabilistic programs in these languages is generally not statically determinable. However, the traced evaluation model that we describe in Section~\ref{sec:traced-eval} can be implemented in almost any language, and inference combinators can therefore be implemented as a DSL for inference in most of these systems.

%Our work here fits in to an emerging line of research that aims to develop a middle ground between static and dynamic modeling approaches. Models in Stochaskell define static graphs in which variables can have a dynamic dimensionality \citep{roberts2019reversible}. Birch \citep{murray2018automated} provides constructs for structural motifs, such as state space models, which are amenable to inference optimizations. Model combinators similarly allow us to reason about model structure, which inference combinators exploit to compose inference operations.

In the context of our work, we are particularly interested in systems that extend or interoperate with deep learning systems, including Pyro~\citepApp{bingham2018pyro}, %Functional Tensors~\citepApp{obermeyer2019functional}, 
Edward2~\citepApp{tran2018simple}, Probabilistic Torch \citepApp{siddharth2017learning}, Scruff~\citepApp{pfeffer2018scruff}, Gen~\citepApp{cusumano-towner2019gen} and PyProb~\citepApp{baydin-2019-etalumis}. These systems provide first-class support for inference with stochastic gradient methods. While support for stochastic variational inference in probabilistic programming has been around for some time \citepApp{wingate2013automated,vandemeent2016black,ritchie2016deep}, this style of inference has become much more viable in systems where variational distributions can be parameterized using neural networks. However, to date, the methods that are implemented in these systems are typically limited to standard SVI, RWS, and IWAE objectives. We are not aware of systems that currently support stochastic variational methods that incorporate importance resampling, such as autoencoding SMC or APG samplers.

\vspace{-0.5\baselineskip}
\subsection{Inference Programming}
\vspace{-0.5\baselineskip}

The combinator-based language for inference programming that we develop in this paper builds on ideas that have been under development in the probabilistic programming community for some time.

The Venture paper described ``inference programming'' as one of the desiderata for functionality of future systems \citepApp{mansinghka2014venture}. Venture also implemented a stochastic procedure interface for manipulating traces and trace fragments, which can be understood as a low-level programming interface for inference. WebPPL \citepApp{goodman2014design} and Anglican \citepApp{tolpin2016design} define an interface for inference implementations that is based on continuation-passing-style (CPS) transformations, in which a black-box deterministic computation returns continuations to an inference backend, which implements inference computations and continues execution \citepApp{vandemeent_poplw_2016}. A number of more recent systems have implemented similar interfaces, albeit by different mechanisms than CPS transformations. Turing \citepApp{ge2018turing} uses co-routines in Julia as a mechanism for interruptible computations. Pyro~\citepApp{bingham2018pyro} and Edward2~\citepApp{tran2018simple} implement an interface in which requests to the inference backend are dispatched using composable functions that are known as ``messengers'' or ``tracers''. The arguably most general instantiation of this idea is found in PyProb \citepApp{baydin-2019-etalumis}, which employs a cross-platform Probabilistic Programming eXecution (PPX) protocol based on flatbuffers to enable computation between a program and inference backend that can be implemented in different languages \citepApp{baydin2018pyprob}. For a pedagogical discussion, see the introduction by \citeApp{vandemeent2018introduction}.

All of the above programming interfaces for inference are low-level, which is to say that it is the responsibility of the developer to unsure that quantities like importance weights and acceptance probabilities are computed in a manner that results in a correct inference algorithm. This means that users of probabilistic programming systems can in principle create their own inference algorithms, but that doing so may require considerable expertise and debugging. For this reason, recent systems have sought to develop higher-level interfaces for inference programming. Edward~\citepApp{tran2016edward} provides a degree of support for interleaving inference operations targeting different conditionals. Birch \citepApp{murray2018automated} provides constructs for structural motifs, such as state space models, which are amenable to inference optimizations. 

Exemplars of systems that more explicitly seek to enable inference programming include Gen~\citepApp{cusumano-towner2019gen} and Functional Tensors~\citepApp{obermeyer2019functional}.
Gen provides support for writing user-level code to interleave operations such as MCMC updates or MAP estimation, which can be applied to subsets of variables. For this purpose, Gen provides a generative function interface consisting of primitives \texttt{generate}, \texttt{propose}, \texttt{assess}, \texttt{update}, and \texttt{choice\_gradients}.  Recursive calls to these primitives at generative function call sites serve to compositionally implement Gen's built-in modeling language, along with hierarchical traces. \citeApp{Lew2020} considered a type system for execution traces in probabilistic programs, which allowed them to verify the correctness of certain inference algorithms at compile time.

Functional Tensors (funsors) provide an abstraction for integration that aims to unify exact and approximate inference.
Funsors providing a grammar and type system which take inspiration from modern autodifferentiation libraries.
Expressions in funsors denote discrete factors, Gaussian factors, point mass, delayed values, function application, substitution, marginalization, and plated products. Types encapsulate tensor dimensions, which permit funsors to support broadcasting.
This defines an intermediate representation that can be used for a variety of probabilistic programs and inference methods.
%Similarly, general-purpose probabilistic programming frameworks like Venture~\citepApp{mansinghka2014venture} 

%To enable a more systematic exploration and evaluation of new strategies for stochastic variational inference, deep probabilistic programming frameworks need to not only provide constructs for model design, but also for the design of inference approaches. There are a number of ongoing efforts along these lines. Pyro~\citepApp{bingham2018pyro} supports writing and composing ``guide'' programs to produce a proposal for a whole model, whose parameters are made available for variational optimization. Edward~\citepApp{tran2016edward} provides a degree of support for interleaving inference operations targeting different conditionals. Similarly, general-purpose probabilistic programming frameworks like Venture~\citepApp{mansinghka2014venture} and more recently Gen~\citepApp{cusumano-towner2019gen} support writing user-level code to interleave operations such as MCMC updates or MAP estimation. Broadly speaking, these approaches provide primitive inference operations, which users can apply to subsets of variables in a model. 

The inference language that we develop here differs from these above approaches in that is designed to ensure that any composition of combinators yields an importance sampler that is valid, in the sense that evaluation is properly-weighted for the unnormalized density that a program denotes. In doing so, our work takes inspiration from Hakaru~\citepApp{narayanan2016probabilistic}, which frames inference as program transformations that be composed so as to preserve a
measure-theoretic denotation \citepApp{zinkov2017composing}, as well as the work on validity of inference in higher-order probabilistic programs by \citeApp{scibior2017denotational}, which we discuss in the next Section.

\subsection{Sampling and Measure Semantics}

Programming languages have been studied in terms of two kinds of semantics: what the programs do, operational semantics, and what they mean, denotational semantics.  For probabilistic programming languages, this has usually led to a denotational \emph{measure semantics} that interpret generative model programs as measures, and an operational \emph{sampler semantics} that interpret programs as procedures for using randomness to sample from a specified distribution.  In the terms we use for our combinators library, model composition changes the target density of a program, and thus its measure semantics, while inference programming should alter the sampling semantics in a way that preserves the measure semantics.

Early work by \citetApp{Borgstrom2011,Toronto2015} characterized measure semantics for first-order probabilistic programs, with the first enabling inference by compilation to a factor graph, and the second via semi-computable preimage functions.  This demonstrates one of the simplest, but most analytically difficult, ways to perform inference in a probabilistic program: get rid of the sampling semantics and use the measure semantics to directly evaluate the relevant posterior expectation.  When made possible by program analysis, this can even take the form of symbolic disintegration of the joint distribution entailed by a generative program into observations and a posterior distribution, as in \citetApp{shan2017exact}.

\citetApp{Fong2013} gave categorical semantics to causal Bayesian networks, and via the usual compilation to a graphical model construction, to first-order probabilistic programs as well.  Further work by \citeApp{Clerc2017,Dahlqvist2018} has extended the consideration of categorical semantics for first-order PPLs.

Quasi-Borel spaces were discovered by \citeApp{Heunen2017} and quickly found to provide a good model for higher-order probabilistic programming. \citetApp{scibior2017denotational,Scibior:2018:FPM:3243631.3236778} applied these categorical semantics to prove that inference algorithms could be specified by monad transformers on sampling strategies that would preserve the categorical measure semantics over the underlying generative model.  The work of \citetApp{Scibior:2018:FPM:3243631.3236778} included an explicit consideration of importance weighting, which we have extended to cover the proper weighting of importance samples from arbitrarily nested and extended inference programs.

\vspace{\fill}

\section{Denotational Semantics of Target and Inference Programs} 
\label{apx:denotational_semantics}

To reason about validity of inference, we need to specify what density a target program \pt{p} or inference program \pt{q} denotes. We begin with target programs, which have the grammar \pt{p $::=$ f $\mid$ extend(p, f)}. For a primitive program $\mpt{f}$, the denotational semantics inherit trivially from the axiomatic denotational semantics of the modeling language. For a program \pt{extend(p, f)} we define the density as the composition of densities of the inputs
\[
    \begin{prooftree}
        \hypo{
            \begin{matrix}
             c_1, \tau_1, \rho_1, w_1 \eval \mpt{p($c_0$)}
             \qquad
             c_2, \tau_2, \rho_2, w_2 \eval \mpt{f($c_1$)}\\[4pt]
             \text{dom}(\rho_1) \cap \text{dom}(\rho_2) = \emptyset
             \qquad 
             \text{dom}(\rho_2) = \text{dom}(\tau_2)
             \qquad
             \tau_3 = \tau_2 \oplus \tau_1
             \end{matrix}
             }
        \infer1{\den{\mpt{extend(p, f)($c_0$)}}(\tau_3) 
                =
                \den{\mpt{f($c_1$)}}(\tau_2) \:
                \den{\mpt{p($c_0$)}}(\tau_1)
                }
    \end{prooftree}
\]
In this rule, we omit subscripts $\den{\cdot}_\gamma$ and $\den{\cdot}_p$, since this rule applies to both prior and the unnormalized density. As in the case of primitive programs, we here adopt a convention in which the support is implicitly defined as the set of traces $\tau_3 = \tau_2 \oplus \tau_1$ by combining disjoint traces $\tau_1$ and $\tau_2$ that can be generated by evaluating the composition of \pt{p} and \pt{f}.

Inference programs have a grammar \pt{q $::=$ p $\mid$ compose(q$'$, q) $\mid$ resample(q) $\mid$ propose(p, q)}. For each expression form, we define the density that a program denotes in terms of the density of its corresponding target program. To do so, we define a program tranformation \pt{target(q)}. This transformation replaces all sub-expressions of the form \pt{propose(p, q)} with  their targets \pt{p} and all sub-expressions \pt{resample(q)} with \pt{q}. We define the transformation recursively
\[
    \begin{prooftree}
        \hypo{
            \phantom{\mpt{q}'_1 = \mpt{target(q$_1$)}}
        }
        \infer1{
          \mpt{p} = \mpt{target(p)} \phantom{'}
        }
    \end{prooftree}
    \quad
    \begin{prooftree}
        \hypo{
            \phantom{\mpt{q}'_1 = \mpt{target(q$_1$)}}
        }
        \infer1{
          \mpt{p} = \mpt{target(propose(p,q))} \phantom{'}
        }
    \end{prooftree}
    \quad
    \begin{prooftree}
        \hypo{
            \mpt{q}' = \mpt{target(q)}
        }
        \infer1{
          \mpt{q}' = \mpt{target(resample(q))} \phantom{'}
        }
    \end{prooftree}
    \quad
    \begin{prooftree}
        \hypo{
          \mpt{q}'_1 = \mpt{target(q$_1$)} 
          \qquad
          \mpt{q}'_2 = \mpt{target(q$_2$)} 
        }
        \infer1{
          \mpt{compose(q$^\prime_2$,q$^\prime_1$)} 
          = 
          \mpt{target(compose(q$_2$,q$_1$))}
        }
    \end{prooftree}
\]
Since this transformation removes all instances of propose and resample forms, transformed programs \pt{q$'=$target(q)} define a simplified grammar \pt{q$' ::=$ p $\mid$ compose(q$'_1$, q$'_2$)}. We now define the denotational semantics for inference programs as
\[    
    \begin{prooftree}
        \hypo{
          \mpt{q}' = \mpt{target(q)}
        }
        \infer1{
        \begin{matrix}
            \den{\mpt{q($c$)}}
            =
            \den{\mpt{q$^\prime$($c$)}}
        \end{matrix}
        }
    \end{prooftree}
\]
The denotational semantics for a transformed program are trivially inherited from those for target programs when \pt{q}$'=$\pt{p}, whereas the denotational semantics for a composition \pt{compose(q$'_2$, q$'_1$)} are analogous to those of an extension \pt{extend(p, f)}
\[    
    \begin{prooftree}
        \hypo{
            \begin{matrix}
             c_1, \tau_1, \rho_1, w_1 \eval \mpt{q$^\prime_1$($c_0$)}
             \quad
             c_2, \tau_2, \rho_2, w_2 \eval \mpt{q$^\prime_2$($c_1$)}\\[4pt]
             \text{dom}(\rho_1) \cap \text{dom}(\rho_2) = \emptyset
             \quad 
             \tau_3 = \tau_1 \oplus \tau_2
             \end{matrix}
             }
        \infer1{\den{\mpt{compose(q$^\prime_2$, q$^\prime_1$)($c_0$)}}(\tau_3) 
                =
                \den{\mpt{q$^\prime_2$($c_1$)}}(\tau_2) \:
                \den{\mpt{q$^\prime_1$($c_0$)}}(\tau_1)
                }
    \end{prooftree}
\]

\section{Evaluation under Substitution} 
\label{apx:substitution}

To use an extended program as a target for a proposal, we need to define its evaluation under substitution. We define the operational semantics of this evaluation in a manner that is analogous to the unconditioned case
%FIXME: Do we need to enforce that no OBS RVs are on tau_f?
\[
\begin{prooftree}
  \hypo{
    \begin{matrix}
      c_1, \tau_1, \rho_1, w_1 \ceval{\tau_0}{\mpt{p($c_0$)}} \qquad
      c_2, \tau_2, \rho_2, w_2 \ceval{\tau_0}{\mpt{f($c_1$)}}
      \\[4pt]
      \text{dom}(\rho_1) \cap \text{dom}(\rho_2) = \emptyset 
      \qquad
      \text{dom}(\rho_2) = \text{dom}(\tau_2)
    \end{matrix}
  }
  \infer1{c_1, \tau_1 \oplus \tau_2, \rho_1 \oplus \rho_2, w_1 \cdot w_2
          ~\ceval{\tau_0}{\mpt{extend(p, f)($c_0$)}}}
\end{prooftree}
\]
% \[
% \begin{prooftree}
%   \hypo{
%     \begin{matrix}
%       c_1, \tau_1, \rho_1, w_1 \eval \mpt{p($c_0$)}[\tau_0] \qquad
%       c_2, \tau_2, \rho_2, w_2 \eval \mpt{f($c_1$)}[\tau_0]
%       \\[3pt]
%       w_2=1 \quad
%       \text{dom}(\rho_1) \cap \text{dom}(\rho_2) = \emptyset \quad
%       \displaystyle
%       u_2 = \!\!\!\! \prod_{\alpha \in \text{dom}(\rho_2)}  \!\!\!\! \rho_2(\alpha)
%     \end{matrix}
%   }
%   \infer1{c_1, \tau_1, \rho_1, w_1 \cdot u_2
%           ~\eval{}~ \mpt{extend(p, f)($c_0$)}[\tau_0]}
% \end{prooftree}
% \]
This rule, like other rules, defines a recursion. In this case, the recursion ensures that we can perform conditioned evaluation for any target program $\mpt{p}$.

We define evaluation under substitution of an inference program \pt{q} by performing an evaluation under substitution for the corresponding target program 
\[
\begin{prooftree}
  \hypo{
    \begin{matrix}
        c,\tau,\rho,w \eval \mpt{target(q)($c_0$)}[\tau_0]
    \end{matrix}
  }
  \infer1{c,\tau,\rho,w \eval \mpt{q($c_0$)}[\tau_0]}
\end{prooftree}
\]
As in the previous section, the transformed programs define a grammar \pt{q$' ::=$ p $\mid$ compose(q$'_1$, q$'_2$)}. In the base case $\mpt{q}' = \mpt{p}$, evaluation under substitution is defined as above. Evaluation under substitution of a program \pt{compose(q$'_1$, q$'_2$)} is once again defined by recursively evaluating inputs under substitution, as with the extend combinator
\[
\begin{prooftree}
  \hypo{
    \begin{matrix}
      c_1, \tau_1, \rho_1, w_1 \ceval{\tau_0}{\mpt{q$^\prime_1$($c_0$)}} \qquad
      c_2, \tau_2, \rho_2, w_2 \ceval{\tau_0}{\mpt{q$^\prime_2$($c_1$)}} \qquad
      \text{dom}(\rho_1) \cap \text{dom}(\rho_2) = \emptyset 
    \end{matrix}
  }
  \infer1{c_1, \tau_1 \oplus \tau_2, \rho_1 \oplus \rho_2, w_1 \cdot w_2
          ~\ceval{\tau_0}{\mpt{compose(q$^\prime_2$, q$^\prime_1$)($c_0$)}}}
\end{prooftree}
\]
Note that the operational semantics do not rely on evaluation under substitution of inference programs \pt{q}, since conditional evaluation is only every performed for target programs. However, the proofs in Section~\ref{apx:proper_weighting_proofs} do make use of the definition of the prior under substitution, which is identical to the definition for primitive programs
\[
\begin{prooftree}
  \hypo{
    c_1, \tau_1, \rho_1, w_1 \ceval{\tau_0}{\mpt{q($c_0$)}} 
  }
  \infer1{
    \displaystyle
    \den{\mpt{p($c_0$)}[\tau_0]}_p(\tau_1)
    ~=~
    p_{q[\tau_0]}(\tau_1 ; c_0)
    ~= 
    \!\!\!\!
    \prod_{\alpha \in \text{dom}(\tau_1) \setminus \text{dom}(\tau_0)}
    \!\!\!\!
    \rho_1(\alpha)
  }
\end{prooftree}
\]

\section{Evaluation in Context}
\label{apx:objective_computation}
To accumulating loss in the combinators framework, we reframe our operational semantics so that they
are evaluated in a context of a user-defined objective function $\ell : (\rho_q, \rho_p, w, v) \to \mathbb{R}$
and an accumulated loss $\mathcal{L}$ which is initialized to $0 \in \mathbb{R}$.
To denote an expression $\cdot$ evaluated in the context of $\mathcal{L}$ and $\ell$, we describe the notation
$$
\langle \mathcal{L}, \ell, \cdot\rangle.
$$

To examine traced evaluation $c, \tau, \rho, w  \eval \mpt{f}(c')$, in context we explicitly denotes both the input and output context,

\[
\langle \mathcal{L}, \ell, (c, \tau, \rho, w) \rangle \eval \langle \mathcal{L}, \ell, \mpt{f}(c') \rangle.
\]

\newcommand{\wrinkle}[2]{\langle #1 \rangle \Lsquigarrow{\phantom{in}} \langle #2 \rangle}
Note that $\ell$ is initialized by the user and does not change during program execution.
To account for this and lighten the syntax, we define syntactic sugar of:
%Additionally, in the circumstance where loss does not change, we can define syntactic sugar of:
\[
%c, \tau, \rho, w  \doubleEval \mpt{f}(c')
\wrinkle{\mathcal{L}, (c, \tau, \rho, w)}{\mathcal{L}', \mpt{f}(c')}
\]

When reframing combinators to accumulate loss, the operational semantics of \pt{compose}, \pt{extend}, and \pt{resample} only thread their input $\mathcal{L}$ through program execution.
% technically we can exclude extend here, but I've included it for completeness.
For these combinators, we simply exchange the traced evaluation of $\eval$ with our syntactic sugar of $\Lsquigarrow{\phantom{in}}$ as seen in Figure~\ref{fig:loss-semantics}.

\pt{propose} is the only combinator in which loss is accumulated
\[
\begin{prooftree}
  \hypo{
    \begin{matrix}
      \mathcal{L}_2 = \ell (\rho_1, \rho_2, w_1, w_2/ u_1) + \mathcal{L}_1  \\
      %\wrinkle{\mathcal{L}_1, (c_1, \tau_1, \rho_1, w_1)}{\mathcal{L}_0, \mpt{q($c_0$)}} \quad
      \wrinkle{\mathcal{L}_1, (c_1, \tau_1, \rho_1, w_1)}{\mathcal{L}_0, \mpt{q($c_0$)}} \quad
      %\wrinkle{\mathcal{L}_1, (c_2, \tau_2, \rho_2, w_2)}{\mathcal{L}_1, \mpt{p($c_0$)}[\tau_1]} \\
      c_2, \tau_2, \rho_2, w_2 ~\eval{}~ \mpt{p($c_0$)}[\tau_1] \\
      c_3, \tau_3, \rho_3, w_3 \: \ceval{\tau_2}{\mpt{marginal(p)($c_0$)}} \\
      \displaystyle
      u_1 = \prod_{\alpha \in
                          \text{dom}(\rho_1)
                             \setminus (\text{dom}(\tau_1)
                                        \setminus \text{dom}(\tau_2))}
                 \rho_1(\alpha)
    \end{matrix}
  }
  \infer1{\langle \mathcal{L}_2, \ell, (c_3, \tau_3, \rho_3,~ w_2 \cdot w_1 / u_1) \rangle 
          ~\eval{}~ \langle \mathcal{L}_0, \ell, \mpt{propose(p, q)($c_0$)} \rangle}
\end{prooftree}.
\]

%To fix the wrinkle, we add new sugar
%\[
%\wrinkle{\mathcal{L}, (c, \tau, \rho, w)}{\mathcal{L}', \mpt{f}(c')}
%\]

%and define all other combinators,

\begin{table}[!t]
     \resizebox{0.67\textwidth}{!}{\begin{minipage}{\textwidth}
     \centering
     \begin{tabular}{c}
       \begin{tabular}{cc}
       {
       \begin{prooftree}
         \hypo{
           \begin{matrix}
             \wrinkle{\mathcal{L}_1, (c_1, \tau_1, \rho_1, w_1)}{\mathcal{L}_0, \mpt{q$_1$($c_0$)}} \quad
             \wrinkle{\mathcal{L}_2, (c_2, \tau_2, \rho_2, w_2)}{\mathcal{L}_1, \mpt{q$_2$($c_1$)}} \\[3pt]
             \text{dom}(\rho_1) \cap \text{dom}(\rho_2) = \emptyset
           \end{matrix}
         }
         \infer1{
             \wrinkle{\mathcal{L}_2, (c_2, \tau_2 \oplus \tau_1, \rho_2 \oplus \rho_1,~ w_2 \cdot w_1)}{\mathcal{L}_0, \mpt{compose(q$_2$, q$_1$)($c_0$)}}}
          
       \end{prooftree}
       } & {
       \begin{prooftree}
         \hypo{
           \begin{matrix}
             \wrinkle{\mathcal{L}_1, (c_1, \tau_1, \rho_1, w_1)}{\mathcal{L}_0, \mpt{p($c_0$)}} \quad
             \wrinkle{\mathcal{L}_2, (c_2, \tau_2, \rho_2, w_2)}{\mathcal{L}_1, \mpt{f($c_1$)}} \\[3pt]
             \text{dom}(\rho_1) \cap \text{dom}(\rho_2) = \emptyset 
             \qquad
             \text{dom}(\rho_2) = \text{dom}(\tau_2)
           \end{matrix}
         }
         \infer1{
             \wrinkle{\mathcal{L}_2, (c_2, \tau_1 \oplus \tau_2, \rho_1 \oplus \rho_2, w_1 \cdot w_2)}{\mathcal{L}_0,
             \mpt{extend(p, f)($c_0$)}}
                 }
       \end{prooftree}
       } 
       \end{tabular}
       \\[30pt]
       {
       \begin{prooftree}
         \hypo{
           \begin{matrix}
             \wrinkle{\mathcal{L}_1, (\v{c}_1, \v{\tau}_1, \v{\rho}_1, \v{w}_1 )}{\mathcal{L}_0, \mpt{q($\v{c}_0$)}} \quad
             \v{a}_1 \sim \textsc{resample}(\v{w}_1) \\[3pt]
             \v{c}_2, \v{\tau}_2, \v{\rho}_2 = \textsc{reindex}(\v{a}_1, \v{c}_1, \v{\tau}_1, \v{\rho}_1) \qquad
             \v{w}_2 = \textsc{mean}(\v{w}_1)
           \end{matrix}
         }
         \infer1{
             \wrinkle{\mathcal{L}_1, (\v{c}_2, \v{\tau}_2, \v{\rho}_2, \v{w}_2)}{\mathcal{L}_0, \mpt{resample(q)($\v{c}_0$)}}}
       \end{prooftree}
       } \\
    \end{tabular}
    \end{minipage}}
    \caption{Operational semantics for evaluating \pt{compose}, \pt{extend}, and \pt{resample} combinators in the context of a loss function $\ell$ and accumulated loss $\mathcal{L}$.}
    \label{fig:loss-semantics}
\end{table}

\section{Proper weighting of programs}

\label{apx:proper_weighting_proofs}
\begin{lemma}[Strict proper weighting of the extend combinator]
\label{lem:spw_extend}
Evaluation of a target program $\mpt{p}_2 = \mpt{extend(p$_1$, f)}$ is strictly properly weighted for its unnormalized density $\den{\mpt{p}_2}_\gamma$ when evaluation of \pt{p}$_1$ is strictly properly weighted for $\den{\mpt{p}_1}_\gamma$.
\end{lemma}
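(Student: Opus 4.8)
The plan is to verify Definition~\ref{def:proper_weighting} directly with constant $\mathcal{C}_{p_2}(c_0)=1$, exploiting the sequential structure of the \texttt{extend} rule. A single evaluation of $\mathtt{extend(p_1,f)}(c_0)$ first draws $(\tau_1,w_1,c_1)$ from the evaluation of $p_1(c_0)$ and then, conditionally on the returned arguments $c_1$, draws $(\tau_2,w_2)$ from the evaluation of $f(c_1)$; it reports $\tau_3 = \tau_1 \oplus \tau_2$ and $w = w_1 w_2$. So for an arbitrary measurable $h$ I would decompose the expectation by the tower property,
\begin{align*}
\mathbb{E}_{p_2(c_0)}\bigl[w\,h(\tau_3)\bigr]
= \mathbb{E}_{p_1(c_0)}\Bigl[\, w_1 \; \mathbb{E}_{f(c_1)}\bigl[\,w_2\, h(\tau_1 \oplus \tau_2)\,\bigr]\Bigr],
\end{align*}
noting that the return value $c_1$ is a measurable function of $\tau_1$, so the inner expectation is itself a measurable function of $\tau_1$ that the outer expectation may legitimately integrate against.

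Next I would collapse the two expectations from the inside out. Since $f$ is a primitive program, Proposition~\ref{prop:primitive_proper_weighting} gives strict proper weighting of $f(c_1)$; applying it with $\tau_1$ (hence $c_1$) held fixed and test function $\tau_2 \mapsto h(\tau_1 \oplus \tau_2)$ yields
\begin{align*}
\mathbb{E}_{f(c_1)}\bigl[w_2\, h(\tau_1 \oplus \tau_2)\bigr]
= \int d\tau_2 \; \gamma_f(\tau_2; c_1)\, h(\tau_1 \oplus \tau_2).
\end{align*}
Substituting this back and invoking the hypothesis that $p_1$ is strictly properly weighted for $\gamma_{p_1}$ (with the measurable test function $\tau_1 \mapsto \int d\tau_2\, \gamma_f(\tau_2; c_1)\, h(\tau_1 \oplus \tau_2)$) converts the outer expectation into an integral against $\gamma_{p_1}$. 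The denotational semantics of \texttt{extend} then supply the factorization $\gamma_{p_2}(\tau_1 \oplus \tau_2; c_0) = \gamma_f(\tau_2; c_1)\,\gamma_{p_1}(\tau_1; c_0)$, so the product of densities appearing under the resulting iterated integral is exactly $\gamma_{p_2}$ evaluated at $\tau_3 = \tau_1 \oplus \tau_2$.

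The remaining step, which I expect to be the main obstacle, is purely measure-theoretic: re-expressing the iterated integral $\int d\tau_1 \int d\tau_2$ as a single integral $\int d\tau_3$ over the support $\Omega_{p_2}$. This is where the side conditions of the \texttt{extend} rule earn their keep. The condition $\mathrm{dom}(\rho_1) \cap \mathrm{dom}(\rho_2) = \emptyset$ forces the address sets of $\tau_1$ and $\tau_2$ to be disjoint, so the trace measure on $\tau_3$ factorizes as the product of the measures on the two address blocks, and $(\tau_1,\tau_2) \mapsto \tau_1 \oplus \tau_2$ is a measurable bijection onto $\Omega_{p_2}$ (which is defined implicitly as precisely the set of such combined traces); a Fubini argument then identifies the iterated integral with $\int d\tau_3\; \gamma_{p_2}(\tau_3; c_0)\, h(\tau_3)$. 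Chaining the three equalities gives $\mathbb{E}_{p_2(c_0)}[w\,h(\tau_3)] = \int d\tau_3\; \gamma_{p_2}(\tau_3; c_0)\, h(\tau_3)$, which is Definition~\ref{def:proper_weighting} with $\mathcal{C}_{p_2}(c_0)=1$; hence evaluation of $\mathtt{extend(p_1,f)}$ is strictly properly weighted, as claimed.
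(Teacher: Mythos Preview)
Your proposal is correct and follows essentially the same route as the paper: tower the expectation over the sequential evaluation, collapse the inner expectation for $f$ and the outer for $p_1$ via their respective proper-weighting statements, use the denotational factorization $\gamma_{p_2}(\tau_1\oplus\tau_2;c_0)=\gamma_f(\tau_2;c_1)\,\gamma_{p_1}(\tau_1;c_0)$, and identify the iterated integral with a single integral over $\tau_3$. The only cosmetic difference is that the paper exploits the side condition $\text{dom}(\rho_2)=\text{dom}(\tau_2)$ (no observed variables in $f$) to record $w_2=1$ and $\gamma_f=p_f$ up front, whereas you invoke Proposition~\ref{prop:primitive_proper_weighting} for $f$ with $w_2$ left general; both lead to the same inner integral, and your version has the mild advantage of making explicit why the Fubini step is justified.
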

\begin{proof}
Recall from Section~\ref{subsec:operational}, that the program \pt{p$_{2}$} denotes a composition 
\begin{equation}
\begin{prooftree}
    \hypo{
      c_{1}, \tau_{1}, \rho_{1}, w_{1} \eval \mpt{p$_{1}$(c$_0$)}
      \quad
      c_f, \tau_f, \rho_f, w_f \eval \mpt{f(c$_1$)}
      \quad
      \tau_2 = \tau_{1} \oplus \tau_f
      \quad
      \rho_2 = \rho_{1} \oplus \rho_f
    }
    \infer1{
      c_{1}, \tau_{2}, \rho_{2}, w_{1} \eval \mpt{p$_2$($c_0$)}
      \qquad
      \den{\mpt{p}_{2}}_\gamma(\tau_{2} ; c_0) 
      %= \den{\mpt{extend(p$_{1}$, f)}}_\gamma(\tau_{1} \oplus \tau_f ; c_0)
      = \den{\mpt{p}_{1}}_\gamma(\tau_{1} ; c_0)
        \cdot
        \den{\mpt{f}}_\gamma(\tau_{f} ; c_1)
    }
\end{prooftree}
\end{equation}
Our induction hypothesis is that \pt{p$_{1}$} is strictly properly weighted for its density $\den{\mpt{p}_{1}}_\gamma := \gamma_{1}$. Since the primitive program \pt{f} may only include unobserved variables, $w_f=1$ and its evaluation is properly weighted relative to the prior density $\den{\mpt{f}}_\gamma = \den{\mpt{f}}_p := p_f$. Strict proper weighting with respect to $\den{\mpt{p$_2$}}_\gamma = \gamma_2$ follows directly from definitions
\begin{align}
    \begin{split}
    \mathbb{E}_{\mpt{p$_2$($c_0$)}}
    \Big[
        w_1
        \:
        h(\tau_2)
    \Big]
    &=
    \mathbb{E}_{\mpt{p$_1$($c_0$)}}
    \Big[
    w_1
    \:
    \mathbb{E}_{\mpt{f($c_1$)}}
    \big[
        h(\tau_1 \oplus \tau_f)
    \big]
    \Big],
    \\
    % &=
    % Z_1(c_0) \:
    % \int
    % \!\! d\tau_1 \:
    % \pi_1(\tau_1 ; c_0)
    % \int
    % \!\! d\tau_f \:
    % p_f(\tau_f ; c_1) \: h(\tau_1 \oplus \tau_f),
    % \\
    &=
    \int
    \!\! d\tau_1 \:
    \gamma_1(\tau_1 ; c_0)
    \int
    \!\! d\tau_f \:
    p_f(\tau_f ; c_1) \: h(\tau_1 \oplus \tau_f),
    \\
    &=
    \int
    \!\! d\tau_1 d\tau_f \:
    \gamma_2(\tau_1 \oplus \tau_f ; c_0) \:
    h(\tau_1 \oplus \tau_f)
    \\
    &=
    \int
    \!\! d\tau_2 \:
    \gamma_2(\tau_2 ; c_0) \:
    h(\tau_2)
    % &=
    % Z_2(c_0)
    % \int
    % \!\! d\tau_2 \:
    % \pi_2(\tau_2 ; c_0) \:
    % h(\tau_2)    
    .
    \end{split}
\end{align}
Note in particular that $Z_1(c_0) = Z_2(c_0)$, since the normalizing constant $Z_f(c_1) = 1$ for the program \pt{f}.

\end{proof}

% \begin{lemma}[Strict proper weighting of the marginal combinator] \label{lem:spw_marginal} Evaluation of a program $f = \mpt{marginal(p)}$ is strictly properly weighted for its unnormalized density $\den{\mpt{f}}_\gamma$. 
% \end{lemma}
% \begin{proof}
% Since the marginal of a target program is a primitive program, this follows trivally from Proposition~\ref{prop:primitive_proper_weighting}, which states that evaluation of any primitive program is properly weighted for its unnormalized density.
% \end{proof}

\begin{theorem}[Strict proper weighting of target programs]
\label{thm:target_proper_weighting} Evaluation of a target program \pt{p} is (strictly) properly weighted for the unnormalized density $\den{\mpt{p}}_\gamma$ that it denotes.
\end{theorem}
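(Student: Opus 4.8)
The plan is to prove the theorem by structural induction on the grammar of target programs, $\mpt{p} ::= \mpt{f} \mid \mpt{extend(p, f)}$. The two ingredients needed for the induction are already in hand: Proposition~\ref{prop:primitive_proper_weighting} shows that primitive programs are strictly properly weighted, and Lemma~\ref{lem:spw_extend} shows that \pt{extend} preserves strict proper weighting of its target argument. The theorem then amounts to assembling these two facts along the recursive structure of \pt{p}, so the work is almost entirely delegated to the base case and the lemma.

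For the base case I would take $\mpt{p} = \mpt{f}$, a primitive program. Its denotational density $\den{\mpt{p}}_\gamma = \den{\mpt{f}}_\gamma$ is inherited directly from the axiomatic semantics of the modeling language, and Proposition~\ref{prop:primitive_proper_weighting} gives strict proper weighting immediately, with constant $\mathcal{C}_f(c')=1$. For the inductive step I would take $\mpt{p} = \mpt{extend(p$_1$, f)}$ and adopt as the induction hypothesis that evaluation of the strictly smaller sub-expression \pt{p$_1$} is strictly properly weighted for $\den{\mpt{p}_1}_\gamma$. Because \pt{p$_1$} is a structurally smaller target program, the recursion is well-founded and terminates at a primitive program. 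Applying Lemma~\ref{lem:spw_extend} with this hypothesis then yields that $\mpt{p} = \mpt{extend(p$_1$, f)}$ is strictly properly weighted for $\den{\mpt{p}}_\gamma$, closing the step.

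The main obstacle is not the induction itself, which is essentially bookkeeping, but verifying that the density $\den{\mpt{p}}_\gamma$ named in the statement is exactly the one for which Lemma~\ref{lem:spw_extend} proves proper weighting. I would confirm this by appealing to the denotational semantics of \pt{extend} in Appendix~\ref{apx:denotational_semantics}, which defines $\den{\mpt{extend(p, f)}}_\gamma(\tau_2)$ as the product $\den{\mpt{p}}_\gamma(\tau_1)\cdot\den{\mpt{f}}_\gamma(\tau_f)$ over disjoint-domain traces $\tau_2 = \tau_1 \oplus \tau_f$. This is precisely the factorization that the proof of Lemma~\ref{lem:spw_extend} exploits when it integrates the kernel variables $\tau_f$ against the prior $p_f$ (using $w_f=1$ and $Z_f=1$, since \pt{f} carries no observations) and recovers $Z_1(c_0)=Z_2(c_0)$. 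Once that alignment is checked, the base case and inductive step together establish the claim for every target program generated by the grammar.
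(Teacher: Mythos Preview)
Your proposal is correct and follows essentially the same approach as the paper: structural induction on the grammar $\mpt{p} ::= \mpt{f} \mid \mpt{extend(p, f)}$, invoking Proposition~\ref{prop:primitive_proper_weighting} for the base case and Lemma~\ref{lem:spw_extend} for the inductive step. Your additional remarks about verifying the alignment of $\den{\mpt{p}}_\gamma$ with the density used in Lemma~\ref{lem:spw_extend} are more explicit than the paper's proof, which simply cites the two ingredients, but the underlying argument is identical.
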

\begin{proof}
By induction on the grammar $\mpt{p} ::= \mpt{f} \mid \mpt{extend(p, f)}$. %\mid \mpt{marginal(p)}$.
\begin{itemize}
    \item \emph{Base case}: $\mpt{p} = \mpt{f}$. This follows from Proposition~\ref{prop:primitive_proper_weighting}.
    \item \emph{Inductive case}: $\mpt{p}_2 = \mpt{extend(p$_1$, f)}$. This follows from Lemma~\ref{lem:spw_extend}.
    %\item \emph{Inductive case}: $\mpt{f} = \mpt{marginal(p)}$. This follows from Lemma~\ref{lem:spw_marginal}.
\end{itemize}
\end{proof}

\begin{lemma}[Strict proper weighting of the resample combinator]
\label{lem:resample_proper_weighting}
Evaluation of a program $\mpt{q}_2 = \mpt{resample(q$_1$)}$ is strictly properly weighted for its unnormalized density $\den{\mpt{q}_2}_\gamma$ when evaluation of \mpt{q}$_1$ is strictly properly weighted for $\den{\mpt{q}_1}_\gamma$.
\end{lemma}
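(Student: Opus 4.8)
The plan is to reduce everything to the induction hypothesis on $\mpt{q}_1$, using the fact that resampling leaves the denoted density unchanged. First I would observe that, via the $\mpt{target}$ transformation of Appendix~\ref{apx:denotational_semantics}, $\mpt{resample(q$_1$)}$ and $\mpt{q}_1$ denote the same unnormalized density, so that $\gamma_{q_2} = \gamma_{q_1}$, $Z_{q_2} = Z_{q_1}$, and $\pi_{q_2} = \pi_{q_1}$. Consequently it suffices to prove that, for every measurable $h$, a single output pair satisfies $\mathbb{E}_{\mpt{q$_2$($c_0$)}}[w^l_2\, h(\tau^l_2)] = Z_{q_1}(c_0)\,\mathbb{E}_{\pi_{q_1}(\cdot\,;c_0)}[h]$; by exchangeability of the resampled population the choice of output slot $l$ is immaterial.

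Next I would unfold the resample rule. Substituting $w^l_2 = \textsc{mean}(\v{w}_1) = \tfrac{1}{L}\sum_{l'} w^{l'}_1$ and $\tau^l_2 = \tau^{a^l_1}_1$ from the $\textsc{reindex}$ step, and conditioning on the incoming population $(\v{\tau}_1,\v{w}_1)$, I would invoke the defining unbiasedness property of $\v{a}_1 \sim \textsc{resample}(\v{w}_1)$, namely that index $j$ is selected with marginal probability $w^j_1 / \sum_{l'} w^{l'}_1$. The average-weight prefactor then cancels the normaliser, giving
\begin{align*}
\mathbb{E}\big[w^l_2\, h(\tau^l_2) \,\big|\, \v{\tau}_1, \v{w}_1\big]
&= \Big(\tfrac{1}{L}\textstyle\sum_{l'} w^{l'}_1\Big) \sum_j \frac{w^j_1}{\sum_{l'} w^{l'}_1}\, h(\tau^j_1)
= \tfrac{1}{L}\sum_j w^j_1\, h(\tau^j_1).
\end{align*}
Taking the expectation over the incoming population, and using that the vectorized evaluation of $\mpt{q}_1$ produces $L$ independent copies of the single-sample evaluation, each term obeys $\mathbb{E}[w^j_1 h(\tau^j_1)] = \mathbb{E}_{\mpt{q$_1$($c_0$)}}[w_1 h(\tau_1)]$, so the $L$-fold sum collapses and the factor $1/L$ disappears. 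Applying the induction hypothesis that $\mpt{q}_1$ is strictly properly weighted yields $\mathbb{E}_{\mpt{q$_1$($c_0$)}}[w_1 h(\tau_1)] = Z_{q_1}(c_0)\,\mathbb{E}_{\pi_{q_1}(\cdot\,;c_0)}[h]$; combined with $Z_{q_2}=Z_{q_1}$ and $\pi_{q_2}=\pi_{q_1}$ this establishes the claim with constant $\mathcal{C}_{q_2}=1$.

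The main obstacle I anticipate lies in the treatment of the resampling scheme itself. The clean marginal identity $P(a^l_1=j\mid\v{w}_1)=w^j_1/\sum_{l'}w^{l'}_1$ holds exactly for multinomial resampling, but the implementation uses systematic resampling, for which this identity is only recovered at the level of expected offspring counts, $\mathbb{E}[\,|\{l: a^l_1=j\}|\mid\v{w}_1] = L\,w^j_1/\sum_{l'}w^{l'}_1$. I would therefore phrase the core computation in terms of the summed offspring counts over the whole output population, relying only on this unbiasedness property, and then invoke exchangeability of the output to transfer the conclusion back to any single output slot. Making this reduction precise, rather than silently assuming independent multinomial selection, is the delicate step; everything else is a routine unfolding of the operational semantics and an appeal to the induction hypothesis.
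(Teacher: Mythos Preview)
Your argument is essentially identical to the paper's: both note that $\mpt{resample(q$_1$)}$ and $\mpt{q}_1$ denote the same density, substitute $w_2^l=\tfrac{1}{L}\sum_{l'}w_1^{l'}$ and $\tau_2^l=\tau_1^{a_1^l}$, use the per-slot unbiasedness $\mathbb{E}[\mathbb{I}[a_1^l=k]\mid\v{w}_1]=w_1^k/\sum_{l'}w_1^{l'}$ to cancel the normaliser, and finish with the induction hypothesis on each $\mathbb{E}[w_1^k h(\tau_1^k)]$.

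The one genuine difference is your caveat about systematic resampling. The paper simply \emph{postulates} the per-slot identity $\mathbb{E}_{\textsc{resample}(\v{w}_1)}[\mathbb{I}[a^l=k]]=w_1^k/\sum_{l'}w_1^{l'}$ as the defining requirement on $\textsc{resample}$ and proceeds; it does not separately justify it for systematic resampling or fall back to the offspring-count formulation you sketch. So your proposed detour through expected offspring counts and exchangeability is more cautious than what the paper actually does, but it is not needed to match the paper's level of rigor.
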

\begin{proof}
In Section~\ref{subsec:operational}, we defined the operational semantics for the \mpt{resample} combinator as
\[
\begin{prooftree}
  \hypo{
    \begin{matrix}
      \v{c}_1, \v{\tau}_1, \v{\rho}_1, \v{w}_1 \eval \mpt{q}_1(\v{c}_0) \qquad
      \v{a}_1 \sim \textsc{resample}(\v{w}_1) \qquad
      \v{c}_2, \v{\tau}_2, \v{\rho}_2 = \textsc{reindex}(\v{a}_1, \v{c}_1, \v{\tau}_1, \v{\rho}_1) \qquad
      \v{w}_2 = \textsc{mean}(\v{w}_1)
    \end{matrix}
  }
  \infer1{\v{c}_2, \v{\tau}_2, \v{\rho}_2, \v{w}_2
          ~\eval{}~ \mpt{resample(q$_1$)}(\v{c}_0)}
\end{prooftree}.
\]
Whereas other combinators act on samples individually, the resample combinator accepts and returns a collection of samples. Bold notation signifies tensorized objects. For notational simplicity, we assume in this proof that all objects contain a single dimension, e.g.~$\v{\tau} = [\tau^1, \dots, \tau^L]$, which is also the dimension along which resampling is performed, but this is not a requirement in the underlying implementation.

Let $\den{\mpt{q}_2(c_0)}_\gamma = \den{\mpt{q}_1(c_0)}_\gamma = \gamma(\cdot \:; c_0)$ denote the unnormalized density of the program. Our goal is to demonstrate that outgoing samples are individually strictly properly weighted for the unnormalized density $\gamma(\cdot \:; c_0)$,
\begin{align}
    \mathbb{E}_{\mpt{q$_2$($c_0$)}} 
    \left[ 
        w^l_2 \: h(\tau^l_2) 
    \right] 
    &= 
    \int 
    d \tau'
    \:
    \gamma(\tau' ; c_0)
    \:
    h(\tau'),
\end{align}
under the inductive hypothesis that incoming samples are strictly properly weighted,
\begin{align}
    \mathbb{E}_{\mpt{q$_1$($c_0$)}} 
    \left[ 
        w^l_1 \: h(\tau^l_1) 
    \right] 
    &= 
    \int 
    d \tau'
    \:
    \gamma(\tau' ; c_0)
    \:
    h(\tau').
\end{align}
The resample combinator randomly selects ancestor indices $\v{a}_1 \sim \textsc{resample}(\v{w}_1)$. Informally, this procedure selects $a^l_1 = k$ with probability proportional to $w^k_1$. More formally, this procedure must satisfy   
\begin{align}
    \mathbb{E}_{\textsc{resample}(\v{w}_1)}\left[\mathbb{I}[a^{l} = k]\right] = \frac{w^k_1}{\sum_{l} w^l_1}.
\end{align}
The outgoing return value, trace, and weight, are then reindexed according to $\v{a}_1$, whereas the outgoing weights are set to the average of the incoming weights
\begin{align}
    c^l_2 &=  c_1^{a_1^l}, &
    \tau^l_2 &= \tau^{a_1^l}_1, &
    \rho^l_2 &= \rho^{a_1^l}_1, &
    w^l_2 &= \frac{1}{L} \sum_{l'=1}^L w_1^{l'}.
\end{align}
Strict proper weighting now follows directly from definitions
\begin{align}
  \begin{split}
    \mathbb{E}_{\mpt{q}_2(c_0)} \left[ w^l_2 \: h(\tau^l_2) \right] 
    &= \mathbb{E}_{\mpt{q}_2(c_0)} \left[ 
         \Big(\frac{1}{L} \sum_{l'} w^{l'}_1 \Big) \: h(\tau_1^{a^l_1}) 
      \right] 
    \\
    &
    = \mathbb{E}_{\mpt{q}_1(c_0)} \left[ 
         \Big( \frac{1}{L} \sum_{l'} w^{l'}_1 \Big) \:
         \mathbb{E}_{\textsc{resample}(\v{w}_1)} \left[ 
            h(\tau_1^{a^l_1}) 
         \right]  
      \right] 
    \\
    &
    = \mathbb{E}_{\mpt{q}_1(c_0)} \left[ 
         \Big( \frac{1}{L} \sum_{l'} w^{l'}_1 \Big) \:
         \mathbb{E}_{\textsc{resample}(\v{w}_1)} \left[ 
            \sum_{k}
            \mathbb{I}[a_1^l = k] \:
            h(\tau_1^{k}) 
         \right]  
      \right] 
    \\
    &
    = \mathbb{E}_{\mpt{q}_1(c_0)} \left[ 
         \Big( \frac{1}{L} \sum_{l'} w^{l'}_1 \Big) \:
         \sum_k \:
         \mathbb{E}_{\textsc{resample}(\v{w}_1)} \left[ 
            \mathbb{I}[a_1^l = k]
         \right]  
         h(\tau_1^{k}) 
      \right] 
    \\
    &
    = \mathbb{E}_{\mpt{q}_1(c_0)} \left[ 
         \Big( \frac{1}{L} \cancel{\sum_{l'} w^{l'}_1} \Big) \:
         \sum_k
         \frac{w_1^k}{\cancel{\sum_{l''} w_1^{l''}}}
         \:
         h(\tau_1^{k}) 
      \right] 
    \\
    &
    = \mathbb{E}_{\mpt{q}_1(c_0)} \left[ 
         \frac{1}{L}
         \sum_k
         w_1^k
         \:
         h(\tau_1^{k}) 
      \right]
    =
    \frac{1}{L}
    \sum_k
    \:
    \mathbb{E}_{\mpt{q}_1(c_0)} \left[ 
      w_1^k
      \:
      h(\tau_1^{k})
    \right]
    % \\
    % &=
    % \frac{Z_{\mpt{q}_1}(c_0)}{L}
    % \sum_{k}
    %  \int d \tau^k_1 \: \pi(\tau^k_1 ; c_0) \: h(\tau^k_1)
    % =
    % Z_{\mpt{q}_1}(c_0)
    % \int d \tau' \: \pi(\tau' ; c_0) \: h(\tau')
    .
  \end{split}
\end{align}
\end{proof}

\begin{lemma}[Strict proper weighting of the compose combinator]
\label{lem:compose_proper_weighting}
Evaluation of the program $\mpt{q}_3 = \mpt{compose(q$_1$, q$_2$)}$ is strictly properly weighted for its unnormalized density $\den{\mpt{compose(q$_1$, q$_2$)}}_\gamma$ when evaluation of \pt{q$_1$} and \pt{q$_2$} is strictly properly weighted for the unnormalized densities $\den{\mpt{q$_1$}}_\gamma$ and $\den{\mpt{q$_2$}}_\gamma$.
\end{lemma}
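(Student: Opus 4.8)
The plan is to mirror the proof of Lemma~\ref{lem:spw_extend} for \pt{extend}, of which \pt{compose} is the structural analogue; the only substantive difference is that here the second sub-program carries a nontrivial weight and normalizing constant rather than being a weight-one kernel. Adopting the labeling of the operational semantics, evaluation of $\mpt{q}_3$ first runs $\mpt{q}_1$ on input $c_0$ to obtain $(c_1, \tau_1, \rho_1, w_1)$, then runs $\mpt{q}_2$ on the intermediate return value $c_1$ to obtain $(c_2, \tau_2, \rho_2, w_2)$, and returns the merged trace $\tau_2 \oplus \tau_1$ with weight $w_2 \cdot w_1$. Writing $\gamma_1 := \den{\mpt{q}_1}_\gamma$, $\gamma_2 := \den{\mpt{q}_2}_\gamma$, and $\gamma_3 := \den{\mpt{q}_3}_\gamma$, the denotational semantics of \pt{compose} factor the composed density as $\gamma_3(\tau_2 \oplus \tau_1 ; c_0) = \gamma_2(\tau_2 ; c_1)\,\gamma_1(\tau_1 ; c_0)$, where $c_1$ is the intermediate return value. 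Given this factorization, the target identity unfolds as a nested expectation with two successive applications of the inductive hypothesis.

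Concretely, I would first rewrite $\mathbb{E}_{\mpt{q}_3(c_0)}[\,w_2 w_1\, h(\tau_2 \oplus \tau_1)\,]$ as an iterated expectation: an outer expectation over $\mpt{q}_1(c_0)$ carrying $w_1$, of an inner expectation over $\mpt{q}_2(c_1)$ carrying $w_2$. Holding $\tau_1$ (and hence $c_1$) fixed, I apply strict proper weighting of $\mpt{q}_2$ to the test function $\tau_2 \mapsto h(\tau_2 \oplus \tau_1)$, turning the inner expectation into $\int d\tau_2\,\gamma_2(\tau_2 ; c_1)\,h(\tau_2 \oplus \tau_1)$. The result is a measurable function of $\tau_1$ alone, to which I apply strict proper weighting of $\mpt{q}_1$, obtaining $\int d\tau_1\,\gamma_1(\tau_1 ; c_0) \int d\tau_2\,\gamma_2(\tau_2 ; c_1)\,h(\tau_2 \oplus \tau_1)$. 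I then invoke the \pt{compose} denotational rule to identify the integrand product as $\gamma_3(\tau_2 \oplus \tau_1 ; c_0)\,h(\tau_2 \oplus \tau_1)$, and finally fold the iterated integral over the disjoint trace domains into a single integral $\int d\tau\,\gamma_3(\tau ; c_0)\,h(\tau)$. Because no spurious multiplicative constant is introduced at any step, the proportionality constant is $\mathcal{C} = 1$ and strict proper weighting is preserved.

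I expect the main obstacle to be the dependence of the second sub-program's density $\gamma_2(\cdot\,; c_1)$ on the intermediate return value $c_1$, which is itself determined by the trace $\tau_1$ of $\mpt{q}_1$. Consequently the test function to which the first inductive hypothesis is applied is not $h$ but the $\tau_1$-dependent integral $\tau_1 \mapsto \int d\tau_2\,\gamma_2(\tau_2 ; c_1)\,h(\tau_2 \oplus \tau_1)$, so the argument hinges on Definition~\ref{def:proper_weighting} holding for \emph{all} measurable functions; I must therefore confirm that this composite is measurable in $\tau_1$. This is slightly more delicate than in the \pt{extend} case, where the inner program is a weight-one kernel with $Z = 1$ so that $Z_1 = Z_2$; here $\mpt{q}_2$ contributes its own weight and normalizing constant, but since only the $\gamma$-level factorization is used the normalizing constants never need to be computed. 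The remaining bookkeeping, namely that the disjointness condition $\text{dom}(\rho_1) \cap \text{dom}(\rho_2) = \emptyset$ makes $\tau_2 \oplus \tau_1$ a well-defined merge and legitimizes recombining the iterated integral into a single integral over the combined trace, is routine and parallels the \pt{extend} argument.
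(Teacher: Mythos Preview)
Your proposal is correct and follows essentially the same approach as the paper: unfold the expectation over $\mpt{q}_3$ into a nested expectation $\mathbb{E}_{\mpt{q}_1(c_0)}[w_1\,\mathbb{E}_{\mpt{q}_2(c_1)}[w_2\,h(\tau_2\oplus\tau_1)]]$, apply the inductive hypothesis first to the inner expectation and then to the outer one, and recombine the resulting iterated integral using the \pt{compose} denotational rule. Your remarks on the $\tau_1$-dependence of $c_1$ and the measurability of the composite test function are more explicit than the paper's treatment, which simply applies the two inductive hypotheses in sequence without comment.
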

\begin{proof} In Section~\ref{subsec:operational}, we defined the operational semantics for the compose combinator as
\[
\begin{prooftree}
  \hypo{
    \begin{matrix}
      c_1, \tau_1, \rho_1, w_1 \eval \mpt{q$_1$($c_0$)} \quad
      c_2, \tau_2, \rho_2, w_2 \eval \mpt{q$_2$($c_1$)} \quad
      \text{dom}(\rho_1) \cap \text{dom}(\rho_2) = \emptyset \quad
      \tau_3 = \tau_2 \oplus \tau_1 \quad
      \rho_3 = \rho_2 \oplus \rho_1 \quad
      w_3 = w_2 \cdot w_1 
    \end{matrix}
  }
  \infer1{c_2, \tau_3, \rho_3, w_3
          ~\eval{}~ \mpt{compose(q$_2$, q$_1$)($c_0$)}}
\end{prooftree},
\]
and its denotation as the product of conditional densities
\[
    \begin{prooftree}
        \hypo{
            \begin{matrix}
             c_1, \tau_1, \rho_1, w_1 \eval \mpt{q$_1$($c_0$)}
             \quad
             c_2, \tau_2, \rho_2, w_2 \eval \mpt{q$_2$($c_1$)}
             \qquad
             \text{dom}(\rho_1) \cap \text{dom}(\rho_2) = \emptyset
             \end{matrix}
             }
        \infer1{\den{\mpt{compose(q$_2$, q$_1$)($c_0$)}}(\tau_1 \oplus \tau_2) 
                =
                \den{\mpt{q$_2$($c_1$)}}(\tau_2) \:
                \den{\mpt{q$_1$($c_0$)}}(\tau_1)
                }
    \end{prooftree}.
\]
% To aid exposition we will introduce denotational semantics for the return value $\den{\mpt{q($c_0$)}}_v$ of a program
% \[
%     \begin{prooftree}
%         \hypo{
%             \begin{matrix}
%             p = \d
%              c_1, \tau_1, \rho_1, w_1 \ceval{\tau_0} \mpt{q($c_0$)}
%              \qquad 
%              \tau_1 = \tau_0
%              \end{matrix}
%              }
%         \infer1{c_1 = \den{\mpt{q($c_0$)}}_v(\tau_0)}
%     \end{prooftree}.
% \]
% The intuition behind this definition is as follows. If a  trace $\tau_0$ is in the support of the program \pt{q}, which is to say that there exists an evaluation that yields $\tau_0$, then evaluation under substitution of $\tau_0$ is completely deterministic 

% which is equivalent to the condition yields a trace $\tau_1 = \tau_0$, that 
Let $\den{\mpt{q}_3}_\gamma = \gamma_3$ denote the unnormalized density of the composition. We will show that, for any measurable function $h(\tau_3)$,
\begin{align*}
    \mathbb{E}_{\mpt{q$_3$($c_0$)}} \left[ 
      w_3 \: h(\tau_3)
    \right]
    =
    Z_3(c_0)
    \int 
    d\tau_3 
    \:
    \gamma_3(\tau_3 ; c_0) 
    \:
    h(\tau_3).
\end{align*}
We can express the expectation with respect to the program \pt{q}$_3$ as a nested expectation with respect to \pt{q}$_1$ and \pt{q}$_2$
% \begin{align*}
%     \mathbb{E}_{\mpt{q$_3$($c_0$)}} \left[ 
%       w_3 \: h(\tau_3)
%     \right]
%     =
%     \mathbb{E}_{\mpt{q$_1$($c_0$)}} \left[ 
%       \mathbb{E}_{\mpt{q$_2$($c_1$)}} \left[ 
%          w_2 \: w_1 \: h(\tau_2 \oplus \tau_1) 
%       \right] 
%     \right] 
%     % &= 
%     % Z_{\mpt{q}_1}(c_0) \: 
%     % Z_{\mpt{q}_2}(c_1) \:
%     % \mathbb{E}_{\pi_{\mpt{q}_1(\cdot; c_0)}} \left[ 
%     %   \mathbb{E}_{\pi_{\mpt{q}_2(\cdot; c_1)}} \left[ 
%     %     h(\tau_2 \oplus \tau_1) 
%     %   \right] 
%     % \right].
% \end{align*}
% We begin by moving $w_1$ out to the expectation upon which it actually depends
\begin{align*}
    \mathbb{E}_{\mpt{q$_3$($c_0$)}} \left[ 
      w_3 \: h(\tau_3)
    \right]
    &=
    % \mathbb{E}_{\mpt{q$_1$($c_0$)}} \left[ 
    %   \mathbb{E}_{\mpt{q$_2$($c_1$)}} \left[ 
    %      w_2 \: w_1 \: h(\tau_2 \oplus \tau_1) 
    %   \right] 
    % \right] 
    % = 
    \mathbb{E}_{\mpt{q$_1$($c_0$)}} \left[ 
      w_1 \: \mathbb{E}_{\mpt{q$_2$($c_1$)}} \left[ 
        w_2 \: h(\tau_2 \oplus \tau_1) 
      \right] 
    \right],
\end{align*}
Let $\den{\mpt{q}_1}_\gamma = \gamma_1$, $\den{\mpt{q}_2}_\gamma = \gamma_2$ of the inputs and their composition.  By the induction hypothesis, we can express rewrite both expectations as integrals with respect to $\gamma_1$ and $\gamma_2$. Strict proper weighting follows from definitions
\begin{align*}
    \mathbb{E}_{\mpt{q$_3$($c_0$)}} \left[ 
      w_3 \: h(\tau_3)
    \right]
    &=
    % \mathbb{E}_{\mpt{q$_1$($c_0$)}} \left[ 
    %   \mathbb{E}_{\mpt{q$_2$($c_1$)}} \left[ 
    %      w_2 \: w_1 \: h(\tau_2 \oplus \tau_1) 
    %   \right] 
    % \right] 
    % = 
    \int d\tau_1 \:
    \gamma_1(\tau_1 ; c_0)
    \int d\tau_2 ~
    \gamma_2(\tau_2 ; c_1)
    h(\tau_1 \oplus \tau_2 ; c_0)
    \\
    &=
    % \mathbb{E}_{\mpt{q$_1$($c_0$)}} \left[ 
    %   \mathbb{E}_{\mpt{q$_2$($c_1$)}} \left[ 
    %      w_2 \: w_1 \: h(\tau_2 \oplus \tau_1) 
    %   \right] 
    % \right] 
    % = 
    \int d\tau_1 
    \int d\tau_2 ~
    \gamma_1(\tau_1 ; c_0) \:
    \gamma_2(\tau_2 ; c_1) \:
    h(\tau_1 \oplus \tau_2 ; c_0)
    \\
    &=
    \int d\tau_3 \:
    \gamma_3(\tau_3 ; c_0) \: 
    h(\tau_3 ; c_0).
\end{align*}
% and invoking our inductive hypothesis first with respect to $\mpt{q}_1$ and then with respect to $\mpt{q}_2$, finding that
% \begin{align*}
%     \mathbb{E}_{\mpt{q}_1(c_0)} \left[ w_1  \mathbb{E}_{\mpt{q}_2(c_1)} \left[ w_2 h(\tau_2 \oplus \tau_1) \right] \right] &= Z_{\mpt{q}_1}(c_0) \mathbb{E}_{\pi{\mpt{q}_1}(\cdot; c_0)} \left[  \mathbb{E}_{\mpt{q}_2(c_1)} \left[ w_2 h(\tau_2 \oplus \tau_1) \right] \right] \\
%     &= Z_{\mpt{q}_1}(c_0) \mathbb{E}_{\pi{\mpt{q}_1}(\cdot; c_0)} \left[ Z_{\mpt{q}_2}(c_1) \mathbb{E}_{\pi_{\mpt{q}_2}(c_1)} \left[h(\tau_2 \oplus \tau_1) \right] \right] \\
%     &= Z_{\mpt{q}_1}(c_0) Z_{\mpt{q}_2}(c_1) \mathbb{E}_{\pi{\mpt{q}_1}(\cdot; c_0)} \left[ \mathbb{E}_{\pi_{\mpt{q}_2}(c_1)} \left[h(\tau_2 \oplus \tau_1) \right] \right].
% \end{align*}
\end{proof}

\begin{lemma}[Strict proper weighting of the propose combinator]
\label{lem:propose_proper_weighting}
Evaluation of a program \pt{propose(p, q)} is strictly properly weighted for the unnormalized density $\den{\mpt{propose(p, q)}}_\gamma$ when evaluation of \pt{q} is strictly properly weighted for the unnormalized density $\den{\mpt{q}}_\gamma$.
\end{lemma}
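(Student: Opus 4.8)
\emph{Proof plan.} The plan is to verify the strict-proper-weighting identity of Definition~\ref{def:proper_weighting} directly for the output $(\tau_3, w)$ of $\mpt{propose(p, q)}$, whose weight is $w = w_2 \cdot w_1 / u_1$, under the induction hypothesis that $\mpt{q}$ is strictly properly weighted for $\gamma_q$. First I would unfold the \pt{propose} rule into an iterated expectation along its three stages: the outer draw $c_1,\tau_1,\rho_1,w_1 \eval \mpt{q($c_0$)}$ from the proposal, the inner conditioned evaluation $c_2,\tau_2,\rho_2,w_2 \ceval{\tau_1}{\mpt{p($c_0$)}}$ of the (possibly extended) target, and the deterministic marginal extraction of $\tau_3$ from $\tau_2$. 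Writing $\mathbb{E}_{\mpt{propose(p, q)($c_0$)}}[w\, h(\tau_3)] = \mathbb{E}_{\mpt{q($c_0$)}}\big[(w_1/u_1)\, \mathbb{E}_{\mpt{p($c_0$)}[\tau_1]}[w_2\, h(\tau_3)]\big]$ isolates the two cancellations that carry the argument.

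For the inner expectation I would invoke the substitution semantics (Appendix~\ref{apx:substitution}): the missing variables $M = \mathrm{dom}(\tau_2)\setminus\mathrm{dom}(\tau_1)$ are drawn from the conditioned prior $p_{p[\tau_1]}$, and $w_2 = \gamma_p(\tau_2 ; c_0)/p_{p[\tau_1]}(\tau_2 ; c_0)$ (this is exactly Equation~\ref{eq:conditioned-weight}). The prior therefore cancels, giving $\mathbb{E}_{\mpt{p($c_0$)}[\tau_1]}[w_2\, h(\tau_3)] = \int d\tau_M\, \gamma_p(\tau_2 ; c_0)\, h(\tau_3)$ with the reused variables $R = \mathrm{dom}(\tau_2)\cap\mathrm{dom}(\tau_1)$ held at their proposal values. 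For the outer expectation I would apply the induction hypothesis to replace $\mathbb{E}_{\mpt{q}}[w_1\,(\cdot)]$ by $\int d\tau_1\, \gamma_q(\tau_1 ; c_0)\,(\cdot)$, and then exploit the algebraic identity $\gamma_q(\tau_1)/u_1 = \prod_{\alpha\in S}\rho_1(\alpha)$ for the superfluous set $S = \mathrm{dom}(\tau_1)\setminus\mathrm{dom}(\tau_2)$: since the denotational relation gives $\gamma_q = \prod_{\alpha\in\mathrm{dom}(\rho_1)}\rho_1(\alpha)$, while $u_1$ is by definition (the denominator of Equation~\ref{eq:propose-weight}) the same product with the superfluous factors removed, all non-superfluous terms cancel.

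The two decisive steps are then integrations to unity. First, $\prod_{\alpha\in S}\rho_1(\alpha)$ is a product of conditional densities of \emph{unobserved} variables sampled inside $\mpt{q}$, so integrating $\tau_S$ in reverse sampling order gives $\int d\tau_S \prod_{\alpha\in S}\rho_1(\alpha) = 1$ for each fixed $\tau_R$; as the remaining integrand does not depend on $\tau_S$, the combined integral collapses to $\int d\tau_2\, \gamma_p(\tau_2 ; c_0)\, h(\tau_3)$ --- precisely the auxiliary-variable cancellation of the ``Programs as Proposals'' discussion, now valid for arbitrary properly-weighted $\mpt{q}$. Second, because $\mpt{marginal(p)}$ strips every $\mpt{extend}$ and its kernel carries no observations, the auxiliary extension variables $\mathrm{dom}(\tau_2)\setminus\mathrm{dom}(\tau_3)$ integrate to one by iterated application of Lemma~\ref{lem:spw_extend}, so $\int d\tau_2\,\gamma_p(\tau_2)\,h(\tau_3) = \int d\tau_3\,\den{\mpt{marginal(p)($c_0$)}}_\gamma(\tau_3)\,h(\tau_3)$. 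Since $h$ depends only on $\tau_3$ and $\mpt{p}$ shares its normalizer with $\mpt{marginal(p)}$, this is the strict-proper-weighting identity with $\mathcal{C}=1$.

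The main obstacle I anticipate is making the two integrate-to-one steps rigorous while tracking the address bookkeeping. The superfluous integration requires that $\prod_{\alpha\in S}\rho_1$ genuinely forms a normalized conditional law of $\tau_S$ given $\tau_R$ (which relies on $S$ consisting only of sampled, unobserved addresses and on integrating in execution order), and the reused-variable integration of $\gamma_p$ tacitly needs the proposal support to dominate the target support on $R$, so that cancelling the proposal density on $R$ against $\gamma_p$ is legitimate. A secondary care point is that $S$, $M$, and hence $u_1$ are jointly measurable functions of $(\tau_1,\tau_2)$ rather than of $\tau_1$ alone, so the rearrangement should be phrased as a single joint integral with the decompositions $\mathrm{dom}(\tau_1)=R\sqcup S$ and $\mathrm{dom}(\tau_2)=R\sqcup M$ fixed \emph{before} the factors are separated.
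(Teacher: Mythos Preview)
Your proposal is correct and follows essentially the same route as the paper's proof: unfold the nested expectation, apply the induction hypothesis to replace the outer $\mathbb{E}_{\mpt{q}}[w_1\,\cdot\,]$ by $\int\gamma_q$, use the algebraic identity for $w_2/u_1$ (the paper writes your $\prod_{\alpha\in S}\rho_1(\alpha)$ as $p_{q[\tau_2]}(\tau_1)$, but they are the same product), cancel the conditioned prior $p_{p[\tau_1]}$ against the sampling law of the inner stage, integrate the superfluous factor to $1$, and finally marginalize the \pt{extend} auxiliaries to pass from $\tilde{\gamma}_p$ to $\gamma_p$. Your self-flagged point that $u_1$ belongs inside the inner expectation (since it depends on $\text{dom}(\tau_2)$) is exactly right and matches how the paper arranges the factors; the paper likewise performs the superfluous-variable and auxiliary-variable integrations without spelling out the address-order argument you sketch, so your caveats are appropriate but do not indicate a divergence in strategy.
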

\begin{proof}
Recall from Section~\ref{subsec:operational} that that the operational semantics for propose are
\[
\begin{prooftree}
  \hypo{
    \begin{matrix}
      c_1, \tau_1, \rho_1, w_1 \eval \mpt{q}(c_0) \qquad
      c_2, \tau_2, \rho_2, w_2 \: \ceval{\tau_1}{\mpt{p($c_0$)}} \qquad %\\[6pt]
      c_3, \tau_3, \rho_3, w_3 \: \ceval{\tau_2}{\mpt{marginal(p)($c_0$)}} \\[6pt]
      \displaystyle
      u_1 = \prod_{\alpha \in 
                          \text{dom}(\rho_{1}) 
                             \setminus (\text{dom}(\tau_{1}) 
                                        \setminus \text{dom}(\tau_{2}))}
                 \rho_1(\alpha)
    \end{matrix}
  }
  \infer1{c_3, \tau_3, \rho_3,~
          w_2 \cdot w_1 / u_1
          ~\eval{}~ \mpt{propose(p, q)}(c_0)}
\end{prooftree}
\]
Our aim is to demonstrate that evaluation of \pt{propose(p, q)} is strictly properly weighted for
\begin{align*}
    \den{\mpt{propose(p, q)(c$_0$)}}_\gamma 
    = 
    \den{\mpt{marginal(p)(c$_0$)}}_\gamma
    = 
    \gamma_p(\cdot ; c_0).
\end{align*}
This is to say that, for any measurable $h(\tau_3)$%,and simplifying notation by setting \mpt{f=marginal(p)},
\begin{align*}
    \mathbb{E}_{\mpt{q(c$_0$)}} \left[ 
      \frac{w_2 w_1}{u_1} \: 
      h(\tau_3) 
    \right] 
    &=
    \int d\tau_3 \:
    \gamma_p(\cdot ; c_0) \:
    h(\tau_3).
    %Z_{\mpt{f}}(c_0) \mathbb{E}_{\pi_{\mpt{f}(c_0)}} \left[ h(\tau_3) \right].
\end{align*}
We start by expressing the expectation with respect to \pt{q$_2$($c_0$)} as an expectation with respect to \pt{q$_1$($c_0$)} and \pt{p($c_0$)}$[\tau_1]$, and use the inductive hypothesis to express the first expectation as an integral with respect to $\den{\mpt{q($c_0$)}}_\gamma=\gamma_q(\cdot ; c_0)$,
\begin{align*}
    \mathbb{E}_{\mpt{q$_2$($c_0$)}} \left[
      \frac{w_2 w_1}{u_1} \: 
      h(\tau_3) 
    \right] 
    &= 
    \mathbb{E}_{\mpt{q$_1$($c_0$)}} \left[
      w_1 \: \mathbb{E}_{\mpt{p($c_0$)}[\tau_1]} \left[ 
        \frac{w_2}{u_1} \: h(\tau_3) 
      \right] 
    \right]\\
    &= 
    \int 
    d\tau_1 \:
    \gamma_q(\tau_1 ; c_0) \:
    \mathbb{E}_{\mpt{p($c_0$)}[\tau_1]} \left[ 
      \frac{w_2}{u_1} \: h(\tau_3) 
    \right].
    % \\
    % &= 
    % Z_{q}(c_0) \:
    % \int d\tau_1 \: 
    % \pi_{\mpt{q}_1}(\cdot; c_0) \: 
    % \mathbb{E}_{\mpt{p}(c_0)[\tau_1]} \left[ 
    %   \frac{w_2}{u_1} \: h(\tau_3) 
    % \right] \\
    % &= 
    % \int d\tau_1 \:
    % \gamma_{\mpt{q}_1}(\tau_1; c_0) 
    % \mathbb{E}_{\mpt{p}[\tau_1] (c_0)} \left[
    %   \frac{w_2}{u_1} h(\tau_3) 
    % \right].
\end{align*}
We use $\den{\mpt{p($c_0$)}} = \tilde{\gamma}_p(\cdot \:; c_0)$ to refer to the density that \pt{p} denotes, which possibly extends the density $\gamma_p(\cdot \: ; c_0)$ using one or more primitive programs. We then use Equation~\ref{eq:propose-weight} to replace $w_2/u_1$ with the relevant extended-space densities, and simplify the resulting expressions
\begin{align*}
    \int d\tau_1 \: 
    \gamma_{1}(\cdot; c_0) \:
    \mathbb{E}_{\mpt{p($c_0$)}[\tau_1]} \left[ 
      \frac{w_2}{u_1} h(\tau_3) 
    \right] 
    &= 
    \int d\tau_1 \: 
    \gamma_{q}(\tau_1; c_0) \: 
    \mathbb{E}_{\mpt{p($c_0$)}[\tau_1]} \left[ 
      \frac{\tilde{\gamma}_{p}(\tau_2; c_0) p_{q[\tau_2]}(\tau_1; c_0)}
          {\gamma_{q}(\tau_1; c_0) p_{p[\tau_1]}(\tau_2; c_0)} h(\tau_3) \right] \\
    &= 
    \int d\tau_1 \: 
    \frac{\cancel{\gamma_{q}(\tau_1; c_0)}}
         {\cancel{\gamma_{q}(\tau_1; c_0)}} \: 
    \mathbb{E}_{\mpt{p($c_0$)}[\tau_1]} \left[ 
      \frac{\tilde{\gamma}_{p}(\tau_2; c_0) \: p_{q[\tau_2]}(\tau_1; c_0)}
          {p_{p[\tau_1]}(\tau_2; c_0)} \:
      h(\tau_3) 
    \right]\\
    &= 
    \int d\tau_1 \:
    \int d\tau_2 \: 
    p_{p[\tau_1]}(\tau_2; c_0) \: 
    \frac{\tilde{\gamma}_{p}(\tau_2; c_0) \: p_{q [\tau_2]}(\tau_1; c_0)}
         {p_{p[\tau_1]}(\tau_2; c_0)} h(\tau_3) \\
    &= 
    \int d\tau_1 \: 
    \int d\tau_2 \: 
    \frac{\cancel{p_{p[\tau_1]}(\tau_2; c_0)}}
         {\cancel{p_{p[\tau_1]}(\tau_2; c_0)}} \:
    \tilde{\gamma}_{p}(\tau_2; c_0) \: p_{q [\tau_2]}(\tau_1; c_0) \: h(\tau_3) \\
    &= 
    \int d\tau_2 \: \tilde{\gamma}_{p}(\tau_2; c_0) \: h(\tau_3) \: 
    \cancel{\int d\tau_1 p_{q [\tau_2]}(\tau_1; c_0)}, \\
    &= 
    \int d\tau_3 \: \gamma_{p}(\tau_3; c_0) \: h(\tau_3).
    % \\
    % &= \int d\tau_2 \gamma_{p}(\tau_2; c_0) h(\tau_3) \\
    % &= Z_{\mpt{p}}(c_0) \int d\tau_2 \pi_{\mpt{p}}(\tau_2; c_0) h(\tau_3) \\
    % &= Z_{\mpt{p}}(c_0) \mathbb{E}_{\pi_{\mpt{p}}(\tau_2; c_0)} \left[ h(\tau_3) \right].
\end{align*}
In the final equality, we rely on the fact that $\gamma_p(\tau_3 ; c_0)$ is the marginal of $\tilde{\gamma}_p(\tau_2 ; c_0)$ with respect to the set of auxiliary variables $\text{dom}(\tau_2) \setminus \text{dom}(\tau_3)$.

% We now note that $\tau_3 \subseteq \tau_2$ and apply the result of Theorem~\ref{thm:target_proper_weighting}, finding that
% \begin{align*}
%     Z_{\mpt{p}}(c_0) \mathbb{E}_{\pi_{\mpt{p}}(\tau_2; c_0)} \left[ h(\tau_3) \right] &= Z_{\mpt{f}}(c_0) \mathbb{E}_{\pi_{\mpt{f}(c_0)}} \left[ h(\tau_3) \right].
% \end{align*}
\end{proof}

\begin{theorem}[Strict proper weighting of compound inference programs]
\label{thm:compound_program_proper_weighting}
Compound inference programs \mpt{q} are strictly properly weighted for their unnormalized densities $\gamma_q$.
\end{theorem}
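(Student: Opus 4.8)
The plan is to prove the theorem by structural induction on the grammar of inference programs, $\mpt{q} ::= \mpt{p} \mid \mpt{resample(q)} \mid \mpt{compose(q$'$, q)} \mid \mpt{propose(p, q)}$. Each of the four productions corresponds exactly to a result already established, so the theorem should assemble from the component lemmas without any new computation. For the base case $\mpt{q} = \mpt{p}$, strict proper weighting is immediate from Theorem~\ref{thm:target_proper_weighting}; this furnishes a well-founded starting point for the $\mpt{q}$-induction, since target programs are resolved independently of the inference combinators (via Proposition~\ref{prop:primitive_proper_weighting} and Lemma~\ref{lem:spw_extend}).

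For the inductive step I would treat each combinator in turn, in every case invoking the induction hypothesis that the immediate sub-programs are strictly properly weighted and then applying the matching lemma. When $\mpt{q} = \mpt{resample(q$_1$)}$, the hypothesis on $\mpt{q}_1$ together with Lemma~\ref{lem:resample_proper_weighting} gives the claim. When $\mpt{q} = \mpt{compose(q$_2$, q$_1$)}$, both $\mpt{q}_1$ and $\mpt{q}_2$ are smaller inference programs, so the hypothesis applies to each and Lemma~\ref{lem:compose_proper_weighting} yields strict proper weighting of the composition. When $\mpt{q} = \mpt{propose(p, q$_1$)}$, the hypothesis supplies strict proper weighting of the proposal $\mpt{q}_1$, and Lemma~\ref{lem:propose_proper_weighting} closes the case.

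The induction is well-founded because each combinator strictly decreases the size of the inference-program expression while target programs form a separately-resolved base, so these four cases exhaust the grammar. The substantive content lives entirely in the per-combinator lemmas; at this level the work is bookkeeping, namely checking that each lemma's stated hypothesis is exactly what the induction hypothesis provides. The one case that warrants a second look is $\mpt{propose}$: I would confirm that Lemma~\ref{lem:propose_proper_weighting} genuinely requires proper weighting only of the proposal $\mpt{q}_1$ and not of the target $\mpt{p}$, so that the recursion bottoms out correctly even when $\mpt{p}$ is itself an extended program built with $\mpt{extend}$, with $\mpt{p}$'s contribution entering only through the extended-space construction and evaluation under substitution and $\mpt{p}$ itself covered as a target program by Theorem~\ref{thm:target_proper_weighting}. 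Given the lemmas as stated, I expect no genuine obstacle beyond this verification; the theorem is a clean assembly of the established results.
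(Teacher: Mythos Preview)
Your proposal is correct and matches the paper's own proof essentially verbatim: the paper proceeds by structural induction on the grammar for \mpt{q}, invoking Theorem~\ref{thm:target_proper_weighting} for the base case and Lemmas~\ref{lem:resample_proper_weighting}, \ref{lem:compose_proper_weighting}, and \ref{lem:propose_proper_weighting} for the three inductive cases. Your additional remarks on well-foundedness and on the \mpt{propose} case requiring only proper weighting of \mpt{q}$_1$ are sound and simply make explicit what the paper leaves implicit.
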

\begin{proof}
By induction on the grammar for \mpt{q}.
\begin{itemize}
    \item \emph{Base case:} $\mpt{q} = \mpt{p}$.  Theorem~\ref{thm:target_proper_weighting} above provides a proof.
    \item \emph{Inductive case:} $\mpt{q}_2 = \mpt{resample(q$_1$)}$. This follows from Lemma~\ref{lem:resample_proper_weighting}.
    \item \emph{Inductive case:} $\mpt{q}_3 = \mpt{compose(q$_1$, q$_2$)}$.  This follows from Lemma~\ref{lem:compose_proper_weighting}.
    \item \emph{Inductive case:} $\mpt{q}_2 = \mpt{propose(p, q$_1$)}$.  This follows from Lemma~\ref{lem:propose_proper_weighting}.
\end{itemize}
\end{proof}

\section{Gradient computations}
\label{apx:gradient_computation}
\subsection{Stochastic Variational Inference (SVI).} 

Let \pt{q$_2$ $=$ propose(p, q$_1$)} be a program in which the initial inference program \mpt{q$_1$}, target program \mpt{p}, and inference program \mpt{q$_2$} denote the densities 
\begin{align*}
    \den{\mpt{q$_1$(c$_0$)}}_\gamma 
    &= \gamma_{q}(\cdot \:; c_0, \phi)
    &
    \den{\mpt{p(c$_0$)}}_\gamma 
    &= 
    \tilde \gamma_p(\cdot \:; c_0, \theta),
    &
    \den{\mpt{q$_2$(c$_0$)}}_\gamma 
    &= 
    \gamma_p(\cdot \:; c_0, \theta),
    ,
\end{align*}
with parameters $\theta$ and $\phi$ respectively.
Notice that the target program \mpt{p} and the inference program \mpt{q$_2$} denote the same density $\den{\mpt{p(c$_0$)}}_\gamma = \den{\mpt{q$_2$(c$_0$)}}_\gamma$ 
and hence, as a result of Theorem \ref{thm:target_proper_weighting}, the evaluation of \mpt{q$_2$} is strictly properly weighted for $\gamma_p$. 
Applying definition \ref{def:proper_weighting} we can now write \begin{align*}
    Z_{p}(c_0; \theta)
    \mathbb{E}_{\pi_p(\cdot;\: c_0)}[
        h(\tau)
    ]
    =
    \mathbb{E}_{\mpt{q$_2$($c_0$)}}[w_2 h(\tau_2)]
    ,
    &&
    c_2,\tau_2,\rho_2,w_2 \eval \mpt{q$_2$(c$_0$)}
    ,
\end{align*}
for any measurable function $h$.
Given evaluations 
$c_1, \tau_1, \rho_1, w_1 \sim \mpt{q$_1$}(c_0)$ and
$c'_2, \tau'_2, \rho'_2, w'_2 \sim \mpt{p}[\tau_1](c_0)$
we can similarly compute a stochastic lower bound  \citep{burda2016importance},
\begin{align*}
    \mathcal{L}
    =
    \mathbb{E}_{\mpt{q$_2$($c_0$)}}
    \left[
        \log w_2
    \right]
    \le
    \log 
    \left(
        \mathbb{E}_{\mpt{q$_2$($c_0$)}}
        \left[
            w_2
        \right]
    \right)
    =
    \log 
    \left(
        Z_p(c_0, \theta)
        \mathbb{E}_{\pi(\cdot; c_0)}
        \left[
            1
        \right]
    \right)
    =   
    \log Z_p(c_0, \theta)
    ,
\end{align*}
using the constant function $h(\tau) = 1$.
The gradient of this bound 
\begin{align*}
    %\nabla_\theta \mathcal{L}
    %&=
    \nabla_\theta
    \mathbb{E}_{\mpt{q$_2$($c_0$)}}
    \left[
        \log
        w_2
    \right]
    &=
    \mathbb{E}_{\mpt{q$_1$($c_0$)}} \left[
        \nabla_\theta \log w_1
        + 
        \nabla_\theta
        \mathbb{E}_{\mpt{p($c_0$)}[\tau_1]}
        \left[
            \log
            \frac{
                \tilde\gamma_{p}(\tilde\tau_2; c_0, \theta)
                p_{q[\tilde\tau_2]}(\tau_1; c_0, \phi)
            }{
                \gamma_{q}(\tau_1; c_0, \phi) 
                p_{p[\tau_1]}(\tilde\tau_2; c_0, \theta)
            } 
        \right]
    \right]
    \\
    &=
    \mathbb{E}_{\mpt{q$_1$($c_0$)}} \left[
        \nabla_\theta
        \mathbb{E}_{\mpt{p($c_0$)}[\tau_1]}
        \left[
            \log
            \frac{
                \tilde\gamma_{p}(\tilde\tau_2; c_0, \theta)
                p_{q[\tilde\tau_2]}(\tau_1; c_0, \phi)
            }{
                p_{p[\tau_1]}(\tilde\tau_2; c_0, \theta)
            } 
        \right]
    \right]
\end{align*}
is a biased estimate of $\nabla_\theta \log Z_p$, where we use Equation~\ref{eq:propose-weight} to replace $w_2$ with the incoming importance weight $w_1$ and the relevant extended-space densities.

If the target program does not introduce additional random variables, i.e. 
$\mathrm{dom}(\tilde \tau_2) \setminus \mathrm{dom}(\tau_1) = \emptyset$, the 
traces produced by the conditioned evaluation $\tilde c_2, \tilde\tau_2, \tilde \rho_2, \tilde w_2 \sim \mpt{p}[\tau_1](c_0)$ do not depend on $\theta$, as all variables are generated from the inference program, and the prior term $p_{p[\tau_1]}(\tilde \tau_2; c_0)=1$.
As a result we can move the gradient operator inside the inner expectation,
\begin{align*}
    \mathbb{E}_{\mpt{q$_1$($c_0$)}} \left[
        \nabla_\theta
        \mathbb{E}_{\mpt{p($c_0$)}[\tau_1]}
        \left[
            \log
            \tilde\gamma_{p}(\tilde\tau_2; c_0, \theta)
            p_{q[\tilde\tau_2]}(\tau_1; c_0)
        \right]
    \right]
    &=
    \mathbb{E}_{\mpt{q$_1$($c_0$)}} \left[
        \mathbb{E}_{\mpt{p($c_0$)}[\tau_1]}
        \left[
            \nabla_\theta
            \log
            \tilde\gamma_{p}(\tilde\tau_2; c_0, \theta)
            p_{q[\tilde\tau_2]}(\tau_1; c_0)
        \right]
    \right]
    \\
    &=
    \mathbb{E}_{\mpt{q$_2$($c_0$)}}
    \left[
        \nabla_\theta
        \log
        \tilde\gamma_{p}(\tilde\tau_2; c_0, \theta)
        p_{q[\tilde\tau_2]}(\tau_1; c_0)
    \right]
    .
\end{align*}
If, additionally, all random variables in the inference program are reused in the target program, i.e. $\mathrm{dom}(\tau_1) \setminus \mathrm{dom}(\tilde\tau_2) = \emptyset$, the prior term $p_{q[\tilde\tau_2]}(\tau_1; c_0)=1$. 
Hence, in the case where the set of random variables in the proposal and target program is the same, i.e. $\mathrm{dom}(\tilde\tau_2) = \mathrm{dom}(\tau_1)$, 
we recover the standard variational inference gradient w.r.t.~the model parameters $\theta$,
\begin{align*}
    \mathbb{E}_{\mpt{q$_2$($c_0$)}}
    \left[
        \nabla_\theta
        \log
        \gamma_{p}(\tilde\tau_2; c_0, \theta)
    \right]
    .
\end{align*}
The gradient with respect to the proposal parameters $\nabla_\phi \mathcal{L}$ can be approximated using likelihood-ratio estimators \citep{wingate2013automated,ranganath2014black}, reparameterized samples \citep{kingma2013auto-encoding,rezende2014stochastic}, or a combination of the two \citep{ritchie2016deep}.
Here we only consider the fully reparameterized case, which allows us to move the gradient operator inside the expectation
\begin{align*}
    \nabla_\phi
    \mathbb{E}_{\mpt{q$_2$($c_0$)}}
    \left[
        \log w_2
    \right]
    &=
    \mathbb{E}_{\mpt{q$_2$($c_0$)}}
    \left[
        \nabla_\phi
            \log w_2
    \right]
    \\
    &=
    \mathbb{E}_{\mpt{q$_2$($c_0$)}}
    \left[
        \frac{\partial \log w_2}{\partial \tilde\tau_2}
        \frac{\partial \tilde\tau_2}{\partial \phi}
        +
        \frac{\partial \log w_2}{\partial \phi}
    \right]
    \\
    &=
    \mathbb{E}_{\mpt{q$_2$($c_0$)}}
    \left[
        \frac{\partial \log w_2}{\partial \tilde\tau_2}
        \frac{\partial \tilde\tau_2}{\partial \phi}
        +
        \frac{\partial}{\partial \phi}
        \log
        \frac{
            p_{q[\tilde\tau_2]}(\tau_1; c_0, \phi)
        }{
            \gamma_{q}(\tau_1; c_0, \phi) 
        } 
    \right]
\end{align*}
In the case where the set of random variables in the proposal and target program is the same, i.e. $\mathrm{dom}(\tilde\tau_2) = \mathrm{dom}(\tau_1)$ and $\mpt{q}_1 = \mpt{f}$ is a primitive program, we can write $w_1=\gamma_f(\tau_1; c_0, \phi)/p_f(\tau_1; c_0, \phi)$, and  
we recover the standard variational inference gradient w.r.t~$\phi$,
\begin{align*}
    &
    \mathbb{E}_{\mpt{q$_2$($c_0$)}}
    \left[
        \frac{\partial}{\partial \tilde\tau_2}
        \log 
        \left(
            \frac{
                    \tilde\gamma_{p}(\tilde\tau_2; c_0, \theta)
                }{
                    p_{f}(\tau_1; c_0, \phi) 
                } 
        \right)
        \frac{\partial \tilde\tau_2}{\partial \phi}
        -
        \frac{\partial}{\partial \phi}
        \log p_{f}(\tau_1; c_0, \phi) 
    \right]
    \\
    &=
    \mathbb{E}_{\mpt{q$_2$($c_0$)}}
    \left[
        \frac{\partial}{\partial \tilde\tau_2}
        \log
        \left(
            \frac{
                    \tilde\gamma_{p}(\tilde\tau_2; c_0, \theta)
                }{
                    p_{f}(\tau_1; c_0, \phi) 
                } 
        \right)
        \frac{\partial \tilde\tau_2}{\partial \phi}
        -
        \frac{\partial}{\partial \phi}
        \log p_{f}(\tau_1; c_0, \phi) 
    \right]
    \\
    &=
    - \nabla_\phi \text{KL}(p_f || \pi_p)
    ,
\end{align*}
effectively minimizing a reverse KL-divergence.
Noticing that the second term is zero in expectation, due to the reinforce trick, we can derive the lower variance gradient
\begin{align*}
    \mathbb{E}_{\mpt{q$_2$($c_0$)}}
    \left[
        \frac{\partial}{\partial \tilde\tau_2}
        \log
        \frac{
                \gamma_{p}(\tilde\tau_2; c_0, \theta)
            }{
                p_{f}(\tau_1; c_0, \phi) 
            } 
        \frac{\partial \tilde\tau_2}{\partial \phi}
        -
        \frac{\partial}{\partial \phi}
        \log p_{f}(\tau_1; c_0, \phi) 
    \right]
    = 
    \mathbb{E}_{\mpt{q$_2$($c_0$)}}
    \left[
        \frac{\partial}{\partial \tilde\tau_2}
        \log
        \frac{
                \gamma_{p}(\tilde\tau_2; c_0, \theta)
            }{
                p_{f}(\tau_1; c_0, \phi) 
            } 
        \frac{\partial \tilde\tau_2}{\partial \phi}
    \right]
    ,
\end{align*}
which can be approximated using reparameterized weights obtained by evaluating $\mpt{q$_2$}$.

% A standard approach to approximate the target density is to maximize a stochastic lower bound \citep{burda2016importance} on the log normalizer
% \begin{align*}
%     \mathcal{L}(c', \phi)
%     &= 
%     \mathbb{E}_{\tau_f \sim p_{f}(\cdot \,|\, c' ;\ \phi)}
%     \left[
%     \log
%     \frac{
%         \gamma_p(\tau_f; c')
%     }{
%         p_f(\tau_f; c', \phi)
%     }
%     \right]
%     \le 
%     \log Z_p(c')
%     ,
% \end{align*}
% which corresponds to the minimization of the reverse KL-divergence,
% via stochastic gradient descent. We can compute gradient estimates w.r.t. parameters $\phi$ ,
% \begin{align*}
%     \nabla_{\phi}
%     \mathcal{L}(\phi)
%     &\approx
%     \frac{1}{L}
%     \sum_{l=1}^L
%         \nabla_{\phi}
%         \log
%         \frac{
%             \gamma_p(\tau_f^l; c')
%         }{
%             p_f(\tau_f^l; c', \phi)
%         }
%     =
%     \frac{1}{L}
%     \sum_{l=1}^L
%         \nabla_{\phi}
%         \log
%         w^l
% \end{align*}
% where the importance weight $w \in \mathbb{R}^{L}_+$ can be obtained by evaluating $c, \tau, \rho, w \eval  \mpt{propose(p, f)}(c')$.

% name RWS
\subsection{Reweighted Wake-sleep (RWS) Style Inference.} To implement variational methods inspired by reweighted wake-sleep \citep{hinton1995wake-sleep,bornschein2015reweighted,le2019revisiting}, 
we compute a self-normalized estimate of the gradient
\begin{align}
    \nabla_\theta 
    \log Z_p(c_0, \theta) 
    &= 
    \frac{1}{Z_p(c_0, \theta)}
    \nabla_\theta 
    \int d\tau\ 
    \gamma_p(\tau ; c_0, \theta)
    \\
    &= 
    \frac{1}{Z_p(c_0, \theta)}
    \int d\tau\ 
    \gamma_p(\tau ; c_0, \theta)
    \nabla_\theta 
    \log \gamma_p(\tau ; c_0, \theta)
    \\
    &= 
    \int d\tau\ 
    \pi_p(\tau ; c_0, \theta)
    \nabla_\theta 
    \log \gamma_p(\tau ; c_0, \theta)
    \\
    &=
    \mathbb{E}_{\pi_p(\cdot \:; c_0, \theta)}
    \left[ 
      \nabla_\theta 
      \log \gamma_p(\tau ; c_0, \theta)
    \right].
\end{align}
Notice that here we compute the gradient w.r.t.~ the non-extended density $\gamma_p$, which does not include auxiliary variables and hence density terms which would integrate to one. In practice this allows us to compute lower variance approximation.
Using definition \ref{def:proper_weighting} and traces obtained from the evaluation of our inference program $c_2^l,\tau_2^l,\rho_2^l,w_2^l \eval \mpt{q$_2$(c$_0$)}$ we can derive a costistent self-normalized estimator
\begin{align}
    \mathbb{E}_{\pi_p(\cdot \:; c_0, \theta)}
    \left[ 
      \nabla_\theta 
      \log \gamma_p(\tau ; c_0, \theta)
    \right]
    &=
    Z_p^{-1}(c_0, \theta)
    \mathbb{E}_{\mpt{q$_2$}}
    \left[ 
      w_2
      \nabla_\theta 
      \log \gamma_p(\tau_2 ; c_0, \theta)
    \right]
    \simeq 
    \sum_{l} 
        \frac{
            w_2^l
        }{
            \sum_{l'} w_2^{l'}
        }
        \nabla_\theta 
        \log \gamma_p(\tau_2^l ; c_0, \theta)
    .
\end{align}
We can similarly approximate the gradient of the forward KL divergence with a self-normalized estimator,
\begin{align}  
    -\nabla_\phi 
    \text{KL}(\pi_p || \pi_{q})
    &= 
    \mathbb{E}_{\pi_p(\cdot \:; c_0, \theta)}
    \left[ 
      \nabla_\phi 
      \log \pi_{q}(\tau ; c_0, \phi)
    \right]
    \\
    &= 
    \mathbb{E}_{\pi_p(\cdot \:; c_0, \theta)}
    \left[ 
      \nabla_\phi 
      \log \gamma_{q}(\tau ; c_0, \phi)
    \right]
     -
    \nabla_\phi Z_q(c_0, \phi)
    \\
    &= 
    \mathbb{E}_{\pi_p(\cdot \:; c_0, \theta)}
    \left[ 
      \nabla_\phi 
      \log \gamma_{q}(\tau ; c_0, \phi)
    \right]
    -
    \mathbb{E}_{\pi_q(\cdot \:; c_0, \theta)}
    \left[ 
      \nabla_\phi 
      \log \gamma_{q}(\tau ; c_0, \phi)
    \right]
    \\
    &= 
    Z_p^{-1}(c_0, \theta)
    \mathbb{E}_{\mpt{q$_2$}}
    \left[ 
        w_2
        \nabla_\phi 
        \log \gamma_{q}(\tau_1 ; c_0, \phi)
    \right]
     -
    Z_q^{-1}(c_0, \phi)
    \mathbb{E}_{\mpt{q$_1$}}
    \left[ 
        w_1
        \nabla_\phi 
        \log \gamma_{q}(\tau_1 ; c_0, \phi)
    \right]
    \\
    &\simeq 
     \sum_{l} \left(
      \frac{w_2^l}{\sum_{l'} w_2^{l'}}
      - \frac{w_1^l}{\sum_{l'} w_1^{l'}}
     \right)
    \nabla_\phi \log \gamma_{q}(\tau_1^l ; c_0, \phi).
\end{align}
In the special case where the proposal $\mpt{q}_1 = \mpt{f}$ is a primitive program without observations (i.e. $w^l_1=1$) we have that
\begin{align}
    \mathbb{E}_{\pi_q(\cdot \:; c_0, \theta)}
    \left[ 
      \nabla_\phi 
      \log \gamma_{q}(\tau ; c_0, \phi)
    \right]
    &=
    \mathbb{E}_{\pi_q(\cdot \:; c_0, \theta)}
    \left[ 
      \nabla_\phi 
      \log \pi_q(\tau ; c_0, \phi)
    \right]
    =
    0,
\end{align}
by application of the reinforce trick. 
In this case we drop the second term, which is introducing additional bias, through self-normalization, and variance to the estimator, and recover the standard RWS estimator 
\begin{align}
    \sum_{l}
        \frac{w_2^l}{\sum_{l'} w_2^{l'}}
        \nabla_\phi \log \gamma_{q}(\tau_1^l ; c_0, \phi)
    .
\end{align}

\section{Implementation Details for Experiments}
\subsection{Annealed Variational Inference}
\label{apx:avi}

In the annealing task we implement Annealed Variational Inference \citep{zimmermann2021nested} and learn to sample from a multimodal Gaussian distribution $\gamma_K$, composed of eight equidistantly spaced modes with covariance matrix of $I \cdot 0.5$ on a circle of radius 10.
We define our initial inference program and annealing path to be:

\begin{align*}
    \den{\mpt{q$_0$(c}} = \text{Normal}(\mu=0, \sigma=5)
    && \den{\mpt{q$_k$(c)}}_\gamma = q_1(c)^{1-\beta_k} \gamma(c)_K^{\beta_k}, 
    && \beta_k = \frac{k-1}{K-1},
    && \text{for } k = 1 \ldots K
\end{align*}

Implementations of these programs can be found in Figure \ref{fig:nvi-code}. Additionally, each forward- and reverse- kernel program $\mpt{q$_k$(c)}$ is defined by a neural network:

\begin{align*}
    x = \text{Linear 50. ReLU} (c)
    && \mu_k = \text{Linear 2} (x) + c
    && \text{cov}_k = \text{DiagEmbed . Softplus 2} (x)
\end{align*}

Learned intermediate densities of $\beta_k$ are embedded by a logit function and is extracted by sigmoid function. 

A combinators implementation of nested variational inference is defined:

\begin{center}
\begin{tabular}{c}
\begin{probtorch}
def step(q, intermediate, do_resample):
    (fwd, rev), p = intermediate
    q$'$ = resample(q) if do_resample else q
    return propose(extend(p, rev), compose(q$'$, fwd))

path, kernels = ...
ixs = list(range(len(path)))
do_resamples = map($\lambda$ i $\to$ i == ixs[-1], ixs)
nvir = reduce(step, zip(kernels, path[1:], do_resamples[1:], path[0]))
\end{probtorch}
\end{tabular}
\end{center}

We implement nested variational inference (NVI), nested variational inference with resampling (NVIR), as well as nested variational inference with learned intermediate densities (NVI*), and nested variational inference with resampling and learned intermediate densities (NVI*).
When implementing any NVI algorithm with resampling, we additionally implement nested variational inference with resampling (NVIR*) by applying the \mpt{resample} combinator after all but the final \mpt{proposal} combinator.

We evaluate our model by training each model for 20,000 iterations with a sampling budget of 288 samples, distributed across $K$ intermediate densities.
Metrics of $\log \hat{Z}$ and effective sample size average over 100 batches of 1,000 samples and results are calculated using the mean of 10 training runs using unique, fixed seeds. In the evaluation of NVIR* we do not resample at test time. 

\begin{figure*}[!t]
\begin{minipage}{0.525\linewidth}
\begin{probtorch}
def target(s, x):
  zs, xs = s.sample(Categorical(K), "zs"), []
  for k in K:
    count = sum(zs==k)
    dist = Normal($\mu$(k), $\sigma$(k))
    xs.append(s.sample(dist, str(k), count))
  return s, shuffle(cat(xs))
  
def q_0(s):
  c = s.sample(Normal($\mu$(k), $\sigma$(k)), "q_0")
  return s, c
\end{probtorch}
\end{minipage}
\begin{minipage}{0.475\linewidth}
\begin{probtorch}
$\eta = ...$ # initialize neural network for kernel $k$
def q_k(s, c):
  c$'$ = s.sample(Normal($\eta_{\mu}$(k), $\eta_{\sigma}$(k)), "q_k")
  return s, c$'$

def gamma_k(s, log_gamma, q_0, beta=1.0):
  # sample from the initial proposal
  x = s.sample(q_0, "x")
  # add a heuristic factor
  s.factor(beta * (log_gamma(x) - q0.log_prob(x)))
  return x
\end{probtorch}
\end{minipage}
% \vspace{-0.5\baselineskip}
\caption{\label{fig:nvi-code} models defined for Annealed Variational Inference}
\end{figure*}

\subsection{Amortized Population Gibbs Samplers}
\label{apx:apg_implementation}
Many inference task requires learning a deep generative model. For this purpose, we we evaluate combinators in an unsupervised tracking task. In this task, the data is a corpus of simulated videos that each contain multiple moving objects. Out goal is to learn both the target program (i.e. the generative model) and the inference program using the APG sampler.

Consider a sequence of video frames $x_{1:T}$, which contains $T$ time steps and $D$ different objects. We assume that the $k$th object in the $t$th frame $x_t$ can be represented by some object feature $z^{\mathrm{what}}_{d}$ and a time-dependent position variable $z^{\mathrm{where}}_{d, t}$. The deep generative model takes the form

\begin{align*}    
    z^{\mathrm{what}}_{d} 
    \, &\sim \,
    \text{Normal}(0, \, I),
    &
    z^{\mathrm{where}}_{d, 1} \,&\sim\, \text{Normal}(0,\, I)
    ,
    &
    z^{\mathrm{where}}_{d, t} 
    \, &\sim \,
    \text{Normal}(z^{\mathrm{where}}_{d, t - 1}, \, \sigma^2_0 I)
    ,
    \\
    x_t 
    %| z^\mathrm{where}_{d,t}, z^\mathrm{what}_d 
    \,
    &\sim
    \,
    \mathrm{Bernoulli}
    \Big(
        \sigma
        \Big(
            \sum_{d} \mathrm{ST}
            \big(
                g_\theta (z_{d}^{\mathrm{what}})
                , 
                \:
                z^{\mathrm{where}}_{d, t}
            \big)
        \Big)
    \Big)
    ,
    & d &= 1, 2, ..., D,
    &
    t &= 1, 2,.., T
    .
\end{align*}
where $\sigma_0=0.1$ and $z^{\mathrm{what}}_{d} \in \mathbb{R}^{10}$, $z^{\mathrm{where}}_{d, t} \in \mathbb{R}^2$.
To perform inference for this model, APG sampler learns neural proposals to iterate conditional updates to blocks of variables, which consists of one block of object features and T blocks of each time-dependent object position as
\begin{align}
    \{z_{1:D}^{\text{what}}\}
    ,
    \qquad
    \{z_{1:D, \, 1}^{\mathrm{where}}\}
    ,
    \qquad
    \{z_{1:D, \, 2}^{\mathrm{where}}\}
    ,
    \qquad
    \cdots
    ,
    \qquad
    \{z_{1:D, \, T}^{\mathrm{where}}\}
\end{align}
We train the model on 10000 video instances, each containing 10 timesteps and 3 different objects. We train with batch size $5$, sample size $20$, Adam optimizer with $\beta_1=0.9, \beta_2=0.99$ and lr=$2e-4$.

\textbf{Architecture for Generative Model}. We learn a deep generative model of the form
\begin{align}
    p_\theta(x_{1:T} \mid z^{\mathrm{what}}_{1:D}, z^{\mathrm{where}}_{1:T})
    =
    \prod_{t=1}^T
    \text{Bernoulli}
    \Big(
    x_t \:\Big\vert\:
        \sigma
        \Big(
            \sum_{d} \mathrm{ST}
            \big(
                g_\theta (z_{d}^{\mathrm{what}})
                , 
                \:
                z^{\mathrm{where}}_{d, t}
            \big)
        \Big)
    \Big)
\end{align}
Given each object feature $z^{\mathrm{what}}_{d}$, the APG sampler reconstruct a $28\times28$ object image using a MLP decoder, the architecture of which is

\begin{table}[!h]
\centering
\label{arch-bshape-decoder}
\begin{tabular}{l}
    \toprule
     \textbf{Decoder} \hspace{1em} $g_\theta (\cdot)$ \\
    \midrule
    Input $z^{\mathrm{what}}_d \in\mathbb{R}^{10}$ \\
    \hline
    FC 200. ReLU. FC 400. ReLU. FC 784. Sigmoid.\\
    \bottomrule
\end{tabular}
\end{table}

Then we put each reconstructed image $g_\theta(z^{\mathrm{what}}_d)$ onto a $96\times96$ canvas using a spatial transformer ST which takes position variable $z^{\mathrm{where}}_{d, t}$ as input. To ensure a pixel-wise Bernoulli likelihood, we clip on the composition as
\begin{align}
    \text{For each pixel} \, p_i \in \Big( \sum_{d} \mathrm{ST} \big( g_\theta (z_{d}^{\mathrm{what}}), \:z^{\mathrm{where}}_{d, t} \big) \Big),
    \:
    \sigma(p_i)
    = \begin{cases}
        p_i = 0 & \text{if} \, p_i < 0 \\    
        p_i = p_i & \text{if} \, 0 \leq p_i \leq 1 \\   
        p_i = 1 & \text{if} \, p_i > 1 \\
      \end{cases}
\end{align}

\textbf{Architecture for Gibbs Neural Proposals}. The APG sampler in the bouncing object employs neural proposals of the form
\begin{align}
    q_\phi (z^{\mathrm{where}}_{1:D, \, t} \mid x_{t})
    &=
    \prod_{d=1}^D \text{Normal} \Big(z^{\mathrm{where}}_{d, t}  \:\Big\vert\: \tilde{\mu}_{d, t}^\mathrm{where}, \tilde{\sigma}^{\mathrm{where} \, 2}_{d, t} I \Big),
    \qquad \text{for} \: t = 1, 2, \dots, T,
    \\
    q_\phi (z^{\mathrm{where}}_{1:D, \, t} \mid x_{t}, z^{\mathrm{what}}_{1:D})
    &=
    \prod_{d=1}^D \text{Normal} \Big(z^{\mathrm{where}}_{d, t}  \:\Big\vert\: \tilde{\mu}_{d, t}^\mathrm{where}, \tilde{\sigma}^{\mathrm{where} \, 2}_{d, t} I \Big),
    \qquad \text{for} \: t = 1, 2, \dots, T,
    \\
    q_\phi (z^{\mathrm{what}}_{1:D} \mid x_{1:T}, z^{\mathrm{where}}_{1:T})
    &=
    \prod_{d=1}^D \text{Normal} \Big(z^{\mathrm{what}}_{d}  \:\Big\vert\: \tilde{\mu}_d^\mathrm{what}, \tilde{\sigma}^{\mathrm{what} \, 2}_d I \Big)
    .
\end{align}
We train the proposals with instances containing $D=3$ objects and $T=10$ time steps and test them with instances containing up to $D=5$ objects and $T=100$ time steps. We use the tilde symbol $\,\tilde{}\,$ to denote the parameters of the conditional neural proposals (i.e. approximate Gibbs proposals).

The APG sampler uses these proposals to iterate over the $T+1$ blocks 
\begin{align*}
    \{z_{1:D}^{\mathrm{what}}\}, 
    \qquad
    \{z_{1:D, \, 1}^{\mathrm{where}}\}, 
    \qquad
    \{z_{1:D, \, 2}^{\mathrm{where}}\}, 
    \qquad
    \dots, 
    \qquad
    \{z_{1:D, \, T}^{\mathrm{where}}\}.
\end{align*}

For the position features, the proposal $q_\phi (z^{\mathrm{where}}_{1:D, \, t} \mid x_{t})$ and proposal $q_\phi (z^{\mathrm{where}}_{1:D, \, t} \mid x_{t}, z^{\mathrm{what}}_{1:D})$ share the same network, but contain different pre-steps where we compute the input of that network. The initial proposal $q_\phi (z^{\mathrm{where}}_{1:D, \, t} \mid x_{t})$ will convolve the frame $x_t$ with the mean image of the object dataset; The conditional proposal $q_\phi (z^{\mathrm{where}}_{1:D, \, t} \mid x_{t}, z^{\mathrm{what}}_{1:D})$ will convolve the frame $x_t$ with each reconstructed object image $g_\theta(z^\mathrm{what}_d)$. We perform convolution sequentially by looping over all objects $d=1, 2, ..., D$. Here is pseudocode of both pre-steps:

\begin{algorithm}[!h]
  \caption{Convolution Processing for  $q_\phi (z^{\mathrm{where}}_{1:D, \, t} \mid x_{t})$}
\begin{algorithmic}[1]
  \State \textbf{Input} frame $x_t\in\mathbb{R}^{9216}$, mean image of object dataset $mm \in \mathbb{R}^{784}$
  \For {$d = 1$ \textbf{to} $D$}
    \State $x_{d, t}^{\text{conv}} \xleftarrow{} \text{Conv2d}(x_t)$ with kernel $mm$, stride = 1, no padding.
    \EndFor
  \State \textbf{Output} Convolved features $\{x_{d, t}^{\text{conv}} \in\mathbb{R}^{4761}\}_{d=1}^D$
\end{algorithmic}
\end{algorithm}

\begin{algorithm}[!h]
  \caption{Convolution Processing for  $q_\phi (z^{\mathrm{where}}_{1:D, \, t} \mid x_{t}, z^{\mathrm{what}}_{1:D})$}
\begin{algorithmic}[1]
  \State \textbf{Input} frame $x_t\in\mathbb{R}^{9216}$, reconstructed object objects $\{g_\theta(z^\mathrm{what}_d) \in \mathbb{R}^{784}\}_{d=1}^D$
  \For {$d = 1$ \textbf{to} $D$}
    \State $x_{d, t}^{\text{conv}} \xleftarrow{} \text{Conv2d}(x_t)$ with kernel $g_\theta(z^\mathrm{what}_d)$, stride = 1, no padding.
    \EndFor
  \State \textbf{Output} Convolved features $\{x_{d, t}^{\text{conv}} \in\mathbb{R}^{4761}\}_{d=1}^D$
\end{algorithmic}
\end{algorithm}

We employ a MLP encoder $f_\phi^\textsc{l}(\cdot)$ that takes the convolved features as input and predict the variational parameters for positions $\{z^\mathrm{where}_{d, t}\}_{d=1}^D$ at step $t$, i.e. vector-valued mean $\tilde{\mu}^\mathrm{where}_{d, t}$ and logarithm of the diagonal covariance $\log \tilde{\sigma}^{\mathrm{where} \, 2}_{d, t}$ as
\begin{align}
    &   
    \tilde{\mu}^\mathrm{where}_{d, t}, \log \tilde{\sigma}^{\mathrm{where} \, 2}_{d, t} \xleftarrow{} f_\phi^\textsc{l}(x_{d, t}^{\text{conv}})
    ,
    &
    d = 1, 2, \dots, D.
\end{align}
The architecture of the MLP encoder $f_\phi^\textsc{l}(\cdot)$ is
\begin{table}[!t]
\centering
\begin{tabular}{l}
 \toprule
\textbf{Encoder} \hspace{1em} $f_\phi^\textsc{l}(\cdot)$ \\
\midrule
Input $x_{d, t}^{\text{conv}} \in \mathbb{R}^{4761}$ \\
\hline
FC 200. ReLU. FC $2\times100$. ReLU. FC $2\times2$. \\
\bottomrule
\end{tabular}
\label{arch-bshape-enc-where}    
\end{table}

For the object features, the APG sampler performs conditional updates in the sense that we crop each frame $x_t$ into a $28\times28$ subframe according to $z^\mathrm{where}_{d, t}$ using the spatial transformer ST as
\begin{align}
    &x^\text{crop}_{d, t} \xleftarrow{} \text{ST}\big(x_t, z^{\mathrm{where}}_{d, t}\big), &d = 1, 2, \dots, D, \quad t = 1, 2, \dots, T
    .
\end{align}
we employ a MLP encoder $T_\phi^\textsc{g}(\cdot)$ that takes the cropped subframes as input, and predicts frame-wise neural sufficient statistics, which we will sum up over all the time steps.
% \begin{table}[!h]
% \centering
% \begin{tabular}{c}
%     \toprule
%     $T_\phi^\textsc{g}(x^\text{crop}_{d, t}), \: x^\text{crop}_{d, t}\in\mathbb{R}^{784}$ \\
%     \midrule
%     FC. 400. ReLU. \\
%     \hline
%     FC. 200. ReLU. \\
%     \bottomrule
% \end{tabular}
% \end{table}

Then we employ another network $f_\phi^\textsc{g}(\cdot)$ that takes the sums as input, and predict the variational parameters for object features $\{z^\mathrm{what}_{d}\}_{d=1}^D$, i.e. the vector-valued means $\{\tilde{\mu}_d^\mathrm{what}\}_{d=1}^D$ and the logarithms of the diagonal covariances $\{\log \tilde{\sigma}_d^{\mathrm{what} \, 2}\}_{d=1}^D$.  The architecture of this network is
\begin{table}[!h]
    \centering
    \begin{tabular}{l}
    \toprule
    \textbf{Encoder} \hspace{1em} $f_\phi^\textsc{g} (\cdot)$ \\
    \midrule
    Input $x^\text{crop}_{d, t}\in\mathbb{R}^{784}$ \\
    \hline
    FC 400. ReLU. FC 200. ReLU. $\xrightarrow{}$ Neural Sufficient Statistics $T_\phi^\textsc{g}(x^\text{crop}_{d, t}) \in \mathbb{R}^{200}$ \\
    \hline
    Intermediate Input $\sum_{t=1}^T T_\phi^\textsc{g}(x^\text{crop}_{d, t})$ $\xrightarrow{}$ FC $2\times 10$. \\
    \bottomrule
    \end{tabular}
\end{table}

\printbibliography

@inproceedings{vandemeent_poplw_2016,
  author = {{van de Meent}, Jan-Willem and Paige, Brooks and Tolpin, David and Wood, Frank},
  booktitle = {POPL Workshop on Probabilistic Programming Semantics},
  title = {{An Interface for Black Box Learning in Probabilistic Programs}},
  year = {2016}
}

@article{bingham2018pyro,
  title = {Pyro: {{Deep Universal Probabilistic Programming}}},
  shorttitle = {Pyro},
  author = {Bingham, Eli and Chen, Jonathan P. and Jankowiak, Martin and Obermeyer, Fritz and Pradhan, Neeraj and Karaletsos, Theofanis and Singh, Rohit and Szerlip, Paul and Horsfall, Paul and Goodman, Noah D.},
  year = {2018},
  month = oct,
  abstract = {Pyro is a probabilistic programming language built on Python as a platform for developing advanced probabilistic models in AI research. To scale to large datasets and high-dimensional models, Pyro uses stochastic variational inference algorithms and probability distributions built on top of PyTorch, a modern GPU-accelerated deep learning framework. To accommodate complex or model-specific algorithmic behavior, Pyro leverages Poutine, a library of composable building blocks for modifying the behavior of probabilistic programs.},
  archivePrefix = {arXiv},
  eprint = {1810.09538},
  eprinttype = {arxiv},
  file = {/Users/janwillem/Cloud/Zotero/storage/HFLITMCG/Bingham et al. - 2018 - Pyro Deep Universal Probabilistic Programming.pdf;/Users/janwillem/Cloud/Zotero/storage/U5AHNPCP/1810.html},
  journal = {arXiv:1810.09538 [cs, stat]},
  keywords = {Computer Science - Machine Learning,Computer Science - Programming Languages,Statistics - Machine Learning},
  primaryClass = {cs, stat}
}

@article{obermeyer2019functional,
  author = {Obermeyer, Fritz and Bingham, Eli and Jankowiak, Martin and
            Phan, Du and Chen, Jonathan P},
  title = {{Functional Tensors for Probabilistic Programming}},
  journal = {arXiv preprint arXiv:1910.10775},
  year = {2019}
}

@inproceedings{tolpin2016design,
  title = {Design and {{Implementation}} of {{Probabilistic Programming Language Anglican}}},
  booktitle = {Proceedings of the 28th {{Symposium}} on the {{Implementation}} and {{Application}} of {{Functional Programming Languages}}},
  author = {Tolpin, David and {van de Meent}, Jan-Willem and Yang, Hongseok and Wood, Frank},
  year = {2016},
  pages = {6:1-6:12},
  publisher = {{ACM}},
  address = {{Leuven, Belgium}},
  doi = {10/ghxhzn},
  acmid = {3064910},
  articleno = {6},
  isbn = {978-1-4503-4767-9},
  numpages = {12},
  series = {{{IFL}} 2016}
}

@article{salvatier2016probabilistic,
  title={Probabilistic programming in Python using PyMC3},
  author={Salvatier, John and Wiecki, Thomas V and Fonnesbeck, Christopher},
  journal={PeerJ Computer Science},
  volume={2},
  pages={e55},
  year={2016},
  publisher={PeerJ Inc.}
}

@article{holtzen2020scaling,
  title = {Scaling Exact Inference for Discrete Probabilistic Programs},
  author = {Holtzen, Steven and {Van den Broeck}, Guy and Millstein, Todd},
  year = {2020},
  month = nov,
  volume = {4},
  pages = {140:1--140:31},
  doi = {10.1145/3428208},
  abstract = {Probabilistic programming languages (PPLs) are an expressive means of representing and reasoning about probabilistic models. The computational challenge of probabilistic inference remains the primary roadblock for applying PPLs in practice. Inference is fundamentally hard, so there is no one-size-fits all solution. In this work, we target scalable inference for an important class of probabilistic programs: those whose probability distributions are discrete. Discrete distributions are common in many fields, including text analysis, network verification, artificial intelligence, and graph analysis, but they prove to be challenging for existing PPLs. We develop a domain-specific probabilistic programming language called Dice that features a new approach to exact discrete probabilistic program inference. Dice exploits program structure in order to factorize inference, enabling us to perform exact inference on probabilistic programs with hundreds of thousands of random variables. Our key technical contribution is a new reduction from discrete probabilistic programs to weighted model counting (WMC). This reduction separates the structure of the distribution from its parameters, enabling logical reasoning tools to exploit that structure for probabilistic inference. We (1) show how to compositionally reduce Dice inference to WMC, (2) prove this compilation correct with respect to a denotational semantics, (3) empirically demonstrate the performance benefits over prior approaches, and (4) analyze the types of structure that allow Dice to scale to large probabilistic programs.},
  file = {/Users/janwillem/Cloud/Zotero/storage/8EULPHCQ/Holtzen et al. - 2020 - Scaling exact inference for discrete probabilistic.pdf},
  journal = {Proceedings of the ACM on Programming Languages},
  keywords = {Probabilistic programming},
  number = {OOPSLA}
}

@inproceedings{cusumano-towner2019gen,
  title = {Gen: A General-Purpose Probabilistic Programming System with Programmable Inference},
  shorttitle = {Gen},
  booktitle = {Proceedings of the 40th {{ACM SIGPLAN Conference}} on {{Programming Language Design}} and {{Implementation}}},
  author = {{Cusumano-Towner}, Marco F. and Saad, Feras A. and Lew, Alexander K. and Mansinghka, Vikash K.},
  year = {2019},
  month = jun,
  pages = {221--236},
  publisher = {{Association for Computing Machinery}},
  address = {{New York, NY, USA}},
  doi = {10.1145/3314221.3314642},
  abstract = {Although probabilistic programming is widely used for some restricted classes of statistical models, existing systems lack the flexibility and efficiency needed for practical use with more challenging models arising in fields like computer vision and robotics. This paper introduces Gen, a general-purpose probabilistic programming system that achieves modeling flexibility and inference efficiency via several novel language constructs: (i) the generative function interface for encapsulating probabilistic models; (ii) interoperable modeling languages that strike different flexibility/efficiency trade-offs; (iii) combinators that exploit common patterns of conditional independence; and (iv) an inference library that empowers users to implement efficient inference algorithms at a high level of abstraction. We show that Gen outperforms state-of-the-art probabilistic programming systems, sometimes by multiple orders of magnitude, on diverse problems including object tracking, estimating 3D body pose from a depth image, and inferring the structure of a time series.},
  file = {/Users/janwillem/Cloud/Zotero/storage/4ISWAU5R/Cusumano-Towner et al. - 2019 - Gen a general-purpose probabilistic programming s.pdf},
  isbn = {978-1-4503-6712-7},
  keywords = {Markov chain Monte Carlo,Probabilistic programming,sequential Monte Carlo,variational inference},
  series = {{{PLDI}} 2019}
}

@article{deraedt2007problog,
  title = {{{ProbLog}}: {{A}} Probabilistic Prolog and Its Application in Link Discovery},
  author = {De Raedt, Luc and Kimmig, Angelika and Toivonen, Hannu},
  year = {2007},
  pages = {2468--2473},
  abstract = {We introduce ProbLog, a probabilistic extension of Prolog. A ProbLog program defines a distribution over logic programs by specifying for each clause the probability that it belongs to a randomly sampled program, and these probabilities are mutually independent. The semantics of ProbLog is then defined by the success probability of a query, which corresponds to the probability that the query succeeds in a randomly sampled program. The key contribution of this paper is the introduction of an effective solver for computing success probabilities. It essentially combines SLD-resolution with methods for computing the probability of Boolean formulae. Our implementation further employs an approximation algorithm that combines iterative deepening with binary decision diagrams. We report on experiments in the context of discovering links in real biological networks, a demonstration of the practical usefulness of the approach.},
  file = {/Users/janwillem/Cloud/Zotero/storage/72CV4T3R/De Raedt - 2007 - ProbLog.pdf},
  journal = {IJCAI International Joint Conference on Artificial Intelligence},
  keywords = {inductive logic programming,intelligent databases,probabilistic reasoning,theorem proving}
}

@inproceedings{baydin-2019-etalumis,
  title = {Etalumis: Bringing Probabilistic Programming to Scientific Simulators at Scale},
  author = {Baydin, Atılım Güneş and Shao, Lei and Bhimji, Wahid and Heinrich, Lukas and Meadows, Lawrence F. and Liu, Jialin and Munk, Andreas and Naderiparizi, Saeid and Gram-Hansen, Bradley and Louppe, Gilles and Ma, Mingfei and Zhao, Xiaohui and Torr, Philip and Lee, Victor and Cranmer, Kyle and Prabhat and Wood, Frank},
  booktitle = {Proceedings of the International Conference for High Performance Computing, Networking, Storage, and Analysis (SC19), November 17--22, 2019},
  year = {2019}
}

@inproceedings{pfeffer2018scruff,
  title = {Scruff: {{A Deep Probabilistic Cognitive Architecture}} for {{Predictive Processing}}},
  shorttitle = {Scruff},
  booktitle = {Biologically {{Inspired Cognitive Architectures Meeting}}},
  author = {Pfeffer, Avi and Lynn, Spencer K.},
  year = {2018},
  pages = {245--259},
  publisher = {{Springer}},
  file = {/Users/janwillem/Cloud/Zotero/storage/JY4M2VCK/Pfeffer and Lynn - 2018 - Scruff A Deep Probabilistic Cognitive Architectur.pdf}
}

@techreport{pfeffer2009figaro,
  title = {Figaro: {{An}} Object-Oriented Probabilistic Programming Language},
  author = {Pfeffer, Avi},
  year = {2009},
  pages = {1--9},
  abstract = {We introduce an object - oriented paradigm for probabilistic programming , embodied in the Figaro language . Models in Figaro are objects, and may have properties such as conditions, constraints and relationships to other objects. Figaro model classes are ...},
  file = {/Users/janwillem/Cloud/Zotero/storage/BFQANSJA/Pfeffer - 2009 - Figaro.pdf},
  number = {9781577354260}
}

@article{ritchie2016deep,
  title = {Deep {{Amortized Inference}} for {{Probabilistic Programs}}},
  author = {Ritchie, Daniel and Horsfall, Paul and Goodman, Noah D.},
  year = {2016},
  month = oct,
  abstract = {Probabilistic programming languages (PPLs) are a powerful modeling tool, able to represent any computable probability distribution. Unfortunately, probabilistic program inference is often intractable, and existing PPLs mostly rely on expensive, approximate sampling-based methods. To alleviate this problem, one could try to learn from past inferences, so that future inferences run faster. This strategy is known as amortized inference; it has recently been applied to Bayesian networks [28, 22] and deep generative models [20, 15, 24]. This paper proposes a system for amortized inference in PPLs. In our system, amortization comes in the form of a parameterized guide program. Guide programs have similar structure to the original program, but can have richer data flow, including neural network components. These networks can be optimized so that the guide approximately samples from the posterior distribution defined by the original program. We present a flexible interface for defining guide programs and a stochastic gradient-based scheme for optimizing guide parameters, as well as some preliminary results on automatically deriving guide programs. We explore in detail the common machine learning pattern in which a `local' model is specified by `global' random values and used to generate independent observed data points; this gives rise to amortized local inference supporting global model learning.},
  annotation = {ZSCC: 0000060},
  archivePrefix = {arXiv},
  eprint = {1610.05735},
  eprinttype = {arxiv},
  file = {/Users/janwillem/Cloud/Zotero/storage/IJZ83M8V/Ritchie et al. - 2016 - Deep Amortized Inference for Probabilistic Program.pdf},
  journal = {arXiv:1610.05735 [cs, stat]},
  keywords = {⛔ No DOI found,Computer Science - Artificial Intelligence,Computer Science - Machine Learning,Statistics - Machine Learning},
  language = {en},
  primaryClass = {cs, stat}
}

@inproceedings{ranganath2014black,
  title = {Black {{Box Variational Inference}}},
  booktitle = {Artificial {{Intelligence}} and {{Statistics}}},
  author = {Ranganath, Rajesh and Gerrish, Sean and Blei, David},
  year = {2014},
  month = apr,
  pages = {814--822},
  abstract = {Variational inference has become a widely used method to approximate posteriors in complex latent variables models.  However, deriving a variational inference algorithm generally requires significa...},
  file = {/Users/janwillem/Cloud/Zotero/storage/BUUC2E7M/Ranganath - 2014 - Black Box Variational Inference.pdf},
  language = {en}
}

@article{wingate2013automated,
  title = {Automated Variational Inference in Probabilistic Programming},
  author = {Wingate, David and Weber, Theo},
  year = {2013},
  pages = {1--7},
  abstract = {We present a new algorithm for approximate inference in probabilistic programs, based on a stochastic gradient for variational programs. This method is efficient without restrictions on the probabilistic program; it is particularly practical for distributions which are not analytically tractable, including highly structured distributions that arise in probabilistic programs. We show how to automatically derive mean-field probabilistic programs and optimize them, and demonstrate that our perspective improves inference efficiency over other algorithms.},
  archivePrefix = {arXiv},
  eprint = {1301.1299},
  eprinttype = {arxiv},
  file = {/Users/janwillem/Cloud/Zotero/storage/MUH7K49J/Wingate - 2013 - Automated variational inference in probabilistic programming.pdf},
  journal = {arXiv preprint arXiv:1301.1299},
}

@article{caterini2018hamiltonian,
  title = {Hamiltonian {{Variational Auto}}-{{Encoder}}},
  author = {Caterini, Anthony L. and Doucet, Arnaud and Sejdinovic, Dino},
  year = {2018},
  month = may,
  abstract = {Variational Auto-Encoders (VAEs) have become very popular techniques to perform inference and learning in latent variable models as they allow us to leverage the rich representational power of neural networks to obtain flexible approximations of the posterior of latent variables as well as tight evidence lower bounds (ELBOs). Combined with stochastic variational inference, this provides a methodology scaling to large datasets. However, for this methodology to be practically efficient, it is necessary to obtain low-variance unbiased estimators of the ELBO and its gradients with respect to the parameters of interest. While the use of Markov chain Monte Carlo (MCMC) techniques such as Hamiltonian Monte Carlo (HMC) has been previously suggested to achieve this [23, 26], the proposed methods require specifying reverse kernels which have a large impact on performance. Additionally, the resulting unbiased estimator of the ELBO for most MCMC kernels is typically not amenable to the reparameterization trick. We show here how to optimally select reverse kernels in this setting and, by building upon Hamiltonian Importance Sampling (HIS) [17], we obtain a scheme that provides low-variance unbiased estimators of the ELBO and its gradients using the reparameterization trick. This allows us to develop a Hamiltonian Variational Auto-Encoder (HVAE). This method can be reinterpreted as a target-informed normalizing flow [20] which, within our context, only requires a few evaluations of the gradient of the sampled likelihood and trivial Jacobian calculations at each iteration.},
  archivePrefix = {arXiv},
  eprint = {1805.11328},
  eprinttype = {arxiv},
  file = {/Users/janwillem/Cloud/Zotero/storage/ICUPNKNJ/Caterini et al. - 2018 - Hamiltonian Variational Auto-Encoder.pdf},
  journal = {arXiv:1805.11328 [cs, stat]},
  keywords = {Computer Science - Learning,Statistics - Machine Learning},
  primaryClass = {cs, stat}
}

@article{chopin2002sequential,
  title = {A Sequential Particle Filter Method for Static Models},
  author = {Chopin, N.},
  year = {2002},
  month = aug,
  volume = {89},
  pages = {539--552},
  doi = {10.1093/biomet/89.3.539},
  file = {/Users/janwillem/Cloud/Zotero/storage/923GRZ68/Chopin - 2002 - A sequential particle filter method for static models.pdf},
  journal = {Biometrika},
  number = {3}
}

@inproceedings{wingate2011lightweight,
  title = {Lightweight Implementations of Probabilistic Programming Languages via Transformational Compilation},
  booktitle = {Proceedings of the 14th International Conference on {{Artificial Intelligence}} and {{Statistics}}},
  author = {Wingate, David and Stuhlmueller, Andreas and Goodman, Noah D},
  year = {2011},
  pages = {770\_778-770\_778},
  file = {/Users/janwillem/Zotero/storage/FUCJ7LS2/Wingate - 2011 - Lightweight implementations of probabilistic programming languages via.pdf}
}

@article{tucker2018doubly,
  archivePrefix = {arXiv},
  eprinttype = {arxiv},
  eprint = {1810.04152},
  primaryClass = {cs, stat},
  title = {Doubly {{Reparameterized Gradient Estimators}} for {{Monte Carlo Objectives}}},
  abstract = {Deep latent variable models have become a popular model choice due to the scalable learning algorithms introduced by (Kingma \& Welling, 2013; Rezende et al., 2014). These approaches maximize a variational lower bound on the intractable log likelihood of the observed data. Burda et al. (2015) introduced a multi-sample variational bound, IWAE, that is at least as tight as the standard variational lower bound and becomes increasingly tight as the number of samples increases. Counterintuitively, the typical inference network gradient estimator for the IWAE bound performs poorly as the number of samples increases (Rainforth et al., 2018; Le et al., 2018). Roeder et al. (2017) propose an improved gradient estimator, however, are unable to show it is unbiased. We show that it is in fact biased and that the bias can be estimated efficiently with a second application of the reparameterization trick. The doubly reparameterized gradient (DReG) estimator does not suffer as the number of samples increases, resolving the previously raised issues. The same idea can be used to improve many recently introduced training techniques for latent variable models. In particular, we show that this estimator reduces the variance of the IWAE gradient, the reweighted wake-sleep update (RWS) (Bornschein \& Bengio, 2014), and the jackknife variational inference (JVI) gradient (Nowozin, 2018). Finally, we show that this computationally efficient, unbiased drop-in gradient estimator translates to improved performance for all three objectives on several modeling tasks.},
  journal = {arXiv:1810.04152 [cs, stat]},
  author = {Tucker, George and Lawson, Dieterich and Gu, Shixiang and Maddison, Chris J.},
  month = oct,
  year = {2019},
  keywords = {Computer Science - Machine Learning,Statistics - Machine Learning},
  file = {/Users/janwillem/Zotero/storage/ACJZ37NY/Tucker et al. - 2018 - Doubly Reparameterized Gradient Estimators for Mon.pdf},
  booktitle={International Conference on Learning Representations}
}

@inproceedings{rainforth2018tighter,
  title = {Tighter {{Variational Bounds}} Are {{Not Necessarily Better}}},
  abstract = {We provide theoretical and empirical evidence that using tighter evidence lower bounds (ELBOs) can be detrimental to the process of learning an inference network by reducing the signal-to-noise rat...},
  language = {en},
  booktitle = {International {{Conference}} on {{Machine Learning}}},
  author = {Rainforth, Tom and Kosiorek, Adam and Le, Tuan Anh and Maddison, Chris and Igl, Maximilian and Wood, Frank and Teh, Yee Whye},
  month = jul,
  year = {2018},
  pages = {4277-4285},
  file = {/Users/janwillem/Zotero/storage/IJ49RVWK/Rainforth et al. - 2018 - Tighter Variational Bounds are Not Necessarily Bet.pdf}
}

@inproceedings{burda2016importance,
  archivePrefix = {arXiv},
  eprinttype = {arxiv},
  eprint = {1509.00519},
  title = {Importance {{Weighted Autoencoders}}},
  abstract = {The variational autoencoder (VAE; Kingma, Welling (2014)) is a recently proposed generative model pairing a top-down generative network with a bottom-up recognition network which approximates posterior inference. It typically makes strong assumptions about posterior inference, for instance that the posterior distribution is approximately factorial, and that its parameters can be approximated with nonlinear regression from the observations. As we show empirically, the VAE objective can lead to overly simplified representations which fail to use the network's entire modeling capacity. We present the importance weighted autoencoder (IWAE), a generative model with the same architecture as the VAE, but which uses a strictly tighter log-likelihood lower bound derived from importance weighting. In the IWAE, the recognition network uses multiple samples to approximate the posterior, giving it increased flexibility to model complex posteriors which do not fit the VAE modeling assumptions. We show empirically that IWAEs learn richer latent space representations than VAEs, leading to improved test log-likelihood on density estimation benchmarks.},
  booktitle = {International {{Conference}} on {{Representations}}},
  author = {Burda, Yuri and Grosse, Roger and Salakhutdinov, Ruslan},
  year = {2016},
  keywords = {Computer Science - Machine Learning,Statistics - Machine Learning},
  file = {/Users/janwillem/Zotero/storage/6X4GWDCZ/Burda et al. - 2015 - Importance Weighted Autoencoders.pdf},
  booktitle={International Conference on Learning Representations}
}

@inproceedings{ge2018turing,
  title = {Turing: {{A}} Language for Flexible Probabilistic Inference},
  volume = {84},
  shorttitle = {Turing},
  booktitle = {Proceedings of the {{Twenty}}-{{First International Conference}} on {{Artificial Intelligence}} and {{Statistics}} ({{AISTATS}}), {{Amos Storkey}} and {{Fernando Perez}}-{{Cruz}} ({{Eds}}.)},
  author = {Ge, Hong and Xu, Kai and Ghahramani, Zoubin},
  year = {2018},
  pages = {1682--1690}
}

@article{paige2014compilation,
  title = {A {{Compilation Target}} for {{Probabilistic Programming Languages}}},
  volume = {32},
  journal = {International Conference on Machine Learning (ICML)},
  author = {Paige, Brooks and Wood, Frank},
  month = mar,
  year = {2014},
  file = {/Users/janwillem/Zotero/storage/JS68IXUF/Paige - 2014 - A Compilation Target for Probabilistic Programming Languages.pdf}
}

@book{goodman2014design,
  title = {The {{Design}} and {{Implementation}} of {{Probabilistic Programming Languages}}},
  author = {Goodman, Noah D and Stuhlm\"uller, Andreas},
  year = {2014}
}

@inproceedings{wood2014new,
  title = {A New Approach to Probabilistic Programming Inference},
  abstract = {We introduce and demonstrate a new approach to inference in expressive probabilistic programming languages based on particle Markov chain Monte Carlo. Our approach is simple to implement and easy to parallelize. It applies to Turing-complete probabilistic programming languages and supports accurate inference in models that make use of complex control flow, including stochastic recursion. It also includes primitives from Bayesian nonparametric statistics. Our experiments show that this approach can be more efficient than previously introduced single-site Metropolis-Hastings methods.},
  booktitle = {Artificial {{Intelligence}} and {{Statistics}}},
  author = {Wood, Frank and {van de Meent}, Jan-Willem and Mansinghka, Vikash},
  year = {2014},
  pages = {1024-1032},
  file = {/Users/janwillem/Zotero/storage/55PQNS25/Wood - 2014 - A new approach to probabilistic programming inference.pdf}
}

@inproceedings{goodman2008church,
  title = {Church: A Language for Generative Models},
  booktitle = {Proc. 24th {{Conf}}. {{Uncertainty}} in {{Artificial Intelligence}} ({{UAI}})},
  author = {Goodman, Noah and Mansinghka, Vikash and Roy, Daniel M and Bonawitz, Keith and Tenenbaum, Joshua B},
  year = {2008},
  pages = {220-229}
}

@article{murray2013bayesian,
  archivePrefix = {arXiv},
  eprinttype = {arxiv},
  eprint = {1306.3277},
  primaryClass = {stat},
  title = {Bayesian {{State}}-{{Space Modelling}} on {{High}}-{{Performance Hardware Using LibBi}}},
  abstract = {LibBi is a software package for state-space modelling and Bayesian inference on modern computer hardware, including multi-core central processing units (CPUs), many-core graphics processing units (GPUs) and distributed-memory clusters of such devices. The software parses a domain-specific language for model specification, then optimises, generates, compiles and runs code for the given model, inference method and hardware platform. In presenting the software, this work serves as an introduction to state-space models and the specialised methods developed for Bayesian inference with them. The focus is on sequential Monte Carlo (SMC) methods such as the particle filter for state estimation, and the particle Markov chain Monte Carlo (PMCMC) and SMC\^2 methods for parameter estimation. All are well-suited to current computer hardware. Two examples are given and developed throughout, one a linear three-element windkessel model of the human arterial system, the other a nonlinear Lorenz '96 model. These are specified in the prescribed modelling language, and LibBi demonstrated by performing inference with them. Empirical results are presented, including a performance comparison of the software with different hardware configurations.},
  journal = {arXiv:1306.3277 [stat]},
  author = {Murray, Lawrence M.},
  month = jun,
  year = {2013},
  keywords = {Statistics - Computation},
  file = {/Users/janwillem/Zotero/storage/4SSFZH66/Murray - 2013 - Bayesian State-Space Modelling on High-Performance.pdf}
}

@inproceedings{plummer2003jags,
  title = {{{JAGS}}: {{A}} Program for Analysis of {{Bayesian}} Graphical Models Using {{Gibbs}} Sampling},
  volume = {124},
  shorttitle = {{{JAGS}}},
  booktitle = {Proceedings of the 3rd International Workshop on Distributed Statistical Computing},
  publisher = {{Vienna, Austria.}},
  author = {Plummer, Martyn},
  year = {2003},
  file = {/Users/janwillem/Zotero/storage/TV5NQJ8R/Plummer - 2003 - JAGS A program for analysis of Bayesian graphical.pdf}
}

@article{carpenter2017stan,
  title = {Stan : {{A Probabilistic Programming Language}}},
  volume = {76},
  issn = {1548-7660},
  shorttitle = {Stan},
  doi = {10.18637/jss.v076.i01},
  abstract = {The U.S. Department of Energy's Office of Scientific and Technical Information},
  language = {English},
  number = {1},
  journal = {Journal of Statistical Software},
  author = {Carpenter, Bob and Gelman, Andrew and Hoffman, Matthew D. and Lee, Daniel and Goodrich, Ben and Betancourt, Michael and Brubaker, Marcus and Guo, Jiqiang and Li, Peter and Riddell, Allen},
  month = jan,
  year = {2017},
  file = {/Users/janwillem/Zotero/storage/IGG4BH9F/Carpenter et al. - 2017 - Stan  A Probabilistic Programming Language.pdf}
}

@misc{spiegelhalter1995bugs,
  title = {{{BUGS}}: {{Bayesian}} Inference Using {{Gibbs}} Sampling, {{Version}} 0.50},
  howpublished = {MRC Biostatistics Unit, Cambridge},
  author = {Spiegelhalter, David J and Thomas, Andrew and Best, Nicky G and Gilks, Wally R},
  year = {1995}
}

@misc{minka2010infer,
  title = {Infer.{{NET}} 2.4, {{Microsoft Research Cambridge}}},
  author = {Minka, T and Winn, J and Guiver, J and Knowles, D},
  year = {2010}
}

@inproceedings{esmaeili2019structured,
  title={Structured neural topic models for reviews},
  author={Esmaeili, Babak and Huang, Hongyi and Wallace, Byron and {van de Meent}, Jan-Willem},
  booktitle={The 22nd International Conference on Artificial Intelligence and Statistics},
  pages={3429--3439},
  year={2019},
  organization={PMLR}
}

@misc{tolpin2018infergo,
  title = {Infergo: {{Go}} Programs That Learn},
  language = {en},
  howpublished = {http://infergo.org/},
  author = {Tolpin, David},
  year = {2018}
}

@incollection{huang2018improving,
  title = {Improving {{Explorability}} in {{Variational Inference}} with {{Annealed Variational Objectives}}},
  booktitle = {Advances in {{Neural Information Processing Systems}} 31},
  publisher = {{Curran Associates, Inc.}},
  author = {Huang, Chin-Wei and Tan, Shawn and Lacoste, Alexandre and Courville, Aaron C},
  editor = {Bengio, S. and Wallach, H. and Larochelle, H. and Grauman, K. and {Cesa-Bianchi}, N. and Garnett, R.},
  year = {2018},
  pages = {9701--9711},
  file = {/Users/janwillem/Zotero/storage/2SXIAW5Y/Huang et al. - 2018 - Improving Explorability in Variational Inference w.pdf}
}

@incollection{maddison2017filtering,
  title = {Filtering {{Variational Objectives}}},
  booktitle = {Advances in {{Neural Information Processing Systems}} 30},
  publisher = {{Curran Associates, Inc.}},
  author = {Maddison, Chris J and Lawson, John and Tucker, George and Heess, Nicolas and Norouzi, Mohammad and Mnih, Andriy and Doucet, Arnaud and Teh, Yee},
  editor = {Guyon, I. and Luxburg, U. V. and Bengio, S. and Wallach, H. and Fergus, R. and Vishwanathan, S. and Garnett, R.},
  year = {2017},
  pages = {6573--6583},
  file = {/Users/janwillem/Zotero/storage/NTI2KB8Q/Maddison - 2017 - Filtering Variational Objectives.pdf}
}

@inproceedings{naesseth2018variational,
  title = {Variational {{Sequential Monte Carlo}}},
  booktitle = {International {{Conference}} on {{Artificial Intelligence}} and {{Statistics}}},
  author = {Naesseth, Christian and Linderman, Scott and Ranganath, Rajesh and Blei, David},
  year = {2018},
  month = mar,
  pages = {968--977},
  publisher = {{PMLR}},
  issn = {2640-3498},
  language = {en}
}

@inproceedings{le2018auto-encoding,
  archivePrefix = {arXiv},
  eprinttype = {arxiv},
  eprint = {1705.10306},
  title = {Auto-{{Encoding Sequential Monte Carlo}}},
  abstract = {We build on auto-encoding sequential Monte Carlo (AESMC): a method for model and proposal learning based on maximizing the lower bound to the log marginal likelihood in a broad family of structured probabilistic models. Our approach relies on the efficiency of sequential Monte Carlo (SMC) for performing inference in structured probabilistic models and the flexibility of deep neural networks to model complex conditional probability distributions. We develop additional theoretical insights and introduce a new training procedure which improves both model and proposal learning. We demonstrate that our approach provides a fast, easy-to-implement and scalable means for simultaneous model learning and proposal adaptation in deep generative models.},
  booktitle = {International {{Conference}} on {{Learning Representations}}},
  author = {Le, Tuan Anh and Igl, Maximilian and Rainforth, Tom and Jin, Tom and Wood, Frank},
  year = {2018},
  keywords = {Statistics - Machine Learning},
  file = {/Users/janwillem/Zotero/storage/FDZEBMIF/Le et al. - 2017 - Auto-Encoding Sequential Monte Carlo.pdf}
}

@article{hinton1995wake-sleep,
  title = {The "Wake-Sleep" Algorithm for Unsupervised Neural Networks},
  volume = {268},
  copyright = {\textcopyright{} 1995},
  issn = {0036-8075, 1095-9203},
  doi = {10.1126/science.7761831},
  abstract = {An unsupervised learning algorithm for a multilayer network of stochastic neurons is described. Bottom-up "recognition" connections convert the input into representations in successive hidden layers, and top-down "generative" connections reconstruct the representation in one layer from the representation in the layer above. In the "wake" phase, neurons are driven by recognition connections, and generative connections are adapted to increase the probability that they would reconstruct the correct activity vector in the layer below. In the "sleep" phase, neurons are driven by generative connections, and recognition connections are adapted to increase the probability that they would produce the correct activity vector in the layer above.},
  language = {en},
  number = {5214},
  journal = {Science},
  author = {Hinton, G. E. and Dayan, P. and Frey, B. J. and Neal, R. M.},
  month = may,
  year = {1995},
  pages = {1158-1161},
  pmid = {7761831}
}

@inproceedings{kusner2017grammar,
  title={Grammar variational autoencoder},
  author={Kusner, Matt J and Paige, Brooks and Hern{\'a}ndez-Lobato, Jos{\'e} Miguel},
  booktitle={International Conference on Machine Learning},
  pages={1945--1954},
  year={2017},
  organization={PMLR}
}

@article{murray2018automated,
  title = {Automated Learning with a Probabilistic Programming Language: {{Birch}}},
  volume = {46},
  issn = {1367-5788},
  shorttitle = {Automated Learning with a Probabilistic Programming Language},
  doi = {10.1016/j.arcontrol.2018.10.013},
  abstract = {This work offers a broad perspective on probabilistic modeling and inference in light of recent advances in probabilistic programming, in which models are formally expressed in Turing-complete programming languages. We consider a typical workflow and how probabilistic programming languages can help to automate this workflow, especially in the matching of models with inference methods. We focus on two properties of a model that are critical in this matching: its structure\textemdash{}the conditional dependencies between random variables\textemdash{}and its form\textemdash{}the precise mathematical definition of those dependencies. While the structure and form of a probabilistic model are often fixed a priori, it is a curiosity of probabilistic programming that they need not be, and may instead vary according to random choices made during program execution. We introduce a formal description of models expressed as programs, and discuss some of the ways in which probabilistic programming languages can reveal the structure and form of these, in order to tailor inference methods. We demonstrate the ideas with a new probabilistic programming language called Birch, with a multiple object tracking example.},
  journal = {Annual Reviews in Control},
  author = {Murray, Lawrence M. and Sch\"on, Thomas B.},
  month = jan,
  year = {2018},
  keywords = {Machine learning,Monte Carlo,Multi object tracking,Probabilistic programming,System identification},
  pages = {29-43},
  file = {/Users/janwillem/Local/Zotero/storage/V84IW4U7/Murray - 2018 - Automated learning with a probabilistic programming language.pdf;/Users/janwillem/Local/Zotero/storage/CN9HMDAD/S1367578818301202.html}
}

@article{vandemeent2018introduction,
  archivePrefix = {arXiv},
  eprinttype = {arxiv},
  eprint = {1809.10756},
  primaryClass = {cs, stat},
  title = {An {{Introduction}} to {{Probabilistic Programming}}},
  abstract = {This document is designed to be a first-year graduate-level introduction to probabilistic programming. It not only provides a thorough background for anyone wishing to use a probabilistic programming system, but also introduces the techniques needed to design and build these systems. It is aimed at people who have an undergraduate-level understanding of either or, ideally, both probabilistic machine learning and programming languages. We start with a discussion of model-based reasoning and explain why conditioning as a foundational computation is central to the fields of probabilistic machine learning and artificial intelligence. We then introduce a simple first-order probabilistic programming language (PPL) whose programs define static-computation-graph, finite-variable-cardinality models. In the context of this restricted PPL we introduce fundamental inference algorithms and describe how they can be implemented in the context of models denoted by probabilistic programs. In the second part of this document, we introduce a higher-order probabilistic programming language, with a functionality analogous to that of established programming languages. This affords the opportunity to define models with dynamic computation graphs, at the cost of requiring inference methods that generate samples by repeatedly executing the program. Foundational inference algorithms for this kind of probabilistic programming language are explained in the context of an interface between program executions and an inference controller. This document closes with a chapter on advanced topics which we believe to be, at the time of writing, interesting directions for probabilistic programming research; directions that point towards a tight integration with deep neural network research and the development of systems for next-generation artificial intelligence applications.},
  journal = {arXiv:1809.10756 [cs, stat]},
  author = {{van de Meent}, Jan-Willem and Paige, Brooks and Yang, Hongseok and Wood, Frank},
  month = sep,
  year = {2018},
  keywords = {Statistics - Machine Learning,Computer Science - Artificial Intelligence,Computer Science - Programming Languages,Computer Science - Machine Learning},
  file = {/Users/janwillem/Zotero/storage/D3M245LQ/van de Meent - 2018 - An Introduction to Probabilistic Programming.pdf}
}

@inproceedings{rezende2014stochastic,
  address = {Bejing, China},
  series = {Proceedings of Machine Learning Research},
  title = {Stochastic {{Backpropagation}} and {{Approximate Inference}} in {{Deep Generative Models}}},
  volume = {32},
  abstract = {We marry ideas from deep neural networks and approximate Bayesian inference to derive a generalised class of deep, directed generative models, endowed with a new algorithm for scalable inference and learning. Our algorithm introduces a recognition model to represent an approximate posterior distribution and uses this for optimisation of a variational lower bound. We develop stochastic backpropagation \textendash{} rules for gradient backpropagation through stochastic variables \textendash{} and derive an algorithm that allows for joint optimisation of the parameters of both the generative and recognition models. We demonstrate on several real-world data sets that by using stochastic backpropagation and variational inference, we obtain models that are able to generate realistic samples of data, allow for accurate imputations of missing data, and provide a useful tool for high-dimensional data visualisation.},
  booktitle = {Proceedings of the 31st {{International Conference}} on {{Machine Learning}}},
  publisher = {{PMLR}},
  author = {Rezende, Danilo Jimenez and Mohamed, Shakir and Wierstra, Daan},
  editor = {Xing, Eric P. and Jebara, Tony},
  month = jun,
  year = {2014},
  pages = {1278-1286},
  file = {/Users/janwillem/Zotero/storage/FDD9NQIT/Rezende - 2014 - Stochastic Backpropagation and Approximate Inference in Deep Generative Models.pdf},
  number = {2}
}

@article{kingma2013auto-encoding,
  title = {Auto-Encoding Variational Bayes},
  journal = {International Conference on Learning Representations},
  author = {Kingma, Diederik P. and Welling, Max},
  year = {2013},
  file = {/Users/janwillem/Zotero/storage/STCIKV6S/Kingma - 2013 - Auto-encoding variational bayes.pdf;/Users/janwillem/Zotero/storage/2Q7PK8GU/1312.html}
}

@article{bornschein2015reweighted,
  archivePrefix = {arXiv},
  eprinttype = {arxiv},
  eprint = {1406.2751},
  title = {Reweighted {{Wake}}-{{Sleep}}},
  abstract = {Training deep directed graphical models with many hidden variables and performing inference remains a major challenge. Helmholtz machines and deep belief networks are such models, and the wake-sleep algorithm has been proposed to train them. The wake-sleep algorithm relies on training not just the directed generative model but also a conditional generative model (the inference network) that runs backward from visible to latent, estimating the posterior distribution of latent given visible. We propose a novel interpretation of the wake-sleep algorithm which suggests that better estimators of the gradient can be obtained by sampling latent variables multiple times from the inference network. This view is based on importance sampling as an estimator of the likelihood, with the approximate inference network as a proposal distribution. This interpretation is confirmed experimentally, showing that better likelihood can be achieved with this reweighted wake-sleep procedure. Based on this interpretation, we propose that a sigmoidal belief network is not sufficiently powerful for the layers of the inference network in order to recover a good estimator of the posterior distribution of latent variables. Our experiments show that using a more powerful layer model, such as NADE, yields substantially better generative models.},
  journal = {International Conference on Learning Representations},
  author = {Bornschein, J\"org and Bengio, Yoshua},
  year = {2015},
  keywords = {Computer Science - Machine Learning},
  file = {/Users/janwillem/Zotero/storage/GXPHP3SL/Bornschein - 2015 - Reweighted Wake-Sleep.pdf}
}

@inproceedings{masrani2019thermodynamic,
  title = {The Thermodynamic Variational Objective},
  booktitle = {Advances in Neural Information Processing Systems},
  author = {Masrani, Vaden and Le, Tuan Anh and Wood, Frank},
  editor = {Wallach, H. and Larochelle, H. and Beygelzimer, A. and {dAlch{\'e}-Buc}, F. and Fox, E. and Garnett, R.},
  year = {2019},
  volume = {32},
  publisher = {{Curran Associates, Inc.}},
  file = {/Users/janwillem/Cloud/Zotero/storage/DIYHHXF2/Masrani et al. - The Thermodynamic Variational Objective.pdf},
  keywords = {⛔ No DOI found}
}

@incollection{schulman2015gradient,
  title = {Gradient {{Estimation Using Stochastic Computation Graphs}}},
  booktitle = {Advances in {{Neural Information Processing Systems}} 28},
  publisher = {{Curran Associates, Inc.}},
  author = {Schulman, John and Heess, Nicolas and Weber, Theophane and Abbeel, Pieter},
  editor = {Cortes, C. and Lawrence, N. D. and Lee, D. D. and Sugiyama, M. and Garnett, R.},
  year = {2015},
  pages = {3528--3536},
  file = {/Users/janwillem/Zotero/storage/TASVE66Z/Schulman - 2015 - Gradient Estimation Using Stochastic Computation Graphs.pdf}
}

@misc{baydin2018pyprob,
  title = {Pyprob/Ppx},
  author = {Baydin, At{\i}l{\i}m G{\"u}ne{\c s} and Le, Tuan Anh and Heinrich, Lukas and {Gram-Hansen}, Bradley and {Schroeder de Witt}, Christian and Bhimji, Wahid and Cranmer, Kyle and Wood, Frank},
  year = {2018},
  abstract = {Probabilistic Programming eXecution protocol (PPX)},
  copyright = {BSD-2-Clause License         ,                 BSD-2-Clause License},
  howpublished = {pyprob}
}

@article{mansinghka2014venture,
  title = {Venture: A Higher-Order Probabilistic Programming Platform with Programmable Inference},
  abstract = {We describe Venture, an interactive virtual machine for probabilistic programming that aims to be sufficiently expressive, extensible, and efficient for general-purpose use. Like Church, probabilistic models and inference problems in Venture are specified via a Turing-complete, higher-order probabilistic language descended from Lisp. Unlike Church, Venture also provides a compositional language for custom inference strategies built out of scalable exact and approximate techniques. We also describe four key aspects of Venture's implementation that build on ideas from probabilistic graphical models. First, we describe the stochastic procedure interface (SPI) that specifies and encapsulates primitive random variables. The SPI supports custom control flow, higher-order probabilistic procedures, partially exchangeable sequences and ‘‘likelihood-free'' stochastic simulators. It also supports external models that do inference over latent variables hidden from Venture. Second, we describe probabilistic execution traces (PETs), which represent execution histories of Venture programs. PETs capture conditional dependencies, existential dependencies and exchangeable coupling. Third, we describe partitions of execution histories called scaffolds that factor global inference problems into coherent sub-problems. Finally, we describe a family of stochastic regeneration algorithms for efficiently modifying PET fragments contained within scaffolds. Stochastic regeneration linear runtime scaling in cases where many previous approaches scaled quadratically. We show how to use stochastic regeneration and the SPI to implement general-purpose inference strategies such as Metropolis-Hastings, Gibbs sampling, and blocked proposals based on particle Markov chain Monte Carlo and mean-field variational inference techniques.},
  journal = {arXiv},
  author = {Mansinghka, Vikash and Selsam, Daniel and Perov, Yura},
  month = mar,
  year = {2014},
  keywords = {carlo,monte carlo,variational inference,sequential monte carlo,particle markov chain monte,bayesian inference,bayesian networks,acknowledgements,and daniel roy,cameron freer and alexey,firoiu and alexey radul,for contributions to,markov chain,multiple venture implementations,probabilistic programming,radul for helpful,the authors thank vlad},
  pages = {78-78}
}

@article{zinkov2017composing,
  archivePrefix = {arXiv},
  eprinttype = {arxiv},
  eprint = {1603.01882},
  title = {Composing Inference Algorithms as Program Transformations},
  abstract = {Probabilistic inference procedures are usually coded painstakingly from scratch, for each target model and each inference algorithm. We reduce this effort by generating inference procedures from models automatically. We make this code generation modular by decomposing inference algorithms into reusable program-to-program transformations. These transformations perform exact inference as well as generate probabilistic programs that compute expectations, densities, and MCMC samples. The resulting inference procedures are about as accurate and fast as other probabilistic programming systems on real-world problems.},
  journal = {Uncertainty in Artificial Intelligence},
  author = {Zinkov, Robert and Shan, Chung-chieh},
  year = {2017},
  keywords = {Statistics - Machine Learning,Computer Science - Artificial Intelligence,Statistics - Computation,Statistics - Methodology},
  file = {/Users/janwillem/Zotero/storage/YXG7G6ZB/Zinkov - 2016 - Composing inference algorithms as program transformations.pdf}
}

@inproceedings{narayanan2016probabilistic,
  title = {Probabilistic Inference by Program Transformation in {{Hakaru}} (System Description)},
  booktitle = {International {{Symposium}} on {{Functional}} and {{Logic Programming}}},
  publisher = {{Springer}},
  author = {Narayanan, Praveen and Carette, Jacques and Romano, Wren and Shan, Chung-chieh and Zinkov, Robert},
  year = {2016},
  pages = {62--79},
  file = {/Users/janwillem/Zotero/storage/D4I3BFW6/978-3-319-29604-3_5.html}
}

@article{scibior2017denotational,
  title = {Denotational {{Validation}} of {{Higher}}-Order {{Bayesian Inference}}},
  volume = {2},
  issn = {2475-1421},
  doi = {10.1145/3158148},
  abstract = {We present a modular semantic account of Bayesian inference algorithms for probabilistic programming languages, as used in data science and machine learning. Sophisticated inference algorithms are often explained in terms of composition of smaller parts. However, neither their theoretical justification nor their implementation reflects this modularity. We show how to conceptualise and analyse such inference algorithms as manipulating intermediate representations of probabilistic programs using higher-order functions and inductive types, and their denotational semantics.   Semantic accounts of continuous distributions use measurable spaces. However, our use of higher-order functions presents a substantial technical difficulty: it is impossible to define a measurable space structure over the collection of measurable functions between arbitrary measurable spaces that is compatible with standard operations on those functions, such as function application. We overcome this difficulty using quasi-Borel spaces, a recently proposed mathematical structure that supports both function spaces and continuous distributions.   We define a class of semantic structures for representing probabilistic programs, and semantic validity criteria for transformations of these representations in terms of distribution preservation. We develop a collection of building blocks for composing representations. We use these building blocks to validate common inference algorithms such as Sequential Monte Carlo and Markov Chain Monte Carlo. To emphasize the connection between the semantic manipulation and its traditional measure theoretic origins, we use Kock's synthetic measure theory. We demonstrate its usefulness by proving a quasi-Borel counterpart to the Metropolis-Hastings-Green theorem.},
  number = {POPL},
  journal = {Proc. ACM Program. Lang.},
  author = {\'Scibior, Adam and Kammar, Ohad and V\'ak\'ar, Matthijs and Staton, Sam and Yang, Hongseok and Cai, Yufei and Ostermann, Klaus and Moss, Sean K. and Heunen, Chris and Ghahramani, Zoubin},
  month = dec,
  year = {2017},
  keywords = {applied category theory,Bayesian inference,commutative monads,initial algebra semantics,Kock integration,quasi-Borel spaces,sigma-monoids,synthetic measure theory},
  pages = {60:1--60:29},
  file = {/Users/janwillem/Zotero/storage/WRC7DF5I/Ścibior - 2017 - Denotational Validation of Higher-order Bayesian Inference.pdf}
}

@article{neal2001annealed,
  title = {Annealed Importance Sampling},
  author = {Neal, Radford M.},
  year = {2001},
  month = apr,
  volume = {11},
  pages = {125--139},
  issn = {1573-1375},
  doi = {10.1023/A:1008923215028},
  abstract = {Simulated annealing\textemdash moving from a tractable distribution to a distribution of interest via a sequence of intermediate distributions\textemdash has traditionally been used as an inexact method of handling isolated modes in Markov chain samplers. Here, it is shown how one can use the Markov chain transitions for such an annealing sequence to define an importance sampler. The Markov chain aspect allows this method to perform acceptably even for high-dimensional problems, where finding good importance sampling distributions would otherwise be very difficult, while the use of importance weights ensures that the estimates found converge to the correct values as the number of annealing runs increases. This annealed importance sampling procedure resembles the second half of the previously-studied tempered transitions, and can be seen as a generalization of a recently-proposed variant of sequential importance sampling. It is also related to thermodynamic integration methods for estimating ratios of normalizing constants. Annealed importance sampling is most attractive when isolated modes are present, or when estimates of normalizing constants are required, but it may also be more generally useful, since its independent sampling allows one to bypass some of the problems of assessing convergence and autocorrelation in Markov chain samplers.},
  journal = {Statistics and Computing},
  language = {en},
  number = {2}
}

@article{tran2016edward,
  archivePrefix = {arXiv},
  eprinttype = {arxiv},
  eprint = {1610.09787},
  primaryClass = {cs, stat},
  title = {Edward: {{A}} Library for Probabilistic Modeling, Inference, and Criticism},
  shorttitle = {Edward},
  abstract = {Probabilistic modeling is a powerful approach for analyzing empirical information. We describe Edward, a library for probabilistic modeling. Edward's design reflects an iterative process pioneered by George Box: build a model of a phenomenon, make inferences about the model given data, and criticize the model's fit to the data. Edward supports a broad class of probabilistic models, efficient algorithms for inference, and many techniques for model criticism. The library builds on top of TensorFlow to support distributed training and hardware such as GPUs. Edward enables the development of complex probabilistic models and their algorithms at a massive scale.},
  journal = {arXiv:1610.09787 [cs, stat]},
  author = {Tran, Dustin and Kucukelbir, Alp and Dieng, Adji B. and Rudolph, Maja and Liang, Dawen and Blei, David M.},
  month = oct,
  year = {2016},
  keywords = {Computer Science - Artificial Intelligence,Computer Science - Programming Languages,Statistics - Applications,Statistics - Computation,Statistics - Machine Learning},
  file = {/Users/janwillem/Zotero/storage/DQ2PPQDK/Tran - 2016 - Edward.pdf}
}

@inproceedings{tran2018simple,
  title={Simple, distributed, and accelerated probabilistic programming},
  author={Tran, Dustin and Hoffman, Matthew D and Moore, Dave and Suter, Christopher and Vasudevan, Srinivas and Radul, Alexey and Johnson, Matthew and Saurous, Rif A},
  booktitle={Proceedings of the 32nd International Conference on Neural Information Processing Systems},
  pages={7609--7620},
  year={2018}
}

@inproceedings{siddharth2017learning,
  title = {Learning Disentangled Representations with Semi-Supervised Deep Generative Models},
  author = {Siddharth, N. and Paige, Brooks and {van de Meent}, Jan-Willem and Desmaison, Alban and Goodman, Noah D. and Kohli, Pushmeet and Wood, Frank and Torr, Philip},
  booktitle = {Advances in Neural Information Processing Systems 30},
  editor = {Guyon, I. and Luxburg, U. V. and Bengio, S. and Wallach, H. and Fergus, R. and Vishwanathan, S. and Garnett, R.},
  pages = {5927--5937},
  year = {2017}
}

@inproceedings{naesseth2015nested,
  title = {Nested Sequential Monte Carlo Methods},
  booktitle = {International {{Conference}} on {{Machine Learning}}},
  author = {Naesseth, Christian and Lindsten, Fredrik and Schon, Thomas},
  year = {2015},
  pages = {1292--1301},
  file = {/Users/janwillem/Zotero/storage/8FMENE2H/Naesseth - 2015 - Nested sequential monte carlo methods.pdf;/Users/janwillem/Zotero/storage/DX3UCGPF/Naesseth et al. - 2015 - Nested sequential monte carlo methods.pdf}
}

@article{Scibior:2018:FPM:3243631.3236778,
 author = {\'{S}cibior, Adam and Kammar, Ohad and Ghahramani, Zoubin},
 title = {Functional Programming for Modular Bayesian Inference},
 journal = {Proc. ACM Program. Lang.},
 issue_date = {September 2018},
 volume = {2},
 number = {ICFP},
 month = jul,
 year = {2018},
 issn = {2475-1421},
 pages = {83:1--83:29},
 articleno = {83},
 numpages = {29},
 url = {http://doi.acm.org/10.1145/3236778},
 doi = {10.1145/3236778},
 acmid = {3236778},
 publisher = {ACM},
 address = {New York, NY, USA},
 keywords = {Anglican, Bayesian inference, Markov Chain Monte Carlo, Monte Carlo samplers, Sequential Monte Carlo, WebPPL, functional programming, higher-order functions, inductive types, machine learning, module systems, monad transformers, monads, probabilistic programming, type-classes},
}

@inproceedings{wu2020amortized,
  title={Amortized Population Gibbs Samplers with Neural Sufficient Statistics},
  author={Wu, Hao and Zimmermann, Heiko and Sennesh, Eli and Le, Tuan Anh and {van de Meent}, Jan-Willem},
  booktitle={International Conference on Machine Learning},
  pages={10421--10431},
  year={2020},
  organization={PMLR}
}

@inproceedings{vandemeent2016black,
  journal = {Proceedings of the 19th International Conference on Artificial Intelligence and Statistics},
  author = {{van de Meent}, Jan-Willem and Paige, Brooks and Tolpin, David and Wood, Frank},
  pages = {1195–1204},
  title = {{Black-Box Policy Search with Probabilistic Programs}},
  year = {2016}
  }

@article{naesseth_elements_2019,
	title = {Elements of {Sequential} {Monte} {Carlo}},
	abstract = {A core problem in statistics and probabilistic machine learning is to compute probability distributions and expectations. This is the fundamental problem of Bayesian statistics and machine learning, which frames all inference as expectations with respect to the posterior distribution. The key challenge is to approximate these intractable expectations. In this tutorial, we review sequential Monte Carlo (SMC), a random-sampling-based class of methods for approximate inference. First, we explain the basics of SMC, discuss practical issues, and review theoretical results. We then examine two of the main user design choices: the proposal distributions and the so called intermediate target distributions. We review recent results on how variational inference and amortization can be used to learn efficient proposals and target distributions. Next, we discuss the SMC estimate of the normalizing constant, how this can be used for pseudo-marginal inference and inference evaluation. Throughout the tutorial we illustrate the use of SMC on various models commonly used in machine learning, such as stochastic recurrent neural networks, probabilistic graphical models, and probabilistic programs.},
	urldate = {2019-12-16},
	journal = {arXiv:1903.04797 [cs, stat]},
	author = {Naesseth, Christian A. and Lindsten, Fredrik and Schön, Thomas B.},
	month = mar,
	year = {2019},
	note = {arXiv: 1903.04797},
	keywords = {Computer Science - Machine Learning, Statistics - Computation, Statistics - Machine Learning},
	file = {arXiv Fulltext PDF:/Users/heiko/.zotero/storage/USYQHN8N/Naesseth et al. - 2019 - Elements of Sequential Monte Carlo.pdf:application/pdf;arXiv.org Snapshot:/Users/heiko/.zotero/storage/KHUZ6XJW/1903.html:text/html}
}

@article{zimmermann2021nested,
  title={Nested Variational Inference},
  author={Zimmermann, Heiko and Wu, Hao and Esmaeili, Babak and Stites, Sam and {van de Meent}, Jan-Willem},
  journal={3rd Symposium on Advances in Approximate Bayesian Inference},
  year={2021}
}

@article{murray2016parallel,
  title={Parallel resampling in the particle filter},
  author={Murray, Lawrence M and Lee, Anthony and Jacob, Pierre E},
  journal={Journal of Computational and Graphical Statistics},
  volume={25},
  number={3},
  pages={789--805},
  year={2016},
  publisher={Taylor \& Francis}
}

@inproceedings{hoffman2017learning,
  title={Learning deep latent Gaussian models with Markov chain Monte Carlo},
  author={Hoffman, Matthew D},
  booktitle={International conference on machine learning},
  pages={1510--1519},
  year={2017},
  organization={PMLR}
}

@inproceedings{greff2019multi,
  title={Multi-object representation learning with iterative variational inference},
  author={Greff, Klaus and Kaufman, Rapha{\"e}l Lopez and Kabra, Rishabh and Watters, Nick and Burgess, Christopher and Zoran, Daniel and Matthey, Loic and Botvinick, Matthew and Lerchner, Alexander},
  booktitle={International Conference on Machine Learning},
  pages={2424--2433},
  year={2019},
  organization={PMLR}
}

@inproceedings{le2019revisiting,
  title = {Revisiting Reweighted Wake-Sleep for Models with Stochastic Control Flow},
  author = {Le, Tuan Anh and Kosiorek, Adam R. and Siddharth, N. and Teh, Yee Whye and Wood, Frank},
  booktitle = {Uncertainty in Artificial Intelligence},
  year = {2019},
  note = {Le and Kosiorek contributed equally.}
}

@inproceedings{wang2018meta,
  title={Meta-learning MCMC proposals},
  author={Wang, Tongzhou and Wu, Yi and Moore, Dave and Russell, Stuart J},
  booktitle={Advances in Neural Information Processing Systems},
  pages={4146--4156},
  year={2018}
}

@inproceedings{salimans2015markov,
  title={Markov chain monte carlo and variational inference: Bridging the gap},
  author={Salimans, Tim and Kingma, Diederik and Welling, Max},
  booktitle={International Conference on Machine Learning},
  pages={1218--1226},
  year={2015}
}

@article{li2017approximate,
  title={Approximate inference with amortised mcmc},
  author={Li, Yingzhen and Turner, Richard E and Liu, Qiang},
  journal={arXiv preprint arXiv:1702.08343},
  year={2017}
}

@inproceedings{Lew2020,
abstract = {- Mathematics of computing -> Probabilistic inference problems.Variational methods.Metropolis-Hastings algorithm.Sequential Monte Carlo methods.- Theory of computation -> Semantics and reasoning.Denotational semantics.- Software and its engineering -> Formal language definitions.},
author = {Lew, Alexander K. and Cusumano-Towner, Marco F. and Sherman, Benjamin and Carbin, Michael and Mansinghka, Vikash K.},
booktitle = {ACM Principles of Programming Languages},
doi = {10.1145/3371087},
file = {:home/eli/.local/share/data/Mendeley Ltd./Mendeley Desktop/Downloaded/Lew et al. - 2020 - Trace Types and Denotational Semantics for Sound Programmable Inference in Probabilistic Languages(4).pdf:pdf},
issn = {2475-1421},
keywords = {Probabilistic programming,programma,programmable inference,type systems},
mendeley-groups = {Programming Languages/Probabilistic programming languages},
number = {January},
pages = {1--31},
title = {{Trace Types and Denotational Semantics for Sound Programmable Inference in Probabilistic Languages}},
volume = {4},
year = {2020}
}

@InProceedings{Borgstrom2011,
author="Borgstr{\"o}m, Johannes
and Gordon, Andrew D.
and Greenberg, Michael
and Margetson, James
and Van Gael, Jurgen",
editor="Barthe, Gilles",
title="Measure Transformer Semantics for Bayesian Machine Learning",
booktitle="Programming Languages and Systems",
year="2011",
publisher="Springer Berlin Heidelberg",
address="Berlin, Heidelberg",
pages="77--96",
abstract="The Bayesian approach to machine learning amounts to inferring posterior distributions of random variables from a probabilistic model of how the variables are related (that is, a prior distribution) and a set of observations of variables. There is a trend in machine learning towards expressing Bayesian models as probabilistic programs. As a foundation for this kind of programming, we propose a core functional calculus with primitives for sampling prior distributions and observing variables. We define combinators for measure transformers, based on theorems in measure theory, and use these to give a rigorous semantics to our core calculus. The original features of our semantics include its support for discrete, continuous, and hybrid measures, and, in particular, for observations of zero-probability events. We compile our core language to a small imperative language that has a straightforward semantics via factor graphs, data structures that enable many efficient inference algorithms. We use an existing inference engine for efficient approximate inference of posterior marginal distributions, treating thousands of observations per second for large instances of realistic models.",
isbn="978-3-642-19718-5"
}

@inproceedings{Toronto2015,
abstract = {Many probabilistic programming languages allow programs to be run under constraints in order to carry out Bayesian inference. Running programs under constraints could enable other uses such as rare event simulation and probabilistic verification---except that all such probabilistic languages are necessarily limited because they are defined or implemented in terms of an impoverished theory of probability. Measure-theoretic probability provides a more general foundation, but its generality makes finding computational content difficult. We develop a measure-theoretic semantics for a first-order probabilistic language with recursion, which interprets programs as functions that compute preimages. Preimage functions are generally uncomputable, so we derive an abstract semantics. We implement the abstract semantics and use the implementation to carry out Bayesian inference, stochastic ray tracing (a rare event simulation), and probabilistic verification of floating-point error bounds.},
archivePrefix = {arXiv},
arxivId = {1412.4053},
author = {Toronto, Neil and McCarthy, Jay and {Van Horn}, David},
booktitle = {European Symposium on Programming Languages and Systems},
eprint = {1412.4053},
file = {:home/eli/.local/share/data/Mendeley Ltd./Mendeley Desktop/Downloaded/Toronto, McCarthy, Van Horn - 2014 - Running Probabilistic Programs Backwards.pdf:pdf},
keywords = {domain-specific languages,probability,semantics},
mendeley-groups = {Programming Languages/Probabilistic programming languages},
number = {1},
pages = {53--79},
title = {{Running Probabilistic Programs Backwards}},
url = {http://arxiv.org/abs/1412.4053},
volume = {3},
year = {2015}
}

@inproceedings{Clerc2017,
author = {Clerc, Florence and Danos, Vincent and Dahlqvist, Fredrik},
booktitle = {International Conference on Foundations of Software Science and Computation Structures (FoSSaCS)},
file = {:home/eli/Downloads/Dahlqvist_Pointless%20Learning_AAM.pdf:pdf},
keywords = {Bayesian learning,Category theory,Functional analysis},
mendeley-groups = {Programming Languages/Probabilistic programming languages},
number = {679127},
pages = {1--19},
title = {{Pointless Learning (long version)}},
year = {2017}
}

@article{Dahlqvist2018,
abstract = {This paper introduces a categorical framework to study the exact and approximate semantics of probabilistic programs. We construct a dagger symmetric monoidal category of Borel kernels where the dagger-structure is given by Bayesian inversion. We show functorial bridges between this category and categories of Banach lattices which formalize the move from kernel-based semantics to predicate transformer (backward) or state transformer (forward) semantics. These bridges are related by natural transformations, and we show in particular that the Radon-Nikodym and Riesz representation theorems - two pillars of probability theory - define natural transformations. With the mathematical infrastructure in place, we present a generic and endogenous approach to approximating kernels on standard Borel spaces which exploits the involutive structure of our category of kernels. The approximation can be formulated in several equivalent ways by using the functorial bridges and natural transformations described above. Finally, we show that for sensible discretization schemes, every Borel kernel can be approximated by kernels on finite spaces, and that these approximations converge for a natural choice of topology. We illustrate the theory by showing that our approximation scheme can be used in practice as an approximate Bayesian inference algorithm and as an approximation scheme for programs in the probabilistic network specification language ProbNetKAT.},
archivePrefix = {arXiv},
arxivId = {1803.02651},
author = {Dahlqvist, Fredrik and Silva, Alexandra and Danos, Vincent and Garnier, Ilias},
doi = {10.1016/j.entcs.2018.11.006},
eprint = {1803.02651},
file = {:home/eli/Downloads/1-s2.0-S1571066118300860-main.pdf:pdf},
issn = {15710661},
journal = {Electronic Notes in Theoretical Computer Science},
keywords = {Bayesian inference,Markov process,Probabilistic programming,approximation,probabilistic semantics},
mendeley-groups = {Programming Languages/Probabilistic programming languages,Mathematics/Category theory/Applied category theory},
pages = {91--119},
publisher = {Elsevier B.V.},
title = {{Borel Kernels and their Approximation, Categorically}},
url = {https://doi.org/10.1016/j.entcs.2018.11.006},
volume = {341},
year = {2018}
}

@phdthesis{Fong2013,
abstract = {In this dissertation we develop a new formal graphical framework for causal reasoning. Starting with a review of monoidal categories and their associated graphical languages, we then revisit probability theory from a categorical perspective and introduce Bayesian networks, an existing structure for describing causal relationships. Motivated by these, we propose a new algebraic structure, which we term a causal theory. These take the form of a symmetric monoidal category, with the objects representing variables and morphisms ways of deducing information about one variable from another. A major advantage of reasoning with these structures is that the resulting graphical representations of morphisms match well with intuitions for flows of information between these variables. These categories can then be modelled in other categories, providing concrete interpretations for the variables and morphisms. In particular, we shall see that models in the category of measurable spaces and stochastic maps provide a slight generalisation of Bayesian networks, and naturally form a category themselves. We conclude with a discussion of this category, classifying the morphisms and discussing some basic universal constructions. ERRATA: (i) Pages 41-42: Objects of a causal theory are words, not collections, in $V$, and we include swaps as generating morphisms, subject to the identities defining a symmetric monoidal category. (ii) Page 46: A causal model is a strong symmetric monoidal functor.},
archivePrefix = {arXiv},
arxivId = {1301.6201},
author = {Fong, Brendan},
eprint = {1301.6201},
file = {:home/eli/.local/share/data/Mendeley Ltd./Mendeley Desktop/Downloaded/Fong - 2013 - Causal Theories A Categorical Perspective on Bayesian Networks.pdf:pdf},
mendeley-groups = {Machine Learning/Probabilistic methods,Mathematics,Mathematics/Category theory/Applied category theory,Programming Languages/Probabilistic programming languages},
school = {University of Oxford},
title = {{Causal Theories: A Categorical Perspective on Bayesian Networks}},
type = {Master of Science in Mathematics and the Foundations of Computer Science},
url = {http://arxiv.org/abs/1301.6201},
year = {2013}
}

@inproceedings{Heunen2017,
abstract = {Higher-order probabilistic programming languages allow programmers to write sophisticated models in machine learning and statistics in a succinct and structured way, but step outside the standard measure-theoretic formalization of probability theory. Programs may use both higher-order functions and continuous distributions, or even define a probability distribution on functions. But standard probability theory does not handle higher-order functions well: the category of measurable spaces is not cartesian closed. Here we introduce quasi-Borel spaces. We show that these spaces: form a new formalization of probability theory replacing measurable spaces; form a cartesian closed category and so support higher-order functions; form a well-pointed category and so support good proof principles for equational reasoning; and support continuous probability distributions. We demonstrate the use of quasi-Borel spaces for higher-order functions and probability by: showing that a well-known construction of probability theory involving random functions gains a cleaner expression; and generalizing de Finetti's theorem, that is a crucial theorem in probability theory, to quasi-Borel spaces.},
archivePrefix = {arXiv},
arxivId = {1701.02547},
author = {Heunen, Chris and Kammar, Ohad and Staton, Sam and Yang, Hongseok},
booktitle = {Proceedings - Symposium on Logic in Computer Science},
doi = {10.1109/LICS.2017.8005137},
eprint = {1701.02547},
file = {:home/eli/.local/share/data/Mendeley Ltd./Mendeley Desktop/Downloaded/Heunen et al. - 2017 - A convenient category for higher-order probability theory.pdf:pdf},
isbn = {9781509030187},
issn = {10436871},
mendeley-groups = {Machine Learning/Probabilistic methods,Mathematics,Neuroscience,Programming Languages/Semantics,Programming Languages/Probabilistic programming languages,Mathematics/Category theory/Applied category theory},
title = {{A convenient category for higher-order probability theory}},
year = {2017}
}

@inproceedings{shan2017exact,
  title={Exact Bayesian inference by symbolic disintegration},
  author={Shan, Chung-chieh and Ramsey, Norman},
  booktitle={Proceedings of the 44th ACM SIGPLAN Symposium on Principles of Programming Languages},
  pages={130--144},
  year={2017}
}

@book{liu2008monte,
  title={Monte Carlo strategies in scientific computing},
  author={Liu, Jun S},
  year={2008},
  publisher={Springer Science \& Business Media}
}
\end{refsection}
\end{document}